\PassOptionsToPackage{numbers,sort&compress}{natbib}
\documentclass{article}

\usepackage[preprint]{neurips_2026}
\usepackage[utf8]{inputenc}
\usepackage[T1]{fontenc}
\usepackage{graphicx}
\usepackage{amsmath,amssymb,amsthm,amsfonts,bm}
\usepackage{physics}
\usepackage{booktabs}
\usepackage{microtype}
\usepackage{nicefrac}
\usepackage{hyperref}
\usepackage{url}
\hypersetup{hidelinks}
\usepackage[nameinlink,capitalise,noabbrev]{cleveref}
\usepackage{xcolor}
\usepackage{enumitem}
\usepackage{mathtools}
\usepackage{caption}
\usepackage{subcaption}
\usepackage{array}
\usepackage[ruled,vlined,linesnumbered]{algorithm2e}
\usepackage{tikz}

\usetikzlibrary{arrows.meta,positioning,calc,fit,backgrounds}
\newtheorem{theorem}{Theorem}[section]
\newtheorem{proposition}[theorem]{Proposition}
\newtheorem{lemma}[theorem]{Lemma}
\newtheorem{corollary}[theorem]{Corollary}

\theoremstyle{definition}
\newtheorem{definition}[theorem]{Definition}
\theoremstyle{remark}
\newtheorem{remark}[theorem]{Remark}

\crefname{theorem}{Theorem}{Theorems}
\Crefname{theorem}{Theorem}{Theorems}
\crefname{proposition}{Proposition}{Propositions}
\Crefname{proposition}{Proposition}{Propositions}
\crefname{lemma}{Lemma}{Lemmas}
\Crefname{lemma}{Lemma}{Lemmas}
\crefname{corollary}{Corollary}{Corollaries}
\Crefname{corollary}{Corollary}{Corollaries}
\crefname{definition}{Definition}{Definitions}
\Crefname{definition}{Definition}{Definitions}
\crefname{remark}{Remark}{Remarks}
\Crefname{remark}{Remark}{Remarks}
\crefname{assumption}{Assumption}{Assumptions}
\Crefname{assumption}{Assumption}{Assumptions}
\crefname{equation}{Eq.}{Eqs.}
\Crefname{equation}{Eq.}{Eqs.}
\crefname{table}{Table}{Tables}
\Crefname{table}{Table}{Tables}
\crefname{algocf}{Alg.}{Algs.}
\Crefname{algocf}{Algorithm}{Algorithms}
\crefname{appendix}{Appendix}{Appendices}
\Crefname{appendix}{Appendix}{Appendices}

\newcommand{\pa}{\operatorname{pa}}
\newcommand{\ch}{\operatorname{ch}}
\DeclareMathOperator{\dist}{dist}
\newcommand{\readout}{\operatorname{Readout}}

\newcolumntype{P}{r@{{\scriptsize${}\pm{}$}}>{\scriptsize}l}

\usepackage{amsmath,amsfonts,bm}

\def\eqref#1{equation~\ref{#1}}

\def\1{\bm{1}}

\def\rp{{\textnormal{p}}}

\DeclareMathAlphabet{\mathsfit}{\encodingdefault}{\sfdefault}{m}{sl}
\SetMathAlphabet{\mathsfit}{bold}{\encodingdefault}{\sfdefault}{bx}{n}

\newcommand{\R}{\mathbb{R}}

\title{SpatialPC: Deep Computation with Sublinear Local Credit Propagation}

\author{%
  Benjamin Prada\thanks{Equal contribution.}
  \qquad Shion Matsumoto\footnotemark[1]
  \qquad Ankur Arjun Mali\\\\
  Trustworthy Knowledge-Driven AI Lab\\
  Bellini College of Artificial Intelligence, Cybersecurity and Computing\\
  University of South Florida\\
  Tampa, FL 33620\\
  \texttt{\{prada, matsumoto, ankurarjunmali\}@usf.edu}
}

\begin{document}
\maketitle

\begin{abstract}
Local learning rules are parallel only up to the speed at which information about the loss can travel. On a chain of $N$ states, any rule with bounded communication range needs $\Theta(N)$ rounds before the loss can influence the earliest parameters, and this latency is usually attributed to depth. We show that it is a property of the communication geometry rather than of depth. For $N=m^d$ states placed at constant density in $d$ dimensions, a light-cone argument gives an $\Omega(N^{1/d})$ lower bound on the number of bounded-range rounds that any local algorithm needs before its gradient estimates can be sign-correct at every parameter. We construct a unit-range, degree-$2d$ directed acyclic graph that contains a Hamiltonian computation of length $N-1$ and in which every state lies within directed distance $d(m-1)=\Theta(N^{1/d})$ of the output, matching the lower bound. In the realizable-circuit model, depth and credit depth are therefore independent parameters: represented depth $\Theta(N)$ and causal credit latency $\Theta(N^{1/d})$ coexist, with a separation of order $N^{1-1/d}$. SpatialPC realizes this separation within predictive coding. Credit is propagated by adjoint prediction dynamics (APD), a local update whose $K$-round state on a settled acyclic graph is the truncated reverse-path sum $a^K=\sum_{k<K}(J^\top)^k b$, with error exactly $(J^\top)^K a^\star$. Graph geometry fixes when credit can first arrive at a state; the Jacobian-weighted path tail fixes how many further rounds accurate credit requires. A sparse two-dimensional Moore compiler retains the $N-1$ backbone with $\tfrac32N-2$ unit-range edges and has measured worst-node reachability $\sqrt N+2$ up to $N=65{,}536$, where a 65{,}535-edge backbone is reached in 258 local hops. With $T=2\sqrt N$ rounds, SpatialPC keeps finite-round task gradients aligned with settled gradients (cosine $0.93$ on MNIST and $0.97$ on Fashion-MNIST at $N=256$) and reaches $97.6\%$ and $88.1\%$ test accuracy after 20 epochs.
\end{abstract}

\section{Introduction}
\label{sec:introduction}

Backpropagation assigns credit with a global reverse pass \cite{Rumelhart1986}: the error at the output is carried back through every layer in sequence, using the transposed forward weights and a separate backward phase. On a general-purpose processor this is efficient. On a system whose computation is distributed over physically local units, whether cortical circuits \cite{Lillicrap2020} or neuromorphic and in-memory hardware, it is not. Such a system is an embodied learner in the sense of embodied cognition \cite{varela1991embodied,pfeifer2006body}: its computation occupies physical space, so the time for any signal, including a learning signal, to cross it is set by the geometry of the substrate rather than by an operation count, and a learning rule for such a system must respect that constraint. This has motivated local learning rules whose updates depend only on quantities available at a unit and its neighbors. Predictive coding (PC) \cite{millidge2022predictivecodingtheoreticalexperimental,salvatori2025survey}, equilibrium propagation \cite{scellier2017equilibrium}, forward-forward \cite{hinton2022forwardforwardalgorithmpreliminaryinvestigations}, local representation alignment or target propagation \cite{lee2015difference, ororbia2019biologically}, and random-feedback rules \cite{Lillicrap2020} are the main families. PC is the most developed among them: its dynamics are defined on arbitrary computation graphs \cite{millidge2020predictive,salvatori2022graphs}, they recover the backpropagation gradient at equilibrium \cite{salvatori2022reverse}, and they have been trained at the scale of 1000-layer residual networks \cite{seely2026pcalm}, convolutional networks on CIFAR and TinyImageNet \cite{ha2026metapcn}, and a VGG network on ImageNet \cite{kerjan2026imagenet}.

The obstacle that recurs in this work is depth. On an $L$-layer chain the loss is generated at the last layer and the first layer is $L-1$ local interactions away. In a synchronous model, at least $L-1$ communication rounds pass before the earliest parameters can receive any information about the loss; in continuous-time inference the same distance appears as slow, attenuated, or unstable error propagation \cite{goemaere2026epc,innocenti2025mupc,qi2026deeper}. The remedies proposed so far act on the dynamics: depth-aware parameterization \cite{innocenti2025mupc,innocenti2026limits}, corrected error balance and timing \cite{qi2026deeper}, direct error pathways to selected layers \cite{mohammadi2026hep}, primal--dual updates \cite{seely2026pcalm}, incremental schedules \cite{salvatori2024ipc}, and error-based formulations without signal decay \cite{goemaere2026epc}. All of them take the chain as given, and so all of them inherit its $\Theta(L)$ communication distance.

The chain conflates two quantities. The \emph{represented depth} of a computation graph is the length of its longest directed path, the number of sequential operations it can express. The \emph{reachability depth} is the largest number of edges separating any state from the output, the number of bounded-range rounds before every state can have heard from the loss. On a chain both equal $L-1$. The communication graph of a physical substrate, however, is not a chain: a two-dimensional fabric of $N$ units with bounded range and bounded degree has diameter of order $\sqrt N$, and a three-dimensional one of order $N^{1/3}$. Whether a long computation can be laid out on such a fabric so that its reachability depth matches the diameter while its represented depth stays linear has not been asked, and no bound has been established on the latency that locality itself permits.

This paper answers both questions. We fix a locality model in which $N=m^d$ units sit at constant density in $d$ dimensions, communicate within bounded range, and receive the target only at the output. A light-cone argument shows that no existing local learning algorithm, irrespective of how it computes, can deliver sign-correct gradient estimates to every parameter in fewer than $\Omega(N^{1/d})$ rounds after the target is revealed (\cref{thm:light-cone}). A snake-ordered orientation (Boustrophedon) of the unit grid contains a Hamiltonian computation of length $N-1$, uses only unit-length edges of degree at most $2d$, and places every state within directed distance $d(m-1)$ of the output (\cref{thm:spatial-compiler}). Together,
\begin{equation}
    D_{\mathrm{seq}}=N-1=\Theta(N),
    \qquad
    D_{\mathrm{reach}}=d\bigl(N^{1/d}-1\bigr)=\Theta\bigl(N^{1/d}\bigr),
    \label{eq:intro-scaling-law}
\end{equation}
so the ratio $D_{\mathrm{seq}}/D_{\mathrm{reach}}=\Theta(N^{1-1/d})$ is unbounded, and the exponent $1/d$ is the best that any local algorithm can achieve. In the language of realizable circuit complexity \cite{prada2025realizable}, circuits embedded in $d$-dimensional space under causality and bounded fan-in, depth and credit depth (reachability depth) are independent parameters (\cref{cor:optimal-exponent}).

\definecolor{spblue}{RGB}{38,86,145}
\definecolor{spred}{RGB}{168,55,55}
\definecolor{spgray}{RGB}{105,105,105}
\tikzset{
  >={Latex[length=1.7mm,width=1.1mm]},
  nodept/.style={circle,draw=black!70,fill=white,minimum size=3.7mm,inner sep=0pt,line width=.5pt},
  localedge/.style={draw=black!22,line width=.42pt},
  backbone/.style={->,draw=black!76,line width=.85pt},
  credit/.style={->,draw=spblue,line width=1.15pt},
  smalltxt/.style={font=\scriptsize,text=black!82,align=center},
}

\begin{figure}[t]
\centering
\resizebox{\linewidth}{!}{%
\begin{tikzpicture}[x=1cm,y=1cm]
\node[font=\scriptsize\bfseries,anchor=west] at (0.0,4.15) {(a) Chain};
\node[font=\small] (xin) at (0.05,2.70) {$x$};
\node[nodept] (c1) at (0.75,2.70) {};
\node[nodept] (c2) at (1.45,2.70) {};
\node[nodept] (c3) at (2.15,2.70) {};
\node at (2.82,2.70) {$\cdots$};
\node[nodept] (cN) at (3.48,2.70) {};
\node[font=\small] (yout) at (4.18,2.70) {$y$};
\draw[backbone] (xin)--(c1);
\draw[backbone] (c1)--(c2);
\draw[backbone] (c2)--(c3);
\draw[backbone] (c3)--(2.55,2.70);
\draw[backbone] (3.08,2.70)--(cN);
\draw[backbone] (cN)--(yout);
\draw[credit,bend left=23] (yout) to (cN);
\draw[credit,bend left=23] (cN) to (3.08,3.18);
\node[smalltxt] at (2.10,0.98) {$D_{\rm seq}=\Theta(N)$\\$D_{\rm reach}=\Theta(N)$};

\draw[black!15] (4.65,1.25)--(4.65,4.18);

\node[font=\scriptsize\bfseries,anchor=west] at (4.95,4.15) {(b) Spatial compilation};
\foreach \r in {0,...,3}{
  \foreach \c in {0,...,3}{
    \pgfmathsetmacro{\xx}{5.40+0.62*\c}
    \pgfmathsetmacro{\yy}{3.57-0.57*\r}
    \node[nodept] (g\r\c) at (\xx,\yy) {};
  }
}
\foreach \r in {0,...,3}{
  \foreach \c in {0,...,2}{
    \pgfmathtruncatemacro{\cp}{\c+1}
    \draw[localedge] (g\r\c)--(g\r\cp);
  }
}
\foreach \r in {0,...,2}{
  \pgfmathtruncatemacro{\rp}{\r+1}
  \foreach \c in {0,...,3}{
    \draw[localedge] (g\r\c)--(g\rp\c);
  }
}
\draw[backbone]
  (g00)--(g01)--(g02)--(g03)--(g13)--(g12)--(g11)--(g10)--
  (g20)--(g21)--(g22)--(g23)--(g33)--(g32)--(g31)--(g30);
\draw[credit] (g03)--(g13);
\draw[credit] (g13)--(g23);
\draw[credit] (g23)--(g33);
\draw[credit] (g33)--(g32);
\draw[credit] (g32)--(g31);
\draw[credit] (g31)--(g30);
\node[smalltxt] at (6.35,0.98) {$D_{\rm seq}=N-1$\\$D_{\rm reach}=2(\sqrt N-1)$};

\draw[black!15] (7.95,1.25)--(7.95,4.18);

\node[font=\scriptsize\bfseries,anchor=west] at (8.25,4.15) {(c) Finite-time APD};
\node[nodept,fill=spred!12] (o) at (11.95,2.78) {};
\node[font=\scriptsize,anchor=west] at (12.18,2.78) {loss};
\node[nodept,fill=spblue!8] (p1a) at (11.05,2.30) {};
\node[nodept,fill=spblue!8] (p1b) at (11.05,3.25) {};
\node[nodept] (p2a) at (10.10,1.95) {};
\node[nodept] (p2b) at (10.10,2.78) {};
\node[nodept] (p2c) at (10.10,3.60) {};
\node[nodept] (p3) at (9.15,2.78) {};
\draw[credit] (o)--(p1a);
\draw[credit] (o)--(p1b);
\draw[credit] (p1a)--(p2a);
\draw[credit] (p1a)--(p2b);
\draw[credit] (p1b)--(p2b);
\draw[credit] (p1b)--(p2c);
\draw[credit] (p2a)--(p3);
\draw[credit] (p2b)--(p3);
\draw[credit] (p2c)--(p3);
\node[font=\scriptsize,text=spred] at (11.95,2.35) {$k=0$};
\node[font=\scriptsize] at (11.05,1.98) {$k=1$};
\node[font=\scriptsize] at (10.10,1.62) {$k=2$};
\node[font=\scriptsize] at (9.15,2.25) {$k=3$};
\node[smalltxt,align=left,anchor=west] at (8.80,0.98)
  {$a^K=\sum_{k<K}(J^\top)^k b$\\reachability: graph geometry\\fidelity: weighted path tail};
\end{tikzpicture}%
}
\caption{\textbf{Represented depth and local credit latency are different resources.} (a) On a chain, represented depth and loss reachability are both $\Theta(N)$. (b) The snake-ordered grid of \cref{thm:spatial-compiler} ($d=2$) keeps an $N-1$ Hamiltonian computation (dark arrows) while every state lies within $2(\sqrt N-1)$ unit-range hops of the output; the blue route is a shortest credit path. (c) APD accumulates reverse paths by length $k$. Geometry fixes the first round at which a state can receive credit; the Jacobian-weighted tail $(J^\top)^Ka^\star$ fixes how many further rounds accurate credit requires.}
\label{fig:spatial_compilation}
\end{figure}

Reachability says when credit can arrive, not whether it is accurate. SpatialPC, the predictive-coding model that realizes the separation, compiles a computation onto a bounded-degree spatial fabric and propagates credit with adjoint prediction dynamics (APD), a local update in which each state predicts its own adjoint from the adjoints of its children and cancels the prediction error. On a settled graph the $K$-round APD state is the sum of all reverse paths of length less than $K$, and its error is exactly the settled adjoint pushed back through $K$ more edges, $a^\star-a^K=(J^\top)^Ka^\star$. The first term to reach a state is fixed by graph distance; the rounds needed after that are fixed by the decay of the Jacobian powers, which depends on the learned weights and the task. We call the latter the \emph{effective credit radius} $R_\epsilon(J,b)$; in the experiments, the feasibility weight and the round budget act on $R_\epsilon$ while $D_{\mathrm{reach}}$ stays fixed.

The implementation uses a sparse two-dimensional compiler built on a Moore traversal \cite{Moore1900}. It keeps the $N-1$ backbone with $\tfrac32N-2$ unit-range edges and has measured worst-node output distance $\sqrt N+2$ for every power of two from $N=16$ to $N=65{,}536$. With $T=2\sqrt N$ synchronous rounds on MNIST and Fashion-MNIST, finite-round task gradients at $N=256$ have cosine $0.93$ and $0.97$ with the settled gradients of the same network, and twenty epochs of training reach $97.64\%$ and $88.10\%$ test accuracy, matching a depth-128 backpropagation reference.

\paragraph{Contributions.}
\begin{enumerate}[leftmargin=*,itemsep=1pt,topsep=2pt]
    \item A tight $\Theta(N^{1/d})$ characterization of worst-case causal credit latency for realizable circuits in $d$ dimensions: a light-cone lower bound that holds for every local learning algorithm (\cref{thm:light-cone}), a unit-range compiler that attains it with depth $N-1$ (\cref{thm:spatial-compiler}), and the consequence that depth and credit depth are independent circuit parameters (\cref{cor:optimal-exponent}).
    \item An exact, edge-local gradient for the finite-round SpatialPC objective (\cref{prop:finite-spatialpc-apd}), together with a path-length decomposition of settled credit whose $K$-round error is $(J^\top)^Ka^\star$ (\cref{prop:finite-neumann}); this separates reachability from fidelity, controls the angle of each local update (\cref{cor:gradient-angle}), and has a forward counterpart in which the feasibility residual certifies the distance to the settled state (\cref{prop:forward-reverse}).
    \item A sparse compiler with a proved edge count of $\tfrac32N-2$ for any Hamiltonian cycle of the grid (\cref{prop:moore-edges}) and measured reachability $\sqrt N+2$ up to $N=65{,}536$.
    \item Experiments showing that settled-gradient alignment is maintained under $T=2\sqrt N$ as the backbone grows from 15 to 255 edges, that the feasibility weight raises gradient cosine from $0.35$ to $0.93$ at constant accuracy, and that fidelity keeps improving for round budgets well beyond $D_{\mathrm{reach}}$.
\end{enumerate}

\Cref{sec:related} places the work among local learning rules, realizable circuit complexity, parallel differentiation, and spatial layout. \Cref{sec:background} provides notation: predictive coding as a constrained problem, the adjoint equation and APD, and the two depths. \Cref{sec:spatialpc} defines realizable circuits, proves the separation and its optimality, and describes the Moore compiler. \Cref{sec:dynamics} specifies the finite-round forward dynamics of SpatialPC and its exact local gradient, and \cref{sec:finite-credit} analyzes the accuracy of finite-round credit. \Cref{sec:experiments} reports structural measurements to $N=65{,}536$ and task experiments to $N=256$. \Cref{sec:discussion} discusses the scope of the bounds, their reading in circuit-complexity terms, physical implementation, and the open problems they raise, and \cref{sec:conclusion} concludes. Proofs omitted from the main text, the numerical verification of the implementation, secondary experiments, and pseudocode are in the appendix.

\section{Related Work}
\label{sec:related}

\medskip\noindent\textbf{Local learning rules.}
Rules that avoid the global reverse pass differ in what they transmit but share the constraint that each update uses information available at a unit and its neighbors \cite{ororbia2024review}. Equilibrium propagation obtains gradients from the response of an energy-based network to a nudged output \cite{scellier2017equilibrium}; forward-forward trains each layer on a local goodness objective with no backward signal \cite{hinton2022forwardforwardalgorithmpreliminaryinvestigations}; random-feedback rules replace transposed weights by fixed random matrices but still propagate layer by layer \cite{Lillicrap2020}. In all of these the output signal crosses the layers one at a time, so the number of local rounds before the first layer can be influenced grows with depth. Synthetic gradients \cite{jaderberg2017dni} remove the dependence by predicting the gradient locally, at the cost of a learned approximation and of locality itself. The locality model of \cref{sec:locality-model} covers all of these rules, and \cref{thm:light-cone} applies to each of them.

\medskip\noindent\textbf{Predictive coding on graphs and at scale.}
Predictive-coding dynamics were defined on chains and extended to arbitrary graph topologies \cite{millidge2020predictive,salvatori2022graphs}, including bidirectional and recurrent architectures for vision \cite{pmlr-v80-wen18a}; at equilibrium they reproduce reverse-mode differentiation on the computation graph \cite{salvatori2022reverse}. Work on scaling has acted on the dynamics: incremental schedules \cite{salvatori2024ipc}, depth-aware parameterization \cite{innocenti2025mupc,innocenti2026limits}, corrected error balance and timing \cite{qi2026deeper}, direct output-error pathways \cite{mohammadi2026hep}, augmented-Lagrangian primal--dual updates \cite{seely2026pcalm}, error-based formulations \cite{goemaere2026epc}, stabilized convolutional models \cite{ha2026metapcn}, and equilibrium-propagation training on ImageNet \cite{kerjan2026imagenet}. These methods change how credit moves over a given graph. The present work changes the graph, and uses the same local stationarity condition as augmented-Lagrangian PC (\cref{app:apd-pcalm}), so it can be combined with any of them.

\medskip\noindent\textbf{Realizable circuit complexity.}
Prada and Mali \cite{prada2025realizable} define the classes $\mathbf{RC}_d(t)$ of circuits realizable in $d$-dimensional space under causality, bounded information flux, finite gate volume with constant fan-in, and local uniformity: a circuit that runs for time $t$ lies within a causal ball of radius $O(t)$, so its size is $O(t^d)$ and its width $O(t^{d-1})$. The classes form a strict hierarchy in $d$ with $\mathbf{RC}_\infty(\mathrm{polylog})=\mathbf{NC}$, and the volume constraint alone forces time $\Omega(n^{1/d})$ for an $n$-gate circuit. The classical antecedents are the bounded-wire-length models of VLSI complexity theory \cite{thompson1980vlsi} and Vit\'anyi's result that low-diameter networks cannot be realized with short wires \cite{vitanyi1988locality}. \Cref{def:realizable} is the single-circuit form of this model, keeping the causality and gate-quantization constraints and singling out depth and credit depth as the parameters of interest, and the geometric budget $T=2\sqrt N$ of \cref{sec:experiments} is the regime $N=O(T^2)$ in which an $N$-gate planar circuit saturates the $\mathbf{RC}_2$ volume bound. The proof of \cref{thm:light-cone} combines the causal-ball argument with an indistinguishability argument of the kind used in the LOCAL model of distributed computing \cite{linial1992locality,peleg2000distributed}; what is specific to learning is the quantity bounded, the sign of a gradient estimate, and the matching upper bound, which shows that the causal radius can be attained by a circuit that still contains a Hamiltonian computation.

\medskip\noindent\textbf{Parallel differentiation and implicit models.}
Reverse-mode differentiation can be organized as triangular solves, parallel scans, or parallel recurrences \cite{naumov2017parallel,wang2020bppsa,song2021accelerating}, which reduce span when long-range reductions or operator composition are available; a bounded-range substrate provides neither. Deep equilibrium models define their states as fixed points and differentiate through the fixed point implicitly \cite{bai2019deq}; SpatialPC also targets a fixed point of local dynamics but differentiates through a finite unrolling and penalizes the terminal residual (\cref{sec:relation-to-pc}).

\medskip\noindent\textbf{Space-filling curves.}
The Hilbert curve and Moore's closed variant of it \cite{Moore1900} are used for locality-preserving traversals in spatial indexing \cite{google-s2}; the Moore curve is generated here by an L-system \cite{Prusinkiewicz1996}. The orders of \cref{sec:upper-bound,sec:moore} serve a different purpose: not to keep consecutive items close, which any Hamiltonian path of the grid does, but to make the oriented unit-grid shortcuts give every point a short directed route to the output.

\section{Preliminaries}
\label{sec:background}

We write predictive coding as a constrained problem, derive the adjoint equation that APD solves, and define the two depths that the rest of the paper separates.

\subsection{Predictive coding as constrained optimization}
Let $G=(V,E)$ be a directed acyclic computation graph with a state $h_v\in\mathbb R^H$ at each node $v$, and write $\pa(v)$ and $\ch(v)$ for the parents and children of $v$. Every non-input state is predicted from its parents, and the prediction error is the local residual
\begin{equation}
    h_v=f_v\bigl(h_{\pa(v)};\theta_v\bigr),
    \qquad
    r_v(h,\theta)=h_v-f_v\bigl(h_{\pa(v)};\theta_v\bigr).
    \label{eq:dag-constraint}
\end{equation}
Stacking the node-wise predictions into a map $F_\theta(h;x)$ that also injects the input $x$ at the input node gives the global residual
\begin{equation}
    r(h,\theta)=h-F_\theta(h;x).
    \label{eq:global-residual}
\end{equation}
A feedforward evaluation of the graph is a state with $r(h,\theta)=0$, so supervised learning on $G$ is the constrained problem
\begin{equation}
    \min_{h,\theta}\ \ell(h_O,y)
    \qquad\text{s.t.}\qquad r(h,\theta)=0,
    \label{eq:constrained-pc}
\end{equation}
where $O\subseteq V$ is the set of output nodes and $\ell$ is the task loss. Predictive coding relaxes the constraints to a quadratic penalty,
\begin{equation}
    \mathcal F_{\mathrm{PC}}(h,\theta)
    =\ell(h_O,y)+\frac{\beta}{2}\|r(h,\theta)\|_2^2,
    \label{eq:pc-loss}
\end{equation}
and alternates local state inference with local parameter updates \cite{millidge2020predictive,salvatori2022graphs}.

\subsection{Adjoint prediction dynamics}
\label{sec:apd}
Attach a multiplier $a_v$ to each local constraint in \cref{eq:constrained-pc} and form the Lagrangian
\begin{equation}
    \mathcal L(h,a,\theta)
    =\ell(h_O,y)-a^\top\bigl(h-F_\theta(h;x)\bigr).
    \label{eq:pc-lagrangian}
\end{equation}
Let $J=\partial F_\theta(h;x)/\partial h$ be the state Jacobian and $b=\nabla_h\ell(h_O,y)$ the loss forcing, which is supported on $O$. Stationarity of $\mathcal L$ in $h$ is the adjoint equation
\begin{equation}
    a=b+J^\top a,
    \label{eq:settled-adjoint}
\end{equation}
which at node $v$ involves only its children:
\begin{equation}
    a_v=b_v+\sum_{w\in\ch(v)}J_{w\leftarrow v}^{\top}a_w,
    \qquad
    J_{w\leftarrow v}=\frac{\partial f_w}{\partial h_v}.
    \label{eq:local-adjoint}
\end{equation}
Adjoint prediction dynamics (APD) treats the right-hand side as a local prediction of the adjoint and cancels the prediction error,
\begin{equation}
    \widehat a_v
    =b_v+\sum_{w\in\ch(v)}J_{w\leftarrow v}^{\top}a_w,
    \qquad
    a_v\leftarrow a_v-(a_v-\widehat a_v)=\widehat a_v.
    \label{eq:apd-update}
\end{equation}
Applied synchronously to all nodes, \cref{eq:apd-update} is the Jacobi iteration $a^{k+1}=b+J^\top a^k$ for \cref{eq:settled-adjoint}. The parameter gradient is assembled from the same local quantities,
\begin{equation}
    \nabla_\theta\ell
    =\left(\frac{\partial F_\theta}{\partial\theta}\right)^{\top}a,
    \label{eq:parameter-gradient-general}
\end{equation}
so that for a linear edge $f_v\ni W_{uv}z_u$ the block $\nabla_{W_{uv}}\ell$ is the outer product of the postsynaptic adjoint $a_v$ and the presynaptic activity $z_u$, scaled by the edge gain. \Cref{eq:settled-adjoint} is also the state-stationarity condition of augmented-Lagrangian predictive coding \cite{seely2026pcalm}, with $a$ equal to minus the augmented multiplier (\cref{app:apd-pcalm}).

\subsection{Represented depth and reachability depth}
\label{sec:two-depths}
For a DAG $G$ let $|p|$ be the number of edges of a directed path $p$. The \emph{represented depth} is the length of the longest directed computation, and for an output set $O$ the \emph{reachability depth} is the largest shortest-path distance from a node to the output:
\begin{equation}
    D_{\mathrm{seq}}(G):=\max_{p\ \text{directed}} |p|,
    \qquad
    D_{\mathrm{reach}}(G,O):=\max_{v\in V}\ \min_{p:\,v\rightsquigarrow O}|p|.
    \label{eq:dseq}
\end{equation}
$D_{\mathrm{reach}}$ is a communication quantity: a message that originates at $O$ and moves one edge per round cannot have reached every node before $D_{\mathrm{reach}}$ rounds have elapsed. A chain on $N$ nodes has $D_{\mathrm{seq}}=D_{\mathrm{reach}}=N-1$. The next section shows that on a $d$-dimensional substrate the two can differ by a factor of order $N^{1-1/d}$, and that no local algorithm can do better.

\section{Spatial Compilation: Linear Depth with Sublinear Reachability}
\label{sec:spatialpc}

On a chain the two depths of \cref{sec:two-depths} coincide. We now place the computation on a $d$-dimensional substrate and show that they need not. \Cref{sec:locality-model} defines realizable circuits and the locality model; \cref{sec:upper-bound} constructs a realizable circuit with $D_{\mathrm{seq}}=N-1$ and $D_{\mathrm{reach}}=d(N^{1/d}-1)$; \cref{sec:lower-bound} proves that credit depth $\Omega(N^{1/d})$ is unavoidable for realizable circuits and that $\Omega(N^{1/d})$ rounds are necessary for every local algorithm; and \cref{sec:moore} describes the sparser compiler used in the experiments.

\subsection{Realizable circuits and the compilation problem}
\label{sec:locality-model}

\begin{definition}[Substrate]
\label{def:substrate}
A \emph{substrate} is a finite set $V$ of $N$ sites together with an embedding $\phi:V\to\mathbb R^d$; distances are $\dist(u,v):=\|\phi(u)-\phi(v)\|_1$. The substrate has \emph{density at most $C_d$} if every $\ell_1$-ball of radius $\varrho\ge1$ contains at most $C_d\varrho^d$ sites. The unit grid $\{0,\dots,m-1\}^d$ with one site per lattice point has density at most $3^d$.
\end{definition}

\begin{definition}[Realizable circuit]
\label{def:realizable}
A \emph{realizable circuit} of range $\rho$ and degree $\Delta$ on a substrate is a DAG $G=(V,E)$ on the sites with $\dist(u,v)\le\rho$ for every edge and $\deg(v)\le\Delta$ for every vertex. Its \emph{depth} is $D_{\mathrm{seq}}(G)$ and, for an output set $O$ reachable from every vertex, its \emph{credit depth} is the reachability depth $D_{\mathrm{reach}}(G,O)$ of \cref{sec:two-depths}. A family $\{G_N\}$ is \emph{realizable in dimension $d$} if its substrates have density at most a common constant $C_d$ and $\rho$, $\Delta$, and $|O|$ are bounded in $N$.
\end{definition}
This is the single-circuit form of the realizable circuit model of Prada and Mali \cite{prada2025realizable}, restricted to its causality and gate-quantization constraints (bounded wire length, bounded degree, constant density), with the depth of the circuit and the reverse eccentricity of its outputs singled out as the two parameters of interest; the bounded-wire-length models of VLSI theory \cite{thompson1980vlsi,vitanyi1988locality} are its classical antecedents. The compilation problem fixes the depth and asks for the credit depth.

\begin{definition}[Admissible graphs]
\label{def:admissible}
Let $P_N=(v_0,\dots,v_{N-1})$ be a directed computational backbone with topological order $\prec$, and let $\rho$ be a communication range and $\Delta$ a degree bound. The admissible graphs are the realizable circuits that contain the backbone and respect its order,
\begin{equation}
\begin{aligned}
\mathcal G(P_N,\phi;\rho,\Delta)=\bigl\{G=(V,E):\;&P_N\subseteq E,\ (u,v)\in E\Rightarrow u\prec v,\\
&\dist(u,v)\le\rho,\ \deg(v)\le\Delta\ \text{for all } v\bigr\}.
\end{aligned}
\label{eq:admissible-graphs}
\end{equation}
\end{definition}
Every admissible graph has depth $D_{\mathrm{seq}}(G)=N-1$, the largest possible on $N$ vertices. The \emph{spatial compilation problem} is to choose the embedding and the shortcut edges so as to minimize the credit depth,
\begin{equation}
    \min_{\phi}\ \min_{G\in\mathcal G(P_N,\phi;\rho,\Delta)}
    D_{\mathrm{reach}}(G,O),
    \label{eq:geometric-compile}
\end{equation}
subject to a prescribed output set $O$. The objective is the number of bounded-range rounds after which every state can have received information generated at the output.

\begin{definition}[Local learning algorithm]
\label{def:local-algorithm}
A learning algorithm on a substrate is \emph{$\rho$-local} if it proceeds in synchronous rounds in which the memory of each site is updated as a function of the memories of the sites within distance $\rho$ of it, itself included. The input $x$ and the parameters reside at their sites throughout, and the target $y$ is written into the memories of the output sites $O$ at round $0$ and nowhere else.
\end{definition}
Forward computation, credit propagation, and any auxiliary signaling are charged to the same rounds, and nothing is assumed about what the sites compute; the only restriction is on where their inputs come from. Both the compiler below and APD in \cref{sec:dynamics} are $\rho$-local with $\rho=1$.

\subsection{Upper bound: the snake-ordered grid}
\label{sec:upper-bound}

For $m\ge2$ and $d\ge1$ let $\pi_{m,d}$ be the \emph{reflected snake order} on $\{0,\dots,m-1\}^d$: for $d=1$, $\pi_{m,1}(i)=i$; for $d>1$, writing a point as $(i,v')$ with $i$ its first coordinate,
\begin{equation}
    \pi_{m,d}(i,v')=i\,m^{d-1}+
    \begin{cases}
        \pi_{m,d-1}(v'), & i\ \text{even},\\
        m^{d-1}-1-\pi_{m,d-1}(v'), & i\ \text{odd}.
    \end{cases}
    \label{eq:snake-order}
\end{equation}
The order visits the slabs $i=0,\dots,m-1$ in turn, traversing the $(d-1)$-dimensional snake forward in even slabs and backward in odd ones. Consecutive points are unit neighbors. Let $v_{\mathrm{in}}$ and $v_{\mathrm{out}}$ be the first and last points of the order; both are corners of the grid.

\begin{lemma}[Monotone edges]
\label{lem:monotone}
In the order $\pi=\pi_{m,d}$, every $v\neq v_{\mathrm{out}}$ has a unit neighbor $u$ with $\pi(u)>\pi(v)$ and $\|u-v_{\mathrm{out}}\|_1=\|v-v_{\mathrm{out}}\|_1-1$, and every $v\ne v_{\mathrm{in}}$ has a unit neighbor $u$ with $\pi(u)<\pi(v)$ and $\|u-v_{\mathrm{in}}\|_1=\|v-v_{\mathrm{in}}\|_1-1$.
\end{lemma}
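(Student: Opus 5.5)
The proof goes by induction on $d$, proving both halves of \cref{lem:monotone} together. The reason is that the reflection in \cref{eq:snake-order} swaps the roles of ``first'' and ``last'' inside odd slabs. We first identify the endpoints.

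\emph{Endpoints.} For $d=1$ they are $v_{\mathrm{in}}=0$ and $v_{\mathrm{out}}=m-1$. For $d>1$, slab $0$ is even and traversed forward, so $v_{\mathrm{in}}=(0,v'_{\mathrm{in}})$, where primes denote the endpoints of $\pi_{m,d-1}$. The last slab $i=m-1$ is traversed forward if $m-1$ is even and backward if it is odd. Hence $v_{\mathrm{out}}=(m-1,v'_{\mathrm{out}})$ in the first case and $v_{\mathrm{out}}=(m-1,v'_{\mathrm{in}})$ in the second.

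\emph{Base case.} For $d=1$ both statements are immediate. Take $u=v+1$ for $v\neq m-1$ and $u=v-1$ for $v\neq0$.

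\emph{Inductive step, forward statement.} Let $v=(i,v')\neq v_{\mathrm{out}}$.
\begin{itemize}
\item If $i<m-1$, take $u=(i+1,v')$. Since $\pi_{m,d}$ visits slab $i+1$ entirely after slab $i$, we have $\pi(u)>\pi(v)$. Since $v_{\mathrm{out}}$ has first coordinate $m-1>i$, only the first coordinate changes and the $\ell_1$ distance to $v_{\mathrm{out}}$ drops by exactly $1$.
\item If $i=m-1$, then $v'$ is not the slab's terminal point, and we move within the last slab, keeping $i$ fixed so only the $(d-1)$-dimensional distance matters.
\begin{itemize}
\item If $m-1$ is even, the slab order equals $\pi_{m,d-1}$ and $v_{\mathrm{out}}=(m-1,v'_{\mathrm{out}})$. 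The forward half of the inductive hypothesis applied to $v'\neq v'_{\mathrm{out}}$ gives the required neighbor.
\item If $m-1$ is odd, $\pi$ increases exactly when $\pi_{m,d-1}$ decreases, and $v_{\mathrm{out}}=(m-1,v'_{\mathrm{in}})$. Here $v\neq v_{\mathrm{out}}$ means $v'\neq v'_{\mathrm{in}}$. The backward half of the hypothesis gives $u'$ with $\pi_{m,d-1}(u')<\pi_{m,d-1}(v')$ and $u'$ one step closer to $v'_{\mathrm{in}}$. Then $u=(m-1,u')$ works.
\end{itemize}
\end{itemize}

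\emph{Inductive step, backward statement.} This is symmetric and slightly simpler.
\begin{itemize}
\item If $i>0$, take $u=(i-1,v')$. It lies in an earlier slab, so $\pi(u)<\pi(v)$, and it is one step closer to $v_{\mathrm{in}}$, whose first coordinate is $0$.
\item If $i=0$, slab $0$ is forward. Apply the backward hypothesis to $v'\neq v'_{\mathrm{in}}$.
\end{itemize}

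The main obstacle is the parity bookkeeping in the last slab. One must check carefully that reversing a slab maps ``later in $\pi_{m,d}$'' to ``earlier in $\pi_{m,d-1}$''. This follows from $\pi=(m-1)m^{d-1}+m^{d-1}-1-\pi_{m,d-1}(v')$ being decreasing in $\pi_{m,d-1}(v')$. One must also check that the exit corner of the slab is then $v'_{\mathrm{in}}$. This is exactly why the two statements have to be carried through the induction together rather than proved separately.
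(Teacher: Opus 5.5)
Your proposal is correct and matches the paper's proof essentially step for step. Both argue by a joint induction on $d$ of the forward and backward statements, crossing slabs via $(i\pm1,v')$ in the generic case. In the last slab, both split on the parity of $m-1$ and invoke the forward or backward inductive hypothesis, which is why the two halves must be carried together.
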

The two statements must be carried together through the induction on $d$, because an odd slab is traversed backward and its last point is the first point of the lower-dimensional snake; the proof is in \cref{app:proofs-compilation}.

\begin{theorem}[Snake-ordered grid compiler]
\label{thm:spatial-compiler}
Let $d\ge1$, $m\ge2$, $N=m^d$, and let $G_{m,d}$ be the DAG on $\{0,\dots,m-1\}^d$ obtained by orienting every unit-grid adjacency from its earlier to its later endpoint in $\pi_{m,d}$, with input $v_{\mathrm{in}}$ and output $O=\{v_{\mathrm{out}}\}$. Then:
\begin{enumerate}[label=(\roman*),leftmargin=*,itemsep=0pt,topsep=1pt]
    \item $D_{\mathrm{seq}}(G_{m,d})=N-1$;
    \item for every $v$, the directed distance from $v$ to $v_{\mathrm{out}}$ equals $\|v-v_{\mathrm{out}}\|_1$, and the directed distance from $v_{\mathrm{in}}$ to $v$ equals $\|v-v_{\mathrm{in}}\|_1$; consequently
    \begin{equation}
        D_{\mathrm{reach}}(G_{m,d},O)=d(m-1)=d\bigl(N^{1/d}-1\bigr),
        \label{eq:snake-reach}
    \end{equation}
    and the forward eccentricity of $v_{\mathrm{in}}$ is the same number;
    \item $|E|=d(m-1)m^{d-1}$, every edge has unit length, and every vertex has degree at most $2d$.
\end{enumerate}
\end{theorem}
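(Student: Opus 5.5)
The plan is to prove the three parts in the order (i), (iii), (ii), since (ii) is the only one that needs \cref{lem:monotone}. Throughout, $\pi=\pi_{m,d}$ is taken as given to be a bijection onto $\{0,\dots,N-1\}$ whose consecutive points are unit neighbors. This is immediate from \cref{eq:snake-order} by induction on $d$: within a slab, consecutive points are consecutive in the lower snake, hence unit neighbors. Across the step from slab $i$ to slab $i+1$, the last point of slab $i$ and the first point of slab $i+1$ share the same $v'$ (either both $\pi_{m,d-1}^{-1}(m^{d-1}-1)$ or both $\pi_{m,d-1}^{-1}(0)$), so they differ by one in the first coordinate only.

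For (i), orienting every unit adjacency from earlier to later in $\pi$ makes $\pi$ a topological order, so $G_{m,d}$ is acyclic. Any directed path strictly increases $\pi$, so it has at most $N-1$ edges. Conversely, the consecutive pairs of $\pi$ are unit adjacencies oriented forward, so they form a Hamiltonian directed path of length $N-1$.

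For (iii), the edges are exactly the unit-grid adjacencies, one orientation each. Counting the adjacencies along each of the $d$ axes gives $(m-1)m^{d-1}$ per axis, hence $|E|=d(m-1)m^{d-1}$. Each edge joins lattice points at $\ell_1$ distance $1$, and each point has at most $2d$ lattice neighbors, so every vertex has degree at most $2d$.

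For (ii), the lower bound is geometric. Every edge changes the $\ell_1$ distance to $v_{\mathrm{out}}$ by at most one, so any directed path from $v$ to $v_{\mathrm{out}}$ has at least $\|v-v_{\mathrm{out}}\|_1$ edges. For the upper bound, I would invoke the first half of \cref{lem:monotone}. From $v\ne v_{\mathrm{out}}$ it supplies a neighbor $u$ with $\pi(u)>\pi(v)$, so $(v,u)\in E$, and with $\|u-v_{\mathrm{out}}\|_1$ smaller by one. Iterating gives a directed path of exactly $\|v-v_{\mathrm{out}}\|_1$ edges; it terminates because the distance strictly decreases. The forward statement from $v_{\mathrm{in}}$ is symmetric: the second half of the lemma gives an in-edge $(u,v)$ whose tail is one step closer to $v_{\mathrm{in}}$, and walking backward to $v_{\mathrm{in}}$ and reversing yields a directed path of length $\|v-v_{\mathrm{in}}\|_1$.

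It remains to evaluate the maximum in \cref{eq:dseq}. Since $v_{\mathrm{out}}$ is a corner of the grid, the maximum of $\|v-v_{\mathrm{out}}\|_1$ over the grid is attained at the opposite corner and equals $d(m-1)=d(N^{1/d}-1)$. The same holds for $v_{\mathrm{in}}$, which gives the forward eccentricity.

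The corner claim can be checked inductively. $v_{\mathrm{in}}=\pi^{-1}(0)$ lies in slab $0$ at the first point of the lower snake. $v_{\mathrm{out}}$ lies in slab $m-1$ at either the last or the first point of the lower snake, depending on the parity of $m-1$, and both of those are corners by induction. The main obstacle, namely the monotone-neighbor property with its coupled induction through reversed slabs, is already isolated in \cref{lem:monotone}, so the remaining work here is bookkeeping.
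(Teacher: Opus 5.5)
Your proposal is correct and follows the paper's proof essentially step for step: the forward-oriented Hamiltonian snake path together with acyclicity for (i), the axis-by-axis adjacency count for (iii), and for (ii) the $\ell_1$ lower bound combined with the two halves of \cref{lem:monotone} for the matching upper bounds. Your additional checks that consecutive snake points are unit neighbors and that $v_{\mathrm{in}}$ and $v_{\mathrm{out}}$ are corners are correct; they fill in facts the paper only asserts in the text before the lemma.
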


\begin{proof}
(i) Consecutive points of $\pi_{m,d}$ are unit neighbors, so the order is a Hamiltonian path of length $N-1$ in $G_{m,d}$; no directed path in a DAG on $N$ vertices is longer.

(ii) Every edge changes the $\ell_1$ distance to $v_{\mathrm{out}}$ by exactly one, so any directed path from $v$ to $v_{\mathrm{out}}$ has at least $\|v-v_{\mathrm{out}}\|_1$ edges. By \cref{lem:monotone}, from every $v\neq v_{\mathrm{out}}$ there is an out-edge of $G_{m,d}$ whose head is one unit closer to $v_{\mathrm{out}}$; following such edges reaches $v_{\mathrm{out}}$ in exactly $\|v-v_{\mathrm{out}}\|_1$ steps. The maximum of $\|v-v_{\mathrm{out}}\|_1$ over the grid is attained at the corner opposite $v_{\mathrm{out}}$ and equals $d(m-1)$. The forward statement follows from the second half of \cref{lem:monotone} by the same argument with the roles of $v_{\mathrm{in}}$ and $v_{\mathrm{out}}$ exchanged.

(iii) The grid has $m^{d-1}(m-1)$ unit adjacencies along each of the $d$ axes, each vertex has at most $2d$ unit neighbors, and every edge joins unit neighbors.
\end{proof}

The construction therefore separates the two depths by
\begin{equation}
    \frac{D_{\mathrm{seq}}}{D_{\mathrm{reach}}}
    =\frac{N-1}{d\,(N^{1/d}-1)}
    =\Theta\!\left(N^{1-1/d}\right).
    \label{eq:depth-communication-separation}
\end{equation}
Part (ii) also gives the forward computation a sublinear first-arrival time: input information reaches every state within $d(m-1)$ rounds, although the longest input-to-state path has length up to $N-1$. The next subsection shows that the exponent in \cref{eq:depth-communication-separation} cannot be improved.

\subsection{Lower bound: a light cone for loss information}
\label{sec:lower-bound}

\begin{lemma}[Light cone]
\label{lem:light-cone}
After $K$ rounds of a $\rho$-local algorithm, the memory of site $v$ is a function of the round-$0$ memories of the sites within distance $K\rho$ of $v$. In particular, if $\dist(v,O)>K\rho$, the memory of $v$ after $K$ rounds does not depend on the target $y$.
\end{lemma}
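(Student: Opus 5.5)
Let $M_v^k$ denote the memory of site $v$ after $k$ rounds. The proof is an induction on $K$, using only \cref{def:local-algorithm} and the triangle inequality for the $\ell_1$ distance $\dist$. We show that $M_v^K$ is a function of the round-$0$ memories $\{M_u^0:\dist(u,v)\le K\rho\}$.

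The base case $K=0$ is immediate: $M_v^0$ is a function of itself, and $v$ is the only site at distance $0$. Assume the claim holds for $K$. By \cref{def:local-algorithm}, $M_v^{K+1}=\Phi_v^{K}\bigl(M_w^K:\dist(w,v)\le\rho\bigr)$ for some update map $\Phi_v^K$, which may depend on $v$ and on the round. By the induction hypothesis, each $M_w^K$ with $\dist(w,v)\le\rho$ is a function of $\{M_u^0:\dist(u,w)\le K\rho\}$. For any such $u$ we have $\dist(u,v)\le\dist(u,w)+\dist(w,v)\le(K+1)\rho$. So every input to $\Phi_v^K$ is a function of round-$0$ memories inside the ball of radius $(K+1)\rho$ around $v$, and a composition of such functions is again a function of that restricted collection. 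This closes the induction.

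For the second sentence, use the convention of \cref{def:local-algorithm}: $y$ is written only into the round-$0$ memories of the output sites $O$, and the input $x$ and parameters are fixed at their sites. If $\dist(v,O)=\min_{o\in O}\dist(v,o)>K\rho$, then no output site lies in the radius-$K\rho$ ball around $v$. The round-$0$ memories that $M_v^K$ depends on therefore do not involve $y$. Concretely, take two targets $y\neq y'$ and run the algorithm on identical $x$ and parameters. The round-$0$ memories agree on every site outside $O$, so they agree on the whole ball, and the first part gives identical $M_v^K$.

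The only delicate point is this last formalization. We must make precise that "does not depend on $y$" means invariance under changing $y$ with everything else held fixed. This is why the indistinguishability phrasing via two runs is used. We must also make sure that no site's initial memory other than those in $O$ encodes $y$, which is exactly what \cref{def:local-algorithm} stipulates. The inductive step itself is routine.
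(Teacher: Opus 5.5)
Your proof is correct and follows the paper's argument: induction on $K$ using $\rho$-locality and the triangle inequality for $\dist$, with the second claim following because $y$ enters only the round-$0$ memories of $O$. Your two-run formalization is exactly the indistinguishability step the paper uses in the proof of \cref{thm:light-cone}, and your remark that $v$ is the only site at distance $0$ is unneeded (it fails if $\phi$ is not injective), since the base case holds anyway because $M_v^0$ is one of the memories in the ball.
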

\begin{proof}
Induction on $K$. The claim is trivial for $K=0$. If it holds for $K-1$, the round-$K$ memory of $v$ is a function of the round-$(K-1)$ memories of sites $u$ with $\dist(u,v)\le\rho$, each of which is a function of round-$0$ memories within distance $(K-1)\rho$ of $u$, hence within distance $K\rho$ of $v$. The second statement follows because $y$ enters only the round-$0$ memories of $O$.
\end{proof}

The geometric half of the lower bound is a statement about realizable circuits alone.

\begin{lemma}[Credit depth of realizable circuits]
\label{lem:realizable-depth}
Let $G$ be a realizable circuit of range $\rho$ on a substrate of density at most $C_d$ with $|O|$ outputs, and let $N\ge C_d|O|$. Then
\begin{equation}
    D_{\mathrm{reach}}(G,O)\ \ge\ \frac{1}{\rho}\left(\frac{N}{C_d|O|}\right)^{1/d},
    \label{eq:realizable-depth}
\end{equation}
and the same bound holds for the forward eccentricity of any set of $|O|$ input vertices.
\end{lemma}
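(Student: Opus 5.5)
The approach is a volume (ball-covering) argument that combines bounded edge length with the density bound. Write $R:=D_{\mathrm{reach}}(G,O)$. By definition of credit depth, every vertex $v\in V$ has a directed path to some output $o\in O$ with at most $R$ edges. Each edge has $\dist(u,w)\le\rho$, so the triangle inequality for the $\ell_1$ metric puts $v$ within distance $R\rho$ of that output. Hence
\begin{equation*}
V\subseteq\bigcup_{o\in O}B_1\bigl(\phi(o),R\rho\bigr),
\end{equation*}
where $B_1$ denotes the closed $\ell_1$-ball in $\mathbb R^d$. Here only the geometry of the path is used, not its direction. The inequality we derive is therefore really a bound on undirected eccentricity, and it applies verbatim to the forward eccentricity of an input set of the same size. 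That case only reverses the path direction.

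Next, apply the density hypothesis to each ball. If $R\rho\ge1$, each ball contains at most $C_d(R\rho)^d$ sites, so $N\le|O|\,C_d(R\rho)^d$. Rearranging gives $R\ge\rho^{-1}\bigl(N/(C_d|O|)\bigr)^{1/d}$, which is \cref{eq:realizable-depth}.

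The one delicate point is the side condition $\varrho\ge1$ in \cref{def:substrate}, which is where the hypothesis $N\ge C_d|O|$ enters. I would split on whether $R\rho\ge1$.

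\textbf{Case $R\rho\ge1$.} The argument above applies directly.

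\textbf{Case $R\rho<1$.} Every site lies in some ball of radius $R\rho<1$, which is contained in the radius-$1$ ball about the same output. The density bound then gives $N\le C_d|O|$. Combined with the hypothesis, $N=C_d|O|$, so the right-hand side of \cref{eq:realizable-depth} equals $1/\rho$. We must then show $R\ge1/\rho$, which contradicts $R\rho<1$ unless the boundary case is handled separately. I expect this boundary bookkeeping to be the only real obstacle.

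Two routes handle it. The first is to argue that $R\rho<1$ with $N=C_d|O|$ forces a degenerate situation. If $R=0$, then every vertex is an output, so $N=|O|$ and $C_d\le1$. The second, simpler route is to note that the radius-$1$ ball bound already yields $N\le C_d|O|\max(1,R\rho)^d$. Then $\max(1,R\rho)\ge(N/(C_d|O|))^{1/d}$. When the maximum is attained by $1$, we use that the right-hand side is at most $1$, together with a mild convention such as $\rho\le1$ on the unit grid, or else state the bound as $\max(1,R\rho)$.

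I would present the clean covering argument as the main proof, dispose of the small-radius regime in a short remark, and then note the forward-eccentricity case by symmetry.
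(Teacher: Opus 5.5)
Your covering argument is the paper's volume argument written directly instead of by contradiction. The paper fixes $1\le\varrho<(N/(C_d|O|))^{1/d}$, finds a site farther than $\varrho$ from $O$, and lets $\varrho$ increase to the threshold; you cover $V$ by the $|O|$ balls of radius $R\rho$ and count sites. For $N>C_d|O|$ your proof is complete, because your own case analysis shows that $R\rho<1$ would force $N\le C_d|O|$.

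The boundary case you flag cannot be closed by either route, because at $N=C_d|O|$ the inequality is false in general. Take $d=1$, two sites at positions $0$ and $1/100$, $C_1=2$, a single edge from the first site to the second, $O$ equal to the second site, and $\rho=1/100$. The density condition holds, since there are only two sites and $2\varrho\ge2$. Also $N=C_d|O|=2$ and $D_{\mathrm{reach}}=1$, while the right-hand side equals $1/\rho=100$.

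Your first route treats only $R=0$, and there the bound $0\ge1/\rho$ fails as well. The convention in your second route points the wrong way. What rescues the equality case is $\rho\ge1$: then $0<1/\rho\le1$, so $R\ge1/\rho$ reduces to $R\ge1$, which holds iff some vertex is not an output, i.e.\ iff $N>|O|$ (equivalently $C_d>1$ at equality). On the unit grid $\rho\ge1$ is automatic once $G$ has an edge, since distinct lattice points are at $\ell_1$ distance at least $1$.

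The paper's proof has the same hole, unstated: its range of admissible $\varrho$ is empty when $N=C_d|O|$, so it proves nothing there. The correct statement therefore assumes $N>C_d|O|$, or keeps $N\ge C_d|O|$ and adds $\rho\ge1$ and $N>|O|$. Under either hypothesis your argument proves it, with no separate remark needed for a small-radius regime.
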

\begin{proof}
Fix any $\varrho$ with $1\le\varrho<(N/(C_d|O|))^{1/d}$. The sites within distance $\varrho$ of $O$ number at most $|O|\,C_d\varrho^d<N$, so some vertex $v$ has $\dist(v,O)>\varrho$. Every edge of $G$ has length at most $\rho$, so every directed path from $v$ to $O$ has more than $\varrho/\rho$ edges. Taking $\varrho\uparrow(N/(C_d|O|))^{1/d}$ gives \cref{eq:realizable-depth}; the forward statement is identical with the roles of sources and targets exchanged.
\end{proof}

\Cref{lem:realizable-depth} bounds the credit depth of the graph. The theorem below bounds the rounds used by any algorithm on the substrate, whether or not it follows the edges of a fixed graph.

\begin{theorem}[Light-cone lower bound for local credit assignment]
\label{thm:light-cone}
Let a substrate have density at most $C_d$ and $|O|$ output sites, and let $\mathcal A$ be a $\rho$-local learning algorithm. Suppose that for a site $v$ there are two instances $(x,\theta,y)$ and $(x,\theta,y')$ with gradients $\nabla_{\theta_v}\ell=g\neq0$ and $-g$, respectively, at the parameter block held at $v$. If after $K$ rounds $\mathcal A$ holds at $v$ an estimate $\hat g_v$ with $\langle\hat g_v,\nabla_{\theta_v}\ell\rangle>0$ on both instances, then
\begin{equation}
    K\ \ge\ \frac{\dist(v,O)}{\rho}.
    \label{eq:light-cone-site}
\end{equation}
Moreover, if $N\ge C_d|O|$ then some site satisfies $\dist(v,O)\ge\bigl(N/(C_d|O|)\bigr)^{1/d}$. Hence an algorithm that guarantees sign-correct estimates at every loss-sensitive site needs
\begin{equation}
    K\ \ge\ \frac{1}{\rho}\left(\frac{N}{C_d|O|}\right)^{1/d}
    =\Omega\bigl(N^{1/d}\bigr)
    \label{eq:light-cone-global}
\end{equation}
rounds after the target is revealed, for fixed $d$, $\rho$, $C_d$, and $|O|$.
\end{theorem}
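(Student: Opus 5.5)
The plan is an indistinguishability argument built on \cref{lem:light-cone}, followed by the counting step already used in \cref{lem:realizable-depth}. I will argue the contrapositive: suppose $K<\dist(v,O)/\rho$, so that $\dist(v,O)>K\rho$. The two instances $(x,\theta,y)$ and $(x,\theta,y')$ share $x$ and $\theta$. By \cref{def:local-algorithm}, $y$ is written only into the memories of the sites in $O$, so the round-$0$ memories of the two instances agree at every site outside $O$. By \cref{lem:light-cone}, the round-$K$ memory of $v$ depends only on round-$0$ memories within distance $K\rho$ of $v$. No output site lies in that ball, so $v$ holds the same memory after $K$ rounds on both instances. The estimate $\hat g_v$ is read from that memory, so it is the same vector $\hat g$ on both.

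Sign-correctness on both instances then requires $\langle\hat g,g\rangle>0$ and $\langle\hat g,-g\rangle>0$. These add to $0>0$, a contradiction, which gives \cref{eq:light-cone-site}. One point needs care: ``the estimate held at $v$'' must be a function of $v$'s memory alone, which is how I read \cref{def:local-algorithm}. If $\mathcal A$ were randomized, I would fix the random seed as part of the round-$0$ memory, so the argument applies pathwise.

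For the second part I will reuse the counting from \cref{lem:realizable-depth}. For any $\varrho$ with $1\le\varrho<(N/(C_d|O|))^{1/d}$, the union of the $\ell_1$-balls of radius $\varrho$ around the output sites contains at most $|O|C_d\varrho^d<N$ sites. Some site therefore lies at distance greater than $\varrho$ from $O$. The substrate is finite, so $\max_v\dist(v,O)$ is attained. Letting $\varrho\uparrow(N/(C_d|O|))^{1/d}$ gives a site with $\dist(v,O)\ge(N/(C_d|O|))^{1/d}$. The hypothesis $N\ge C_d|O|$ ensures the range of admissible $\varrho$ is nonempty.

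The main obstacle is the interpretation of the final claim. To apply \cref{eq:light-cone-site} at this far site, it must be ``loss-sensitive'', meaning it admits a pair of instances with opposite nonzero gradients. I will state the global bound under that hypothesis, reading ``sign-correct at every loss-sensitive site'' as including the site that attains the maximal distance; that reading is what the statement asserts. It is satisfied, for example, in the snake-grid circuit when the loss is linear in $h_O$ and $y$ enters as a sign: then $b$ flips sign with $y$, and by \cref{eq:settled-adjoint} so does $a$ and hence every parameter gradient. With that settled, combining the two parts gives \cref{eq:light-cone-global}, and since $d$, $\rho$, $C_d$ and $|O|$ are fixed, the bound is $\Omega(N^{1/d})$.
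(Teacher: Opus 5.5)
Your proposal is correct and follows the paper's proof: \cref{lem:light-cone} makes the round-$K$ memory at $v$, and hence $\hat g_v$, identical on the two instances whenever $K\rho<\dist(v,O)$, no single vector has positive inner product with both $g$ and $-g$, and the far site comes from the density count of \cref{lem:realizable-depth}. Your remark that this far site must itself be loss-sensitive makes explicit a step the paper leaves implicit.

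One small correction: when $N=C_d|O|$ the interval $1\le\varrho<(N/(C_d|O|))^{1/d}=1$ is empty, so the hypothesis does not make it nonempty, and the counting step really needs $N>C_d|O|$. The paper's argument has the same boundary slip, and the conclusion can fail there: three sites at $-\tfrac12,0,\tfrac12$ with $C_1=3$ and $O=\{0\}$ have every site within distance $\tfrac12<1$ of $O$. This does not affect the $\Omega(N^{1/d})$ bound.
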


\begin{proof}
If $K\rho<\dist(v,O)$, \cref{lem:light-cone} gives identical memories at $v$ on the two instances, hence the same estimate $\hat g_v$; a single vector cannot have positive inner product with both $g$ and $-g$. This proves \cref{eq:light-cone-site}. The second claim is the density argument in the proof of \cref{lem:realizable-depth}, and \cref{eq:light-cone-global} follows from \cref{eq:light-cone-site}.
\end{proof}

\begin{remark}[Sign-flipped twins exist]
\label{rem:sign-flip}
The hypothesis of \cref{thm:light-cone} is satisfied at every loss-sensitive site for squared loss. With $\ell=\tfrac12\|\hat y-y\|^2$ and $\hat y$ a function of the settled state, the forcing $b=\nabla_{h_O}\ell$ is linear in $\hat y-y$, and by \cref{eq:settled-adjoint,eq:parameter-gradient-general} every parameter gradient is linear in $b$ once $(x,\theta)$ is fixed. Replacing $y$ by $y'=2\hat y-y$ negates $b$ and therefore negates every gradient. The same holds for any loss whose output gradient can be negated by a change of target.
\end{remark}

\begin{corollary}[Depth and credit depth of realizable circuits]
\label{cor:optimal-exponent}
For every family of circuits realizable in dimension $d$, the credit depth is $\Omega(N^{1/d})$, and no $\rho$-local learning algorithm on the family's substrates delivers sign-correct credit to every loss-sensitive parameter in $o(N^{1/d})$ rounds. The family $\{G_{m,d}\}$ of \cref{thm:spatial-compiler} is realizable in dimension $d$ with range $1$, degree $2d$, depth $N-1$, and credit depth $d(N^{1/d}-1)$. Depth and credit depth are therefore independent parameters of realizable circuits: for every $N$ there is a realizable circuit of maximal depth $N-1$ whose credit depth is $\Theta(N^{1/d})$, and no realizable circuit of any depth has credit depth $o(N^{1/d})$.
\end{corollary}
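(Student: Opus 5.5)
The corollary assembles results already proved, so the plan is to check each of its four claims against the earlier statements and make the constants explicit.

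\emph{First claim.} A family realizable in dimension $d$ has substrate density at most a common $C_d$, and bounded $\rho$ and $|O|$. Applying \cref{lem:realizable-depth} to each member with $N\ge C_d|O|$ gives
\[
D_{\mathrm{reach}}(G_N,O)\ge \rho^{-1}\bigl(N/(C_d|O|)\bigr)^{1/d}.
\]
Since $\rho$, $C_d$ and $|O|$ do not depend on $N$, this is $\Omega(N^{1/d})$. The finitely many $N<C_d|O|$ do not affect an asymptotic statement.

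\emph{Second claim (algorithmic).} This comes from \cref{thm:light-cone}, whose site-wise bound needs sign-flipped twins at the relevant site. The corollary only speaks of loss-sensitive parameters, which are exactly those with $g\neq0$, and \cref{rem:sign-flip} supplies the twins for squared loss. The step I expect to be delicate is that the site $v$ with $\dist(v,O)\ge (N/(C_d|O|))^{1/d}$ produced by the density argument must actually carry a loss-sensitive parameter.

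\begin{itemize}
\item I would first read the statement as quantifying over instances in which every site holds a loss-sensitive parameter block.
\item Alternatively, I would rerun the density count restricted to the set $S$ of loss-sensitive sites. This gives the same bound with $N$ replaced by $|S|$, which is still $\Theta(N)$ provided a constant fraction of sites are loss-sensitive.
\end{itemize}

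In the snake grid every site lies on the backbone, so generic parameters make all of them sensitive. I would state this interpretation explicitly.

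\emph{Third claim (upper-bound family).} \Cref{thm:spatial-compiler} gives range $1$ from unit edges, degree at most $2d$, depth $N-1$ by (i), and credit depth $d(N^{1/d}-1)$ by (ii). The only remaining point is the density bound $3^d$ of \cref{def:substrate}. Any $\ell_1$-ball of radius $\varrho\ge1$ sits inside an $\ell_\infty$ cube of side $2\varrho$, which contains at most $(2\varrho+1)^d\le(3\varrho)^d$ lattice points. Also $|O|=1$. So $\{G_{m,d}\}$ is realizable in dimension $d$.

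\emph{Fourth claim (independence).} This combines the two sides. For $N=m^d$ the circuit $G_{m,d}$ has maximal depth $N-1$, and its credit depth $d(N^{1/d}-1)$ is $\Theta(N^{1/d})$ from above and, by the first claim, from below. For $N$ not a perfect $d$-th power, I would either restrict to $N=m^d$ or note that a snake order on a box $\{0,\dots,m-1\}^{d-1}\times\{0,\dots,k-1\}$ gives the same bounds up to constants. Finally, the first claim applies to every realizable circuit whatever its depth, which rules out credit depth $o(N^{1/d})$.
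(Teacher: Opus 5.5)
Your proposal is correct and is essentially the paper's own argument. The paper gives no separate proof and reads the corollary off \cref{lem:realizable-depth}, \cref{thm:light-cone} with \cref{rem:sign-flip}, \cref{thm:spatial-compiler}, and the $3^d$ density asserted in \cref{def:substrate}; your explicit density check, and your point that the far site found by the counting argument must actually carry a loss-sensitive parameter, spell out what the paper leaves implicit. The one slip is the box aside: boxes $\{0,\dots,m-1\}^{d-1}\times\{0,\dots,k-1\}$ only realize $N$ divisible by $m^{d-1}$ (for $d=2$ and prime $N$ this forces the chain), so the ``every $N$'' clause rests, as in the paper, on $N=m^d$, and taking the first $N$ points of $\pi_{m,d}$ with $m=\lceil N^{1/d}\rceil$, rather than a box, is what would cover arbitrary $N$ with credit depth still $O(N^{1/d})$.
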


Both bounds concern causal availability only: a state that has become reachable may still hold inaccurate credit. \Cref{sec:finite-credit} quantifies the additional rounds in terms of the Jacobian-weighted path tail.

\subsection{The sparse Moore compiler}
\label{sec:moore}

The full grid of \cref{thm:spatial-compiler} uses every unit adjacency. The experiments use a sparser two-dimensional compiler that keeps the exponent with fewer edges. Let $m=2^k$ and $N=m^2$. A closed Moore curve \cite{Moore1900}, generated here by an L-system \cite{Prusinkiewicz1996}, visits every point of the $m\times m$ grid exactly once and returns to its start, so it is a Hamiltonian cycle $C$ of the unit grid. Cutting $C$ at one edge induces a topological order and the backbone $v_0\to v_1\to\dots\to v_{N-1}$. The cut edge is retained and oriented from input to output, and every unit-grid edge that separates two cells enclosed by $C$ is added and oriented from lower to higher index (\cref{alg:moore-dag}). \Cref{fig:moore-grid} shows the $4\times4$ and $8\times 8$ cases.

\begin{proposition}[Edge count of cycle-based compilers]
\label{prop:moore-edges}
Let $m$ be even and let $C$ be any Hamiltonian cycle of the $m\times m$ unit grid. The graph consisting of the $N-1$ backbone edges, the cut edge, and the unit-grid edges separating two cells enclosed by $C$ has exactly
\begin{equation}
    |E|=\tfrac32N-2
    \label{eq:moore-edge-count}
\end{equation}
edges, maximum undirected degree four, and unit-length edges only.
\end{proposition}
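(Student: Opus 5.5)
We count edges in three disjoint groups and add them. The cycle $C$ has $N$ edges, all unit-grid adjacencies: the $N-1$ backbone edges plus the cut edge. It remains to count the unit-grid edges that separate two cells enclosed by $C$. These are automatically distinct from the cycle edges, since a cycle edge has the interior on one side and the exterior on the other. The claim then reduces to showing that exactly $N/2-2$ such \emph{interior dual edges} exist.

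\textbf{Setup.} Regard the $m\times m$ lattice points as vertices of a planar grid whose faces are the $(m-1)^2$ unit cells. Because $C$ visits all $N$ lattice points, it is a simple closed lattice polygon. By Pick's theorem its enclosed area is
\[
A = I + \tfrac{B}{2} - 1 = 0 + \tfrac{N}{2} - 1,
\]
since $B=N$ and there are no interior lattice points. So $C$ encloses exactly $F=N/2-1$ unit cells. The parity hypothesis ($m$ even) is what makes a Hamiltonian cycle exist, because the grid is bipartite with equal sides. Pick's theorem itself needs no further use of parity.

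\textbf{Key step: the enclosed cells form a tree in the dual.} Build the graph $T$ whose vertices are the $F$ enclosed cells, with two cells joined when they share a unit-grid edge that lies strictly inside $C$. An internal grid edge lies inside the polygon exactly when it is a unit-grid edge not on $C$ whose two adjacent cells are both enclosed, so the edges of $T$ are precisely the interior dual edges we want to count. The plan is to show that $T$ is a tree, which gives $F-1=N/2-2$ edges.
\begin{itemize}
\item \emph{Connectedness.} The interior of a simple polygon is connected, and it is the union of the closed enclosed cells glued along interior edges. Adjacency through corners alone cannot occur without also passing through an edge, because a corner shared by two enclosed cells that are not edge-adjacent would be a lattice point lying in the interior of the polygon, contradicting $I=0$.
\item \emph{Acyclicity.} A cycle in $T$ would enclose a lattice point in the interior of the polygon: take a minimal cycle of cells around a grid vertex, or use the planar-dual argument that a cycle of faces surrounds some vertex. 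Again this contradicts $I=0$.
\end{itemize}
This step is the main obstacle. I would make it rigorous with Euler's formula on the planar graph formed by $C$ together with the interior edges, which has $N$ vertices, $N+e$ edges and $F+1$ faces. Euler gives $N-(N+e)+(F+1)=2$, so $e=F-1$ directly, provided each enclosed cell is a separate face and all interior edges have both endpoints on $C$. The second condition holds since every vertex lies on $C$. This Euler computation may simply replace the tree argument.

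\textbf{Totals and local properties.} Summing the groups gives
\[
|E| = N + \left(\tfrac{N}{2} - 2\right) = \tfrac32 N - 2 .
\]
Every edge is a unit-grid adjacency, so all edges have unit length. The undirected degree is at most four, because each chosen edge is a distinct unit-grid edge and a grid vertex has at most four unit neighbours.
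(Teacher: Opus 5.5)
Your proof is correct, but its key step differs from the paper's. Both arguments get $F=N/2-1$ enclosed cells from Pick's theorem. The paper then double-counts sides of enclosed cells, $4F=N+2s$, using that each edge of $C$ bounds exactly one enclosed cell and every other side of an enclosed cell is shared by two. You instead apply Euler's formula to $C$ plus the shortcut edges to get $s=F-1$.

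Both routes rest on the same local fact. You leave it as the proviso ``each enclosed cell is a separate face,'' and the paper discharges it in one line: a side of an enclosed cell bordering a non-enclosed cell or the outside of the grid would lie on the boundary of the enclosed region, i.e.\ on $C$. Hence all four sides of every enclosed cell belong to your planar graph, and the open enclosed cells are exactly its bounded faces. The exterior of $C$ contains no edges of the graph, so it is the single unbounded face.

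What each route buys: the paper's count is purely local and needs nothing more. Your Euler count, together with connectedness of the dual graph, shows that the shortcuts form a spanning tree of the dual of the enclosed cells; the paper only states this as a remark after its proof. Note also that the two relations together make Pick unnecessary: substituting $s=F-1$ into $4F=N+2s$ gives $F=N/2-1$ and $s=N/2-2$ at once.

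Let the Euler computation replace your sketched tree argument rather than supplement it. As written, the corner remark is misstated, since every lattice point, including a shared corner, lies on $C$ rather than inside it. The correct form of the connectedness step is this: because $I=0$, the open interior of $C$ is the union of the open enclosed cells and the open shortcut edges, so a disconnection of the dual graph would disconnect the interior.
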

\begin{proof}
$C$ is a simple closed lattice polygon with all $N$ lattice points of the grid on its boundary and none in its interior. By Pick's theorem its area is $N/2-1$, so it encloses $N/2-1$ unit cells. Each of the $N$ edges of $C$ is a side of exactly one enclosed cell, and every other side of an enclosed cell is shared with another enclosed cell, since a side separating an enclosed cell from a non-enclosed cell or from the exterior of the grid would lie on the boundary of the enclosed region, which is $C$. Counting sides of enclosed cells, $4(N/2-1)=N+2s$ with $s$ the number of shared sides, so $s=N/2-2$, and $|E|=(N-1)+1+(N/2-2)$. Degree and length are inherited from the grid.
\end{proof}
The $N/2-2$ shortcut edges are the sides of a tree in the dual graph of the enclosed cells, which is why the compiler is sparse: it uses about three quarters of the $2N-2\sqrt N$ unit adjacencies of the grid.

\begin{figure}[t]
    \centering
    \begin{subfigure}[b]{0.48\textwidth}
        \centering
        \includegraphics[width=\textwidth]{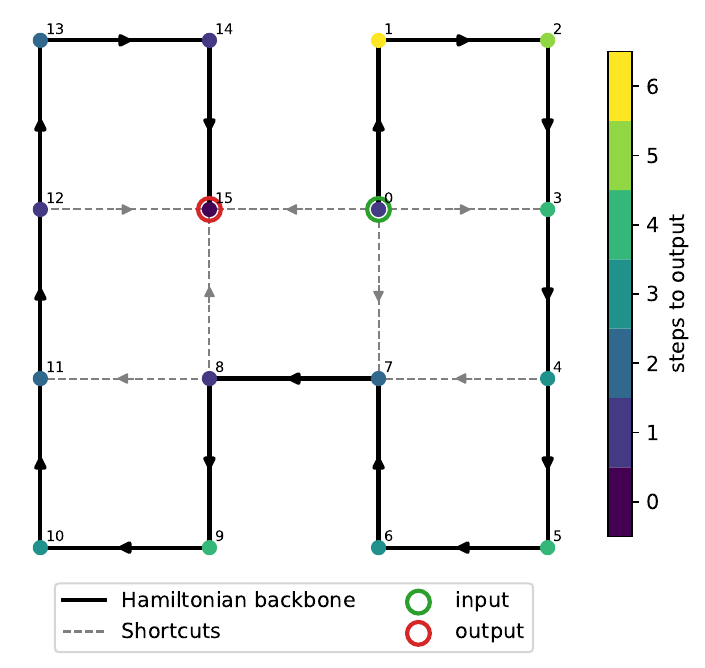}
        \caption{$4\times 4$ grid ($m=4, N=16$)}
    \end{subfigure}
    \hfill
    \begin{subfigure}[b]{0.48\textwidth}
        \centering
        \includegraphics[width=\textwidth]{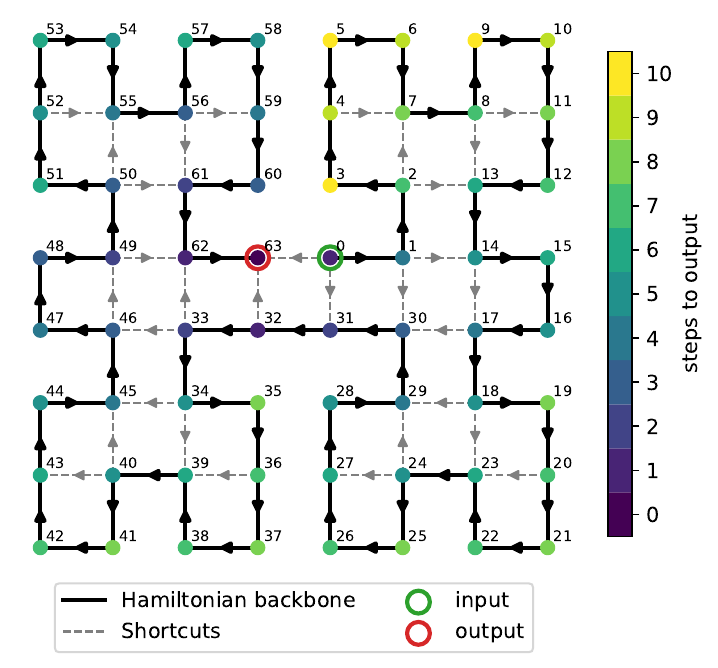}
        \caption{$8\times 8$ grid ($m=8,N=64$)}
    \end{subfigure}
    \caption{Sparse Moore compiler for two grids. Solid arrows form the Hamiltonian backbone; dashed arrows are the cut edge and the enclosed-cell shortcuts. Every edge is unit length and respects the common topological order. The state(s) farthest from the output is state~1 at distance $6$ for the $4\times 4$ case and states~3, 5, 9 at distance $10$ for the $8\times 8$ case. Both cases satisfy $D_\text{reach}=m+2=\sqrt N+2$.}
    \label{fig:moore-grid}
\end{figure}

For the implemented sizes $m\in\{4,8,16,32,64,128,256\}$, the reverse eccentricity of the output, computed exactly by breadth-first search, is
\begin{equation}
    D_{\mathrm{reach}}=m+2=\sqrt N+2 .
    \label{eq:moore-reach-empirical}
\end{equation}
\Cref{eq:moore-reach-empirical} is not proved for all $k$; the formal $O(\sqrt N)$ guarantee is \cref{thm:spatial-compiler}, which the Moore compiler matches in exponent with a smaller constant and fewer edges. The graphs of this section fix where credit can travel; the next section specifies the dynamics that run on them.

\section{SpatialPC: Local Dynamics on the Compiled Graph}
\label{sec:dynamics}

A compiled graph bounds how far information can travel per round; it does not say how the states are computed or how the gradient is obtained. SpatialPC runs synchronous local dynamics on the graph for a fixed number of rounds $T$, trains the resulting finite-round predictor, and obtains the exact gradient of that objective by APD on the unrolled computation, using the same edges in reverse. The separation of \cref{sec:spatialpc} is what allows $T$ to be of order $N^{1/d}$ rather than $N$.

\subsection{Synchronous forward dynamics}
\label{sec:forward}
For a directed edge $(u,v)$ the message sent from $u$ to $v$ is
\begin{equation}
    \mu_{uv}^t
    =s_{uv}h_u^t+g_{uv}W_{uv}\sigma(h_u^t),
    \label{eq:spatial-message}
\end{equation}
with degree-normalized skip gain and residual gain
\begin{equation}
    s_{uv}=\frac{1}{\sqrt{d_{\mathrm{in}}(v)\,d_{\mathrm{out}}(u)}},
    \qquad
    g_{uv}=\frac{\gamma_0}{\sqrt{mH}}\,s_{uv},
    \label{eq:gains}
\end{equation}
where $\gamma_0$ is a fixed constant, $m=\sqrt N$ is the grid side, and $H$ is the state width. Each state aggregates the messages of its parents, and the input $x\in\mathbb R^{d_x}$ is injected at the root $v_{\mathrm{in}}$,
\begin{equation}
    h_v^{t+1}=\sum_{u\in\pa(v)}\mu_{uv}^t
    \quad(v\neq v_{\mathrm{in}}),
    \qquad
    h^{t+1}_{v_{\mathrm{in}}}=W_{\mathrm{in}}x/\sqrt{d_x}.
    \label{eq:local-forward}
\end{equation}
Writing one simultaneous update of all states as $F_\theta$,
\begin{equation}
    h^{t+1}=F_\theta(h^t;x),
    \qquad h^0=0,
    \label{eq:forward-dynamics}
\end{equation}
and the settled state $h^\star$ is the unique solution of
\begin{equation}
    h^\star=F_\theta(h^\star;x),
    \label{eq:settled-forward}
\end{equation}
which a single sweep in topological order computes exactly. The prediction is $\hat y=\readout(h)=\kappa_H W_{\mathrm{out}}|O|^{-1/2}\sum_{o\in O}h_o$ (\cref{alg:spatialpc-forward-step}).

\begin{proposition}[Synchronous forward sweep]
\label{prop:forward-causality}
Let $L(v)$ be the length of the longest directed path from $v_{\mathrm{in}}$ to $v$. For every $T\ge1$:
\begin{enumerate}[label=(\roman*),leftmargin=*,itemsep=0pt,topsep=1pt]
    \item $h_v^T$ depends on $x$ only through directed paths from $v_{\mathrm{in}}$ to $v$ of length at most $T-1$;
    \item $h_v^T=h_v^\star$ whenever $T\ge L(v)+1$; in particular $h^T=h^\star$ for all $T\ge D_{\mathrm{seq}}+1$.
\end{enumerate}
\end{proposition}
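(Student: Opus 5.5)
Both parts follow from an induction on $T$ that unrolls the synchronous recursion \cref{eq:local-forward} along the edges of the DAG. The only structural facts used are these: by \cref{eq:local-forward} the root is reset to $W_{\mathrm{in}}x/\sqrt{d_x}$ at every round $t\ge1$; every non-root state is a function of its parents' states at the previous round; and $h^0=0$.

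For (i), I would prove by induction on $T\ge1$ the stronger statement that $h_v^T$ is a function of $\theta$ and of the values $\{h_{v_{\mathrm{in}}}^{T-k}:\ \text{there is a directed path } v_{\mathrm{in}}\rightsquigarrow v \text{ of length } k\le T-1\}$, with no other $x$-dependence. The root's contribution at round $T-k$ is $x$ if $T-k\ge1$, and $0$ from $h^0$ otherwise.

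The base case $T=1$ is immediate. Here $h_{v_{\mathrm{in}}}^1=W_{\mathrm{in}}x/\sqrt{d_x}$, which uses the path of length $0$. For $v\ne v_{\mathrm{in}}$, $h_v^1$ is a function of $h^0_{\pa(v)}=0$ and so does not depend on $x$ at all.

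For the inductive step, take $v\neq v_{\mathrm{in}}$. Then $h_v^{T}=\sum_{u\in\pa(v)}\mu_{uv}^{T-1}$, and each message depends on $h_u^{T-1}$. By hypothesis, $h_u^{T-1}$ depends on $x$ only through paths $v_{\mathrm{in}}\rightsquigarrow u$ of length at most $T-2$. Appending the edge $(u,v)$ gives paths of length at most $T-1$, which is the claim. The root case is trivial, since $h_{v_{\mathrm{in}}}^T$ depends on $x$ only through the length-$0$ path.

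For (ii), I would induct on $L(v)$, proving $h_v^T=h_v^\star$ for all $T\ge L(v)+1$. If $L(v)=0$, then $v=v_{\mathrm{in}}$, because any other vertex with no path from the root would never receive $x$. I would note explicitly that the admissible graphs contain the backbone, so every vertex is reachable from $v_{\mathrm{in}}$. In that case $h_{v_{\mathrm{in}}}^T=W_{\mathrm{in}}x/\sqrt{d_x}=h^\star_{v_{\mathrm{in}}}$ for every $T\ge1$.

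If $L(v)\ge1$, every parent $u$ satisfies $L(u)\le L(v)-1$, since a longest path to $u$ extended by the edge $(u,v)$ is a path to $v$. So for $T\ge L(v)+1$ we have $T-1\ge L(u)+1$, and the induction hypothesis gives $h_u^{T-1}=h_u^\star$ for every parent. Then $h_v^T=\sum_{u}\mu_{uv}(h_u^\star)=h_v^\star$ by the fixed-point equation \cref{eq:settled-forward}.

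The final claim follows because $L(v)\le D_{\mathrm{seq}}$ for every $v$. Uniqueness of $h^\star$, already asserted after \cref{eq:settled-forward} via the topological sweep, is what makes ``$h_v^\star$'' well defined.

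I expect the only delicate point to be the bookkeeping in (i) around the zero initialization. The path-length bound is $T-1$ rather than $T$ because the root only takes its $x$-value from round $1$ onward. I would state the inductive invariant with this off-by-one explicit rather than rely on \cref{lem:light-cone}, which gives the geometric analogue but not the path-length statement.
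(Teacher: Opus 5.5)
Your proof is correct and follows the paper's argument. Part (i) is the same one-edge-per-round propagation from $h^1$, written out as an explicit induction with the $h^0=0$ off-by-one made visible; part (ii) is the same observation that every parent of $v$ has strictly smaller $L$ and is therefore settled one round earlier, organized as induction on $L(v)$ rather than the paper's induction on $t$ over the sets $\{u: L(u)\le t-1\}$, and your final step via $L(v)\le D_{\mathrm{seq}}$ (with the backbone making every $L(v)$ defined) is what the paper's $\max_v L(v)=D_{\mathrm{seq}}$ is used for.
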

\begin{proof}
(i) $h^1$ depends on $x$ only at $v_{\mathrm{in}}$, and each application of $F_\theta$ moves dependence across one edge. (ii) Induction on $t$: $h^1_{v_{\mathrm{in}}}=h^\star_{v_{\mathrm{in}}}$, and if $h^t_u=h^\star_u$ for all $u$ with $L(u)\le t-1$, then for every $v$ with $L(v)\le t$ all parents satisfy $L(u)\le t-1$, so $h^{t+1}_v=\sum_{u\in\pa(v)}\mu_{uv}(h^\star_u)=h^\star_v$. Since every state lies on the backbone, $\max_vL(v)=D_{\mathrm{seq}}$.
\end{proof}

\Cref{prop:forward-causality} is the forward counterpart of \cref{lem:light-cone}: the synchronous iteration is a Jacobi sweep of the triangular system \cref{eq:settled-forward}, exact only after $D_{\mathrm{seq}}+1$ rounds and truncated at path length $T-1$ before that. With $T<D_{\mathrm{seq}}+1$ the finite-round state is not the settled state. Rather than wait for it to settle, SpatialPC trains the parameters so that the $T$-round state is nearly a fixed point.

\subsection{Finite-round objective}
\label{sec:objective}
The finite-round predictor is trained with the terminal feasibility residual
\begin{equation}
    r^T=h^T-F_\theta(h^T;x)
    \label{eq:feasibility-residual}
\end{equation}
and the objective
\begin{equation}
    \mathcal J_T(\theta;x,y)
    =\ell\bigl(\readout(h^T),y\bigr)
     +\frac{\lambda}{2NH}\|r^T\|_F^2 .
    \label{eq:spatial-objective}
\end{equation}
The residual term is the predictive-coding energy \cref{eq:pc-loss} evaluated at the finite-round state; it is zero exactly when $h^T$ is a fixed point of $F_\theta$, that is, when the $T$-round computation is already settled. The gradient of $\mathcal J_T$ is obtained next, by the same local mechanism that \cref{sec:apd} introduced for the settled problem.

\subsection{Exact gradients by APD on the space--time graph}
\label{sec:finite-apd}
Unrolling \cref{eq:forward-dynamics} gives the time-indexed constraints
\begin{equation}
    h^{t+1}-F_\theta(h^t;x)=0,
    \qquad t=0,\dots,T-1,
    \label{eq:unrolled-constraint}
\end{equation}
whose constraint graph is $T$ copies of the spatial graph stacked in time. Let $J_t=\partial F_\theta(h^t;x)/\partial h^t$. APD on this space--time graph has the terminal adjoint
\begin{equation}
    a^T
    =\nabla_{h^T}\ell
     +\frac{\lambda}{NH}\bigl(r^T-J_T^\top r^T\bigr)
    \label{eq:terminal-apd}
\end{equation}
and the reverse recursion
\begin{equation}
    a^t=J_t^\top a^{t+1},
    \qquad t=T-1,\dots,0 .
    \label{eq:finite-apd-recursion}
\end{equation}
Each product $J_t^\top a^{t+1}$ decomposes into child-to-parent messages along the edges used in the forward pass (\cref{alg:spatialpc-training}).

\begin{proposition}[Exactness and locality of finite-round APD]
\label{prop:finite-spatialpc-apd}
The recursion \cref{eq:terminal-apd,eq:finite-apd-recursion} yields the exact gradient of \cref{eq:spatial-objective},
\begin{equation}
    \nabla_\theta\mathcal J_T
    =\left.\nabla_\theta\mathcal J_T\right|_{h^T\ \mathrm{fixed}}
     +\sum_{t=0}^{T-1}
      \left(\frac{\partial F_\theta(h^t;x)}{\partial\theta}\right)^{\top}a^{t+1},
    \label{eq:finite-apd-param}
\end{equation}
and every operation in it is local to an edge of the spatial graph: a forward message, a reverse message, or an outer product of a postsynaptic adjoint with a presynaptic activity.
\end{proposition}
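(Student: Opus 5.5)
The plan is a standard Lagrangian/reverse-mode argument on the unrolled constraints \cref{eq:unrolled-constraint}, followed by a check that each term factors over edges. View $\mathcal J_T$ as a function of $\theta$ through two channels: explicitly, via $\ell(\readout(h^T),y)$ and the $\theta$-dependence of $F_\theta(h^T;x)$ inside $r^T$; and implicitly, via $h^T$, which depends on $\theta$ through $h^{t+1}=F_\theta(h^t;x)$ with $h^0=0$ fixed.

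The derivation runs as follows.

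1. Form the Lagrangian
\[
\mathcal L=\mathcal J_T(h^T,\theta)-\sum_{t=0}^{T-1}(a^{t+1})^\top\bigl(h^{t+1}-F_\theta(h^t;x)\bigr).
\]
On the feasible trajectory $\mathcal L=\mathcal J_T$, so $d\mathcal J_T/d\theta=\partial\mathcal L/\partial\theta$ whenever the multipliers make $\partial\mathcal L/\partial h^t=0$ for every $t\ge1$. The variable $h^0$ is constant, so no condition is needed there.

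2. Stationarity in $h^T$ gives $a^T=\partial\mathcal J_T/\partial h^T$. Here $\partial\ell/\partial h^T=\nabla_{h^T}\ell$. Since $\partial r^T/\partial h^T=I-J_T$, the residual term contributes $\tfrac{\lambda}{NH}(I-J_T)^\top r^T$. Together these reproduce \cref{eq:terminal-apd}.

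3. Stationarity in $h^t$ for $1\le t\le T-1$ gives $a^t=J_t^\top a^{t+1}$, which is \cref{eq:finite-apd-recursion}. The value $a^0$ is never used in the gradient.

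4. Differentiate $\mathcal L$ in $\theta$. This yields the explicit term $\nabla_\theta\mathcal J_T|_{h^T\text{ fixed}}$, which includes $-\tfrac{\lambda}{NH}(\partial F_\theta(h^T)/\partial\theta)^\top r^T$ and the readout gradient, plus $\sum_t(\partial F_\theta(h^t)/\partial\theta)^\top a^{t+1}$. That is \cref{eq:finite-apd-param}. Equivalently, one can unroll the chain rule and induct backward on $t$. I would note this alternative but use the Lagrangian route, since it parallels \cref{sec:apd}.

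Locality is then checked from the explicit form of $F_\theta$ in \cref{eq:spatial-message,eq:local-forward}. Because $h_v^{t+1}=\sum_{u\in\pa(v)}\bigl(s_{uv}h_u^t+g_{uv}W_{uv}\sigma(h_u^t)\bigr)$, the block $J_{v\leftarrow u}=s_{uv}I+g_{uv}W_{uv}\operatorname{diag}\sigma'(h_u^t)$ is nonzero only on edges $(u,v)$. Hence $(J_t^\top a^{t+1})_u=\sum_{v\in\ch(u)}\operatorname{diag}\sigma'(h_u^t)\,g_{uv}W_{uv}^\top a_v^{t+1}+s_{uv}a_v^{t+1}$, which is a sum of reverse messages along forward edges. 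The same holds for $J_T^\top r^T$ in the terminal adjoint, since it is another reverse-message pass, now applied to the residual field.

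The input node's row of $F_\theta$ does not depend on $h$, so it contributes no Jacobian block. The parameter terms are likewise local:
\begin{itemize}
\item $\partial/\partial W_{uv}$ gives $g_{uv}\,a_v^{t+1}\sigma(h_u^t)^\top$, an outer product of the postsynaptic adjoint with the presynaptic activity;
\item $W_{\mathrm{in}}$ gives $a^{t+1}_{v_{\mathrm{in}}}x^\top/\sqrt{d_x}$, located at the input site;
\item $W_{\mathrm{out}}$ gives a readout gradient located at $O$.
\end{itemize}

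The main obstacle is bookkeeping rather than depth. First, $\theta$ enters $r^T$ both through $h^T$ and explicitly, and that explicit dependence must land in the ``$h^T$ fixed'' term and not be double counted. Second, the readout and loss are evaluated at $h^T$ while the residual involves $F_\theta(h^T)$, a $(T{+}1)$-th application. I would make sure that $J_T$ appears only in the terminal adjoint and that no extra time step is introduced in the sum.
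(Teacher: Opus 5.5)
Your proposal is correct and follows the paper's own argument. The paper likewise forms the Lagrangian on the unrolled constraints, imposes stationarity in $h^T$ and $h^t$, differentiates in $\theta$ at the stationary multipliers, and gets locality from $F_\theta$ being a sum of edge messages; your explicit blocks reproduce \texttt{LocalPullback} and the outer-product updates of \cref{alg:spatialpc-training}. Your remark that $h^0$ is fixed, so $a^0$ is computed but never enters the gradient, is a harmless sharpening of the paper's ``$t<T$''; note only that the $s_{uv}a_v^{t+1}$ term of your reverse message belongs inside the sum over $v\in\ch(u)$.
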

\begin{proof}
Apply the Lagrangian \cref{eq:pc-lagrangian} to the constraints \cref{eq:unrolled-constraint}. Stationarity in $h^T$ gives \cref{eq:terminal-apd}, stationarity in $h^t$ for $t<T$ gives \cref{eq:finite-apd-recursion}, and differentiating in $\theta$ gives \cref{eq:finite-apd-param}; \cref{app:training-derivation} carries out the computation. Locality holds because $F_\theta$ is a sum of edge messages, so $J_t^\top$ and $\partial F_\theta/\partial\theta$ are sums of edge-local blocks.
\end{proof}

\Cref{prop:finite-spatialpc-apd} is the adjoint method applied to the unrolled computation; the adjoint pass needs no communication beyond the edges already present. With $\lambda>0$ one parameter update uses $T$ forward message phases, one residual evaluation, one pullback for $J_T^\top r^T$, and $T$ reverse phases, for $2T+2$ local message phases in a strictly serial accounting. The implementation is verified against automatic differentiation to double precision in \cref{app:gradient-verification}.

\subsection{Relation to predictive coding}
\label{sec:relation-to-pc}
SpatialPC is predictive coding in the constrained sense of \cref{eq:constrained-pc}. The objective contains the PC energy evaluated at the finite-round state, every operation is local to an edge, each weight update has the form of a postsynaptic credit signal times a presynaptic activity, and the settled adjoint equation \cref{eq:settled-adjoint} is the state-stationarity condition of augmented-Lagrangian PC, with $a$ equal to minus the augmented multiplier (\cref{app:apd-pcalm}).

It is not a relaxation-based PC network. There is no inference phase: states are produced by $T$ synchronous rounds from $h^0=0$ with nothing clamped, the target never acts on the states, and credit is carried by a separate adjoint variable rather than by prediction errors relaxing under a clamped output. The reverse recursion \cref{eq:finite-apd-recursion} is reverse-mode differentiation of the unrolled computation, so APD computes the gradient of \cref{eq:spatial-objective} exactly rather than approximating it at an inference equilibrium, and the feasibility term uses the PC energy as a penalty that pulls the finite-round state toward a fixed point of $F_\theta$, not as the dynamics. What SpatialPC changes is the graph on which exact local credit is propagated, not the local rule. The separation of \cref{sec:spatialpc} and the analysis of \cref{sec:finite-credit} apply equally to relaxation-based PC dynamics run on the same compiled graph, since the light-cone bound constrains every local algorithm and the settled adjoint is the fixed point those dynamics approach.

The gradient of \cref{prop:finite-spatialpc-apd} is exact for the finite-round objective. How close the finite-round credit is to the credit of the fully settled computation is a separate question, and it is the subject of the next section.

\section{Credit Fidelity Beyond Reachability}
\label{sec:finite-credit}

\Cref{sec:spatialpc} determines when loss information can reach a state; \cref{sec:dynamics} gives an exact gradient for a fixed round budget. What remains is the relation between the two: how many rounds the finite-round credit needs before it agrees with the credit of the fully settled computation. The accuracy of arriving credit depends on every reverse path, not only on the shortest one. On a settled graph APD makes this dependence explicit, and the identities below separate a geometric quantity, reachability, from a dynamical one, the decay of Jacobian powers.

\subsection{Finite reverse-path expansion}
\label{sec:reverse-expansion}
Fix a settled state $h^\star$ and set
\begin{equation}
    J=\left.\frac{\partial F_\theta(h;x)}{\partial h}\right|_{h=h^\star},
    \qquad
    b=\nabla_h\ell(h^\star_O,y).
\end{equation}
The settled adjoint solves $a^\star=b+J^\top a^\star$, and synchronous APD from $a^0=0$ is
\begin{equation}
    a^{k+1}=b+J^\top a^k .
    \label{eq:apd-jacobi}
\end{equation}
In a topological order $J$ is strictly block lower triangular, and $(J^k)_{wv}\neq0$ requires a directed path of length $k$ from $v$ to $w$. Hence $J^{D+1}=0$ with $D=D_{\mathrm{seq}}(G)$.

\begin{proposition}[Reverse-path expansion and tail identity]
\label{prop:finite-neumann}
For every $K\ge0$,
\begin{equation}
    a^K=\sum_{k=0}^{K-1}(J^\top)^k b,
    \qquad
    a^\star=(I-J^\top)^{-1}b=\sum_{k=0}^{D}(J^\top)^k b,
    \label{eq:reverse-path-expansion}
\end{equation}
and the truncation error is
\begin{equation}
    a^\star-a^K
    =\sum_{k=K}^{D}(J^\top)^k b
    =(J^\top)^K a^\star .
    \label{eq:apd-tail}
\end{equation}
In particular $a^K=a^\star$ for all $K\ge D+1$.
\end{proposition}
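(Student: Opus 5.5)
The plan is to prove the three identities in order, using only the Jacobi recursion \cref{eq:apd-jacobi} and the nilpotency $J^{D+1}=0$ established just before the statement.

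\emph{The finite expansion.} I would prove $a^K=\sum_{k=0}^{K-1}(J^\top)^k b$ by induction on $K$. The base case is $a^0=0$, the empty sum. For the inductive step, \cref{eq:apd-jacobi} gives
\[
a^{K+1}=b+J^\top\sum_{k=0}^{K-1}(J^\top)^k b=\sum_{k=0}^{K}(J^\top)^k b .
\]

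\emph{The settled adjoint.} Since $J$ is strictly block lower triangular in a topological order, $J^\top$ is nilpotent with $(J^\top)^{D+1}=(J^{D+1})^\top=0$. I would then verify the telescoping identity
\[
(I-J^\top)\sum_{k=0}^{D}(J^\top)^k=I-(J^\top)^{D+1}=I,
\]
which holds on both sides, so $I-J^\top$ is invertible with inverse equal to the finite Neumann sum. Consequently $a^\star=b+J^\top a^\star$ has the unique solution $a^\star=(I-J^\top)^{-1}b=\sum_{k=0}^{D}(J^\top)^k b$.

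\emph{The tail identity.} Compute
\[
(J^\top)^K a^\star=\sum_{k=0}^{D}(J^\top)^{k+K}b=\sum_{k=K}^{D+K}(J^\top)^k b .
\]
Every term with $k>D$ vanishes, so this equals $\sum_{k=K}^{D}(J^\top)^k b$, which is $a^\star-a^K$ by the first identity. This argument assumes $K\le D+1$. For $K\ge D+1$ the sum $\sum_{k=K}^{D}$ is empty and $(J^\top)^K=0$, so both sides are zero. The same observation gives $a^K=a^\star$ for $K\ge D+1$, because $a^K$ then contains every nonzero term of the Neumann sum.

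\emph{Potential obstacle.} The only step needing care is the nilpotency claim. It rests on $J$ being strictly block lower triangular, which holds because $f_v$ depends only on $h_{\pa(v)}$, and on the DAG having no path longer than $D$. Both facts are stated just before the proposition, so I expect no real difficulty. A minor point is the input node, where $F_\theta$ injects $x$ and does not depend on $h$; its row of $J$ is zero, which is consistent with the triangular structure.
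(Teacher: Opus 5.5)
Your proof is correct and follows essentially the same route as the paper. It proves the first identity by induction on the Jacobi recursion, obtains invertibility of $I-J^\top$ from the finite Neumann sum via nilpotency $J^{D+1}=0$, and derives the tail identity by moving $(J^\top)^K$ through the sum and discarding powers above $D$; your explicit handling of the case $K\ge D+1$, where the tail sum is empty, is a small detail the paper leaves implicit.
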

\begin{proof}
The first identity follows from \cref{eq:apd-jacobi} by induction. Since $J^\top$ is nilpotent, $I-J^\top$ is invertible with inverse $\sum_{k\le D}(J^\top)^k$, which gives the second. Subtracting, $a^\star-a^K=\sum_{k=K}^{D}(J^\top)^kb=(J^\top)^K\sum_{j=0}^{D-K}(J^\top)^jb=(J^\top)^Ka^\star$, using $(J^\top)^jb=0$ for $j>D$.
\end{proof}

The $k$th term $(J^\top)^kb$ sums the Jacobian products along all reverse paths of length exactly $k$ from the output, so APD accumulates credit shell by shell, and what it has not yet delivered after $K$ rounds is the settled adjoint pushed back through $K$ further edges. Two consequences follow: a causal one, which depends only on the graph, and a quantitative one, which depends on $J$.

\subsection{Causal arrival and the effective credit radius}
\label{sec:credit-radius}
Let $d_O(v)$ be the length of the shortest directed path from $v$ to an output node on which $b$ is nonzero.

\begin{proposition}[Credit causal cone]
\label{prop:credit-causal-cone}
$a_v^K=0$ whenever $d_O(v)\ge K$. Output-dependent credit therefore cannot appear at $v$ before the $(d_O(v)+1)$st APD iterate, and $D_{\mathrm{reach}}+1$ iterates are necessary before every state can carry nonzero credit.
\end{proposition}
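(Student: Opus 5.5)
The plan is to read the claim off the reverse-path expansion of \cref{prop:finite-neumann}, $a^K=\sum_{k=0}^{K-1}(J^\top)^k b$, and show that each summand vanishes at $v$ when $d_O(v)\ge K$. The block $v$ of $(J^\top)^k b$ is $\sum_w \bigl((J^k)_{wv}\bigr)^\top b_w$. Here $b_w\neq0$ only at output nodes $w$ on which the forcing is nonzero. As noted just before \cref{prop:finite-neumann}, $(J^k)_{wv}\neq0$ requires a directed path of length exactly $k$ from $v$ to $w$.

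I would first make that path statement precise by induction on $k$. The block $J_{w\leftarrow u}$ is nonzero only if $(u,w)\in E$, so $(J^k)_{wv}$ is a sum over directed walks $v=u_0\to u_1\to\dots\to u_k=w$ of products $J_{u_k\leftarrow u_{k-1}}\cdots J_{u_1\leftarrow u_0}$. In a DAG every such walk is a path. Therefore, if $v$ has no directed path of length $k$ to any node with $b_w\neq0$, the $k$th term vanishes at $v$.

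For $k\le K-1$ and $d_O(v)\ge K$, every path from $v$ to a forcing node has length at least $d_O(v)>k$. Hence every term with $k<K$ is zero at $v$, and $a^K_v=0$.

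Both consequences then follow directly. Nonzero credit at $v$ needs $K>d_O(v)$, that is, at least the $(d_O(v)+1)$st iterate. Taking the maximizing $v$ gives the requirement of $D_{\mathrm{reach}}+1$ iterates.

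One point needs care: $d_O$ uses only outputs with $b\neq0$, while $D_{\mathrm{reach}}$ uses all of $O$. So $d_O(v)\ge$ the distance to $O$, and the final statement follows because this inequality only makes $a^K_v=0$ hold more widely. I would also stress that the claim is one-sided: the statement asserts vanishing, not nonvanishing at $K=d_O(v)+1$, since Jacobian cancellations could still give zero then. I expect no real obstacle; the only step worth writing out is the walk expansion of $(J^k)_{wv}$, which is standard. An alternative is direct induction on the Jacobi recursion $a^{k+1}_v=b_v+\sum_{w\in\ch(v)}J_{w\leftarrow v}^\top a^k_w$, using $d_O(w)\ge d_O(v)-1$ for children and $b_v=0$ when $d_O(v)\ge1$.
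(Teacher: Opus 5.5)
Your proposal is correct and is essentially the paper's argument: both read the vanishing of $a_v^K$ off the reverse-path expansion of \cref{prop:finite-neumann}, since every term at $v$ needs a directed path of length at most $K-1$ from $v$ to the support of $b$. Your walk expansion of $(J^k)_{wv}$ and your remark that $d_O(v)$ is at least the graph distance from $v$ to $O$ (so the $D_{\mathrm{reach}}+1$ claim holds even when $b$ vanishes at some outputs) make explicit what the paper's one-line proof leaves implicit.
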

\begin{proof}
Every term of $a^K_v$ in \cref{eq:reverse-path-expansion} corresponds to a reverse path of length at most $K-1$ from a node in the support of $b$ to $v$, and no such path exists when $d_O(v)\ge K$.
\end{proof}

\Cref{prop:credit-causal-cone} is the APD form of \cref{lem:light-cone}, with graph distance in place of physical distance. The rounds required beyond causal arrival are governed by the tail. Define the normalized omitted credit mass and the \emph{effective credit radius}
\begin{equation}
    \tau_K(J,b)
    :=\frac{\|a^\star-a^K\|}{\|a^\star\|}
    =\frac{\|(J^\top)^Ka^\star\|}{\|a^\star\|},
    \qquad
    R_\epsilon(J,b)
    :=\min\{K:\tau_K(J,b)\le\epsilon\}.
    \label{eq:effective-credit-radius}
\end{equation}
$D_{\mathrm{reach}}$ depends on the graph and the output placement only; $R_\epsilon$ depends in addition on the learned Jacobian and on the loss forcing. The two can be bounded independently.

\begin{proposition}[Tail bounds]
\label{prop:tail-bounds}
For every $K$, $\tau_K(J,b)\le\|J^K\|_2\le\|J\|_2^K$. Consequently
\begin{equation}
    R_\epsilon(J,b)\ \le\ \min\bigl\{K:\|J^K\|_2\le\epsilon\bigr\}\ \le\ D+1,
    \label{eq:radius-bound}
\end{equation}
and if $\|J\|_2\le q<1$ then $R_\epsilon(J,b)\le\lceil\log(1/\epsilon)/\log(1/q)\rceil$.
\end{proposition}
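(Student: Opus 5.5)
The plan is to work directly from the tail identity of \cref{prop:finite-neumann}. We have $a^\star-a^K=(J^\top)^Ka^\star$. Taking norms gives $\|a^\star-a^K\|\le\|(J^\top)^K\|_2\,\|a^\star\|$. The spectral norm is invariant under transposition, so $\|(J^\top)^K\|_2=\|(J^K)^\top\|_2=\|J^K\|_2$. Dividing by $\|a^\star\|$ yields $\tau_K(J,b)\le\|J^K\|_2$, and submultiplicativity of the operator norm gives $\|J^K\|_2\le\|J\|_2^K$. The one point to handle is the degenerate case $a^\star=0$, where $\tau_K$ is undefined. Since $a^\star=(I-J^\top)^{-1}b$ is nonzero exactly when $b\neq0$, I would either assume $b\ne0$ implicitly or adopt the convention $\tau_K=0$ when $a^\star=0$, so that every inequality holds trivially.

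For the bound \cref{eq:radius-bound}, I would use a monotone-set argument. Let $K_0:=\min\{K:\|J^K\|_2\le\epsilon\}$. Then $\tau_{K_0}\le\|J^{K_0}\|_2\le\epsilon$, so $K_0$ lies in the set whose minimum defines $R_\epsilon$, and hence $R_\epsilon\le K_0$. For the second inequality, I would invoke the nilpotency established before \cref{prop:finite-neumann}. Since $J$ is strictly block lower triangular in a topological order, $J^{D+1}=0$, so $\|J^{D+1}\|_2=0\le\epsilon$. This gives $K_0\le D+1$ for every $\epsilon\ge0$.

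For the contraction case, suppose $\|J\|_2\le q<1$. Then $\tau_K\le q^K$, and $q^K\le\epsilon$ as soon as $K\ge\log(1/\epsilon)/\log(1/q)$, noting that $\log(1/q)>0$. Taking $K=\lceil\log(1/\epsilon)/\log(1/q)\rceil$ puts $K$ in the defining set, so $R_\epsilon$ is at most this value. I would restrict to $0<\epsilon<1$ so the logarithm is positive. If $\epsilon\ge1$, then $R_\epsilon=0$, because $\tau_0=1$. In that case I would read the ceiling as $0$, or note that the statement implicitly assumes $\epsilon<1$. The case $q=0$ means $J=0$, which gives $\tau_1=0$, and I would handle it with the same convention.

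The only step that needs any care is the bookkeeping of edge cases: $a^\star=0$, $\epsilon\ge1$, and $q=0$. The core inequalities follow in one line each from \cref{prop:finite-neumann} together with norm submultiplicativity and transpose invariance.
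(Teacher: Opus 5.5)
Your proof is correct and follows the same route as the paper, whose proof is simply ``immediate from the tail identity $a^\star-a^K=(J^\top)^Ka^\star$, submultiplicativity, and $J^{D+1}=0$.'' The transpose-invariance step and the edge-case bookkeeping are details the paper leaves implicit, and your caution is justified: the ceiling formula can fail literally at $q=0$ (where $R_\epsilon=1$ but the formula gives $0$) and for large $\epsilon>1$, so that bound should be read with $0<q<1$ and $\epsilon\le 1$.
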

\begin{proof}
Immediate from \cref{eq:apd-tail}, submultiplicativity, and $J^{D+1}=0$.
\end{proof}

Contraction is sufficient for a fast tail but not necessary: because $J$ is nilpotent, $\|J^K\|_2$ vanishes at $K=D+1$ whatever the value of $\|J\|_2$, and it can become small well before that even when $\|J\|_2>1$. What governs the tail is the decay of the Jacobian powers along the settled adjoint, which \cref{sec:experiments} measures through the alignment of finite-round and settled gradients.

\subsection{From adjoint error to update fidelity}
\label{sec:angle}
For a local linear parameter block $W_{uv}$ at a settled state, the exact and $K$-round gradients share the presynaptic factor,
\begin{equation}
    g_{uv}^\star=\kappa\,a_v^\star z_u^\top,
    \qquad
    g_{uv}^K=\kappa\,a_v^K z_u^\top,
    \qquad z_u\neq0 .
\end{equation}

\begin{corollary}[Local gradient-angle control]
\label{cor:gradient-angle}
If $\|a_v^K-a_v^\star\|\le\epsilon\|a_v^\star\|$ with $0\le\epsilon<1$, then
\begin{equation}
    \cos\angle\bigl(g_{uv}^K,g_{uv}^\star\bigr)
    =\cos\angle\bigl(a_v^K,a_v^\star\bigr)
    \ge\sqrt{1-\epsilon^2},
    \label{eq:gradient-angle-bound}
\end{equation}
that is, the angle between the local updates is at most $\arcsin\epsilon$.
\end{corollary}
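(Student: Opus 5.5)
The plan has two steps: reduce the gradient angle to the adjoint angle, then bound the adjoint angle by elementary geometry.

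\emph{Step 1 (matrix angle equals adjoint angle).} The gradients are rank-one matrices with the common right factor $z_u\neq0$. Using the Frobenius inner product,
\begin{equation}
\langle a z_u^\top, a' z_u^\top\rangle_F=\operatorname{tr}\bigl(z_u a^\top a' z_u^\top\bigr)=(a^\top a')\|z_u\|^2 ,
\end{equation}
and likewise $\|a z_u^\top\|_F=\|a\|\,\|z_u\|$. The positive factors $\kappa^2$ and $\|z_u\|^2$ then cancel in the cosine. This gives the equality $\cos\angle(g^K_{uv},g^\star_{uv})=\cos\angle(a^K_v,a^\star_v)$, provided both vectors are nonzero.

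Nonvanishing follows from the hypothesis. Since $\epsilon<1$, the condition $\|a^K_v-a^\star_v\|\le\epsilon\|a^\star_v\|$ gives $\|a^K_v\|\ge(1-\epsilon)\|a^\star_v\|>0$, assuming $a^\star_v\neq0$. If instead $a^\star_v=0$, the hypothesis forces $a^K_v=0$, so both gradients vanish and the angle is vacuous. I will note this degenerate case and exclude it.

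\emph{Step 2 (the geometric bound).} Write $a=a^\star_v$ and $e=a^K_v-a$, so that $\|e\|\le\epsilon\|a\|$. The target is the ball lemma: every vector in the closed ball of radius $\epsilon\|a\|$ around $a$ makes an angle at most $\arcsin\epsilon$ with $a$. I would prove it in the following order.

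1. Decompose $e=\alpha a/\|a\|+e_\perp$. Then $\cos^2\angle=(\|a\|+\alpha)^2/\bigl((\|a\|+\alpha)^2+\|e_\perp\|^2\bigr)$, and $\|a\|+\alpha>0$ because $|\alpha|\le\epsilon\|a\|<\|a\|$.
2. The claim $\cos^2\ge1-\epsilon^2$ is then equivalent to $\|e_\perp\|^2\le\epsilon^2\bigl((\|a\|+\alpha)^2+\|e_\perp\|^2\bigr)$, that is, $(1-\epsilon^2)\|e_\perp\|^2\le\epsilon^2(\|a\|+\alpha)^2$.
3. Use $\|e_\perp\|^2\le\epsilon^2\|a\|^2-\alpha^2$, which is where the constraint enters. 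It then suffices to show the following inequality in the single scalar $\alpha$:
\begin{equation}
(1-\epsilon^2)(\epsilon^2\|a\|^2-\alpha^2)\le\epsilon^2(\|a\|+\alpha)^2 .
\end{equation}
4. Expanding and simplifying, this becomes $0\le(\epsilon^2\|a\|+\alpha)^2$, which always holds. Equality occurs at the tangent point $\alpha=-\epsilon^2\|a\|$.
5. Since the cosine is positive, taking square roots gives $\cos\angle\ge\sqrt{1-\epsilon^2}$, and hence $\angle\le\arcsin\epsilon$.

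The main obstacle is the algebra in item 4. A sign slip there is easy to make, so I would check it against the tangent configuration as a fallback. An alternative route is geometric: the tangent from the origin to the sphere of radius $\epsilon\|a\|$ centred at $a$ subtends half-angle $\arcsin\epsilon$, because the radius to the tangent point is perpendicular to the tangent line.
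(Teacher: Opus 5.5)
Your proposal is correct and follows the paper's proof. It uses the same Frobenius factorization $\langle g^K,g^\star\rangle_F=\kappa^2\|z_u\|^2\langle a^K_v,a^\star_v\rangle$, the same split of the error into components parallel and orthogonal to $a^\star_v$, and the same worst case $\alpha=-\epsilon^2\|a^\star_v\|$; the only differences are that you finish by completing the square, $(\alpha+\epsilon^2\|a\|)^2\ge0$, where the paper differentiates $f(\alpha)=(1+\alpha)^2/(1+2\alpha+\epsilon^2)$, and that you make explicit the nonvanishing of $a^K_v$ and the degenerate case $a^\star_v=0$, which the paper leaves implicit.
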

The proof is in \cref{app:angle-proof}; the bound is attained when $a_v^K$ is tangent to the ball of radius $\epsilon\|a_v^\star\|$ about $a_v^\star$. With \cref{eq:effective-credit-radius}, $K\ge R_\epsilon$ rounds give every local update whose adjoint error is at most the global tolerance an angle of at most $\arcsin\epsilon$ from the settled update.

\subsection{Forward--reverse duality and the residual as a certificate}
\label{sec:duality}
The reverse expansion has a forward counterpart, which connects the feasibility residual penalized in \cref{eq:spatial-objective} to the distance between the finite-round and settled states.
\begin{proposition}[Forward truncation and residual identity]
\label{prop:forward-reverse}
Let the local dynamics be affine, $F(h)=c+Jh$, with $h^0=0$. For every $T\ge0$,
\begin{equation}
    h^T=\sum_{k=0}^{T-1}J^k c,
    \qquad
    h^\star=\sum_{k=0}^{D}J^k c,
    \qquad
    h^\star-h^T=J^T h^\star,
    \label{eq:forward-path-expansion}
\end{equation}
the terminal residual is the first omitted forward shell,
\begin{equation}
    r^T=h^T-F(h^T)=-J^T c,
    \label{eq:residual-shell}
\end{equation}
and the distance to the settled state is determined by the residual,
\begin{equation}
    h^\star-h^T=-(I-J)^{-1}r^T,
    \qquad
    \|h^\star-h^T\|\le\Bigl(\sum_{k=0}^{D}\|J^k\|_2\Bigr)\|r^T\| .
    \label{eq:residual-certificate}
\end{equation}
\end{proposition}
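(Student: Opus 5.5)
The plan mirrors the proof of \cref{prop:finite-neumann}, applied to the forward iteration instead of the adjoint one. Since $J$ is the state Jacobian of a DAG, written in topological order it is strictly block lower triangular, so $J^{D+1}=0$ with $D=D_{\mathrm{seq}}(G)$ (as noted before \cref{prop:finite-neumann}). With $F(h)=c+Jh$ and $h^0=0$, I first prove $h^T=\sum_{k=0}^{T-1}J^kc$ by induction on $T$. The base case is $h^0=0$ (the empty sum). The step is $h^{T+1}=c+J\sum_{k<T}J^kc=\sum_{k\le T}J^kc$.

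For the settled state, nilpotency makes $I-J$ invertible with inverse $\sum_{k=0}^{D}J^k$. The fixed-point equation $h^\star=c+Jh^\star$ then gives $h^\star=\sum_{k\le D}J^kc$. Subtracting the two sums gives
\[
h^\star-h^T=\sum_{k=T}^{D}J^kc=J^T\sum_{j=0}^{D-T}J^jc=J^Th^\star,
\]
using $J^jc=0$ for $j>D$ so that the inner sum may be extended to $D$. The case $T>D$ needs no separate treatment, since both sides vanish.

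For the residual, I compute directly:
\[
r^T=h^T-c-Jh^T=\sum_{k<T}J^kc-c-\sum_{k=1}^{T}J^kc=-J^Tc,
\]
because the sums telescope. I check $T=0$ separately: $r^0=-c=-J^0c$.

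For the certificate, I rewrite the residual as $r^T=(I-J)h^T-c$. Substituting $c=(I-J)h^\star$ gives $r^T=-(I-J)(h^\star-h^T)$. Applying $(I-J)^{-1}$ yields $h^\star-h^T=-(I-J)^{-1}r^T$. The norm bound then follows from $(I-J)^{-1}=\sum_{k\le D}J^k$ and the triangle inequality for the operator norm.

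No step is a real obstacle. The only point needing care is justifying the nilpotency $J^{D+1}=0$ and the index bookkeeping in the telescoping when $T$ exceeds $D$. Both are handled exactly as in the proof of \cref{prop:finite-neumann}.
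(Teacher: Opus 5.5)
Your proof is correct and follows the paper's argument: induction gives the expansions, nilpotency $J^{D+1}=0$ gives the inverse and the tail identity, and the same telescoping gives $r^T=-J^Tc$. The only variation is in the certificate step. The paper telescopes the series to show $(I-J)(h^\star-h^T)=J^Tc=-r^T$, whereas you use $r^T=(I-J)h^T-c$ and $c=(I-J)h^\star$ directly, which is slightly shorter and shows that $h^\star-h=-(I-J)^{-1}\bigl(h-F(h)\bigr)$ holds for every state $h$, not only the $T$-round iterate.
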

\begin{proof}
The expansions follow from $h^{t+1}=c+Jh^t$ by induction and from nilpotency as in \cref{prop:finite-neumann}. Then $r^T=\sum_{k<T}J^kc-c-\sum_{k=1}^{T}J^kc=-J^Tc$, and $(I-J)(h^\star-h^T)=\sum_{k=T}^{D}J^kc-\sum_{k=T+1}^{D+1}J^kc=J^Tc=-r^T$.
\end{proof}

Forward inference accumulates forward paths by length and APD accumulates reverse paths by length; the truncation errors $J^Th^\star$ and $(J^\top)^Ka^\star$ are governed by the same Jacobian powers. The feasibility penalty acts on $r^T$, which by \cref{eq:residual-certificate} is a computable certificate for the distance between the finite-round state and the settled state; for the nonlinear dynamics used in practice the identities hold to first order about $h^T$, which is the regime in which the residual is small. This is the mechanism by which the feasibility weight should control credit accuracy, and the next section tests it, together with the structural separation and the dependence of fidelity on the round budget.

\section{Experiments}
\label{sec:experiments}

This section now empirically validates three testable claims: that the implemented compiler realizes the structural separation at scale (\cref{thm:spatial-compiler,prop:moore-edges}); that a round budget of order $\sqrt N$ delivers gradients aligned with the settled gradients; and that beyond causal arrival, credit accuracy is governed by the feasibility residual and the round budget (\cref{prop:forward-reverse,prop:tail-bounds}) rather than by reachability. \Cref{sec:structural-scaling} tests the first with structural measurements to $N=65{,}536$; \cref{sec:state-scaling,sec:lambda-ablation,sec:round-ablation} test the second and third with task experiments at $N\le256$; \cref{sec:longrun} reports training over twenty epochs.

\subsection{Setup}
\label{sec:setup}
\paragraph{Data and optimization.}
MNIST \cite{mnist} and Fashion-MNIST \cite{fashionmnist} are used with their standard 60{,}000/10{,}000 splits, flattened to $d_x=784$ and normalized, without augmentation. All results are mean $\pm$ standard deviation over three seeds. SpatialPC is trained with Adam \cite{kingma2017adam} at learning rate $10^{-3}$ from i.i.d.\ $\mathcal N(0,1)$ base weights.

\paragraph{Reference configuration.}
The reference model is a $16\times16$ Moore graph ($N=256$, 382 learned state-to-state matrices), width $H=128$, ReLU activations, $T=32$ rounds, and feasibility weight $\lambda=100$. Two readout protocols are reported: \emph{SpatialPC-v1} uses cross-entropy with readout scale $\kappa_H=H^{-1/2}$, and \emph{SpatialPC-v2} uses one-hot mean-squared error with $\kappa_H=H^{-1}$.

\paragraph{Measured quantities.}
\emph{Finite-round accuracy} is computed from $h^T$. \emph{Settled accuracy} uses the same parameters after an exact topological sweep. \emph{Gradient cosine} is the cosine between the finite-round APD task gradient (the $\ell$-dependent part of \cref{eq:finite-apd-param}, excluding the direct feasibility term) and the settled task gradient $\nabla_\theta\ell(\readout(h^\star),y)$ of the same network, so it measures how closely the $T$-round credit matches the credit of the fully settled computation. \emph{Residual RMS} is the root-mean-square entry of the terminal residual $r^T$, the quantity penalized in \cref{eq:spatial-objective}.

\subsection{Structural scaling to 65{,}536 states}
\label{sec:structural-scaling}

Moore graphs were constructed for $N\in\{16,64,256,1024,4096,16384,65536\}$, and $|E|$, $D_{\mathrm{seq}}$, and $D_{\mathrm{reach}}$ were computed exactly.
The separation grows from $2.5\times$ at $N=16$ to $254\times$ at $N=65{,}536$ (\cref{fig:structural-scaling,tab:structural-scaling}), following the $N^{1/2}$ exponent of \cref{cor:optimal-exponent}. At $N=4{,}096$ a 4{,}095-edge backbone has worst-node output distance 66; at $N=65{,}536$ a 65{,}535-edge backbone has worst-node output distance 258. The edge counts match \cref{prop:moore-edges} at every size. The remaining experiments ask whether a round budget on this scale yields usable gradients.

\begin{figure}[t]
    \centering
    \includegraphics[width=0.72\linewidth]{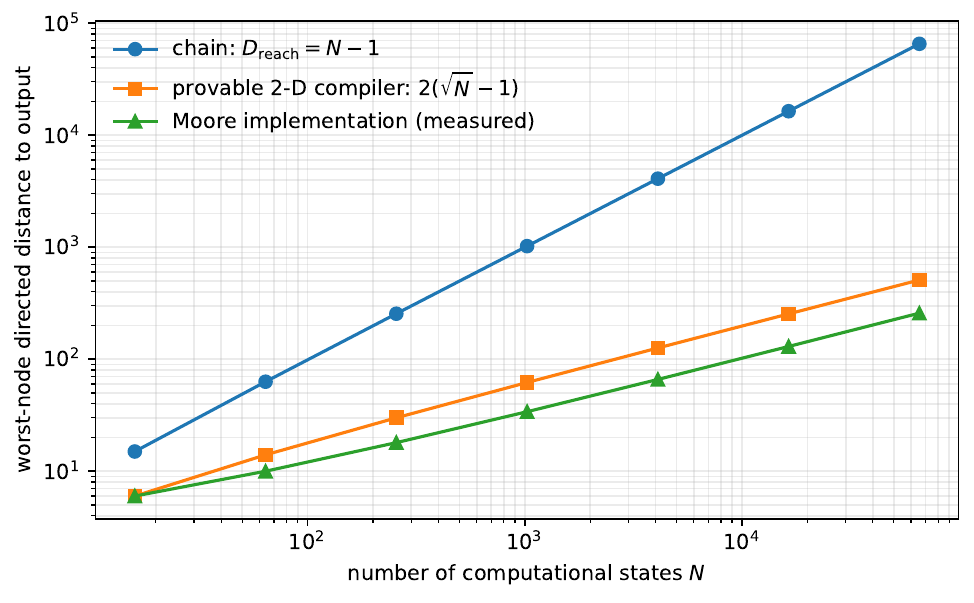}
    \caption{Represented depth and reachability depth versus state count. A chain has $D_{\mathrm{reach}}=N-1$; the snake-ordered grid of \cref{thm:spatial-compiler} has $D_{\mathrm{reach}}=2(\sqrt N-1)$; the Moore compiler follows $D_{\mathrm{reach}}=\sqrt N+2$ over the measured sizes, with $D_{\mathrm{seq}}=N-1$ in all cases.}
    \label{fig:structural-scaling}
\end{figure}

\begin{table}[htbp]
\centering
\caption{Structural scaling of the Moore compiler. $|E|=\tfrac32N-2$ (\cref{prop:moore-edges}), $D_{\mathrm{seq}}=N-1$ is the backbone length, $D_{\mathrm{reach}}=\sqrt N+2$ is the exact worst-node distance to the output, and the last column is $D_{\mathrm{seq}}/D_{\mathrm{reach}}$.}
\label{tab:structural-scaling}
\small
\begin{tabular}{rrrrrr}
\toprule
$\bm{m}$ & $\bm{N=m^2}$ & $\bm{|E|}$ & $\bm{D_{\mathrm{seq}}}$ & $\bm{D_{\mathrm{reach}}}$ & \textbf{separation} \\
\midrule
4   & 16     & 22     & 15     & 6   & $2.5\times$ \\
8   & 64     & 94     & 63     & 10  & $6.3\times$ \\
16  & 256    & 382    & 255    & 18  & $14.2\times$ \\
32  & 1,024  & 1,534  & 1,023  & 34  & $30.1\times$ \\
64  & 4,096  & 6,142  & 4,095  & 66  & $62.0\times$ \\
128 & 16,384 & 24,574 & 16,383 & 130 & $126.0\times$ \\
256 & 65,536 & 98,302 & 65,535 & 258 & $254.0\times$ \\
\bottomrule
\end{tabular}
\end{table}

\subsection{Gradient fidelity under a geometric round budget}
\label{sec:state-scaling}

The number of states is increased with the round budget tied to the reachability scale,
\begin{equation}
    T=2\sqrt N,
    \qquad
    (N,T)\in\{(16,8),(64,16),(256,32)\}.
    \label{eq:geometric-budget}
\end{equation}

Settled-gradient alignment of SpatialPC-v1 does not degrade as the backbone grows while the budget stays at $2\sqrt N$ (\cref{tab:ablation-state-scaling}). On MNIST the gradient cosine is $0.804$, $0.927$, and $0.928$ at $N=16$, $64$, and $256$, and test accuracy rises from $91.71\%$ to $92.75\%$; on Fashion-MNIST the cosine is $0.860$, $0.937$, and $0.974$, with accuracy above $82\%$ throughout. At $N=256$ the forward computation receives 32 synchronous rounds although the backbone has 255 edges. SpatialPC-v2 also improves in accuracy with $N$ but its MNIST gradient cosine drops to $0.647$ at $N=256$; v1 is therefore the principal configuration below, and v2 is retained to show the sensitivity of credit fidelity to the loss and readout scaling.

\begin{table}[htbp]
\caption{Gradient fidelity under the geometric budget $T=2\sqrt N$ as the number of states grows ($H=128$, $\lambda=100$, one epoch). Gradient cosine compares the finite-round APD task gradient with the settled task gradient of the same network.}
\label{tab:ablation-state-scaling}
\centering
\small
\begin{tabular}{@{} llrr *{2}{r@{\,{\scriptsize$\pm$}\,}>{\scriptsize}l} @{}} 
\toprule
\textbf{Dataset} & \textbf{Method} & \textbf{\# of States} & \textbf{T} & \multicolumn{2}{c}{\textbf{Test Acc. (\%) $\uparrow$}} & \multicolumn{2}{c}{\textbf{Task-grad cos. to settled} $\uparrow$} \\
\midrule
MNIST         & SpatialPC-v1 &  16 &  8 & 91.71 & 0.14 & 0.8040 & 0.0122 \\
              &              &  64 & 16 & 92.27 & 0.08 & 0.9271 & 0.0139 \\
              &              & 256 & 32 & 92.75 & 0.23 & 0.9283 & 0.0086 \\
\addlinespace
              & SpatialPC-v2 &  16 &  8 & 84.44 & 0.97 & 0.8379 & 0.0116 \\
              &              &  64 & 16 & 89.90 & 0.23 & 0.9419 & 0.0082 \\
              &              & 256 & 32 & 91.22 & 0.27 & 0.6467 & 0.0279 \\
\midrule
Fashion-MNIST & SpatialPC-v1 &  16 &  8 & 82.24 & 0.12 & 0.8597 & 0.0230 \\
              &              &  64 & 16 & 82.73 & 0.12 & 0.9372 & 0.0038 \\
              &              & 256 & 32 & 82.85 & 0.18 & 0.9745 & 0.0008 \\
\addlinespace
              & SpatialPC-v2 &  16 &  8 & 75.36 & 0.21 & 0.8596 & 0.0045 \\
              &              &  64 & 16 & 79.55 & 0.40 & 0.9607 & 0.0051 \\
              &              & 256 & 32 & 80.66 & 0.18 & 0.8795 & 0.0231 \\
\bottomrule
\end{tabular}
\end{table}

\subsection{Feasibility controls credit accuracy}
\label{sec:lambda-ablation}

At $N=256$ and $T=32$ the feasibility weight is varied over $\lambda\in\{0,1,10,30,100\}$.
For MNIST v1, raising $\lambda$ from $0$ to $100$ lowers residual RMS from $0.0245$ to $0.0079$ and raises the gradient cosine from $0.353$ to $0.928$, while accuracy moves only from $92.67\%$ to $92.75\%$ (\cref{tab:ablation-lambda}). Fashion-MNIST behaves the same way: residual RMS falls from $0.0180$ to $0.0059$, the cosine rises from $0.628$ to $0.974$, and accuracy is unchanged within noise. Prediction quality thus saturates before the finite-round state yields accurate settled credit, and the feasibility term closes the gap, as \cref{prop:forward-reverse} predicts: the penalty acts on $r^T=-J^Tc$, the first omitted forward shell, and a small residual certifies a small distance to the settled state. The round budget is the second quantity identified in \cref{sec:finite-credit}.

\begin{table}[htbp]
\caption{Effect of the feasibility weight $\lambda$ at $N=256$, $T=32$, $H=128$ (one epoch). Accuracy is nearly constant while residual RMS falls and gradient cosine rises.}
\label{tab:ablation-lambda}
\centering
\small
\begin{tabular}{@{} llr *{3}{r@{\,{\scriptsize$\pm$}\,}>{\scriptsize}l} @{}} 
\toprule
\textbf{Dataset} & \textbf{Method} & $\bm{\lambda}$ & \multicolumn{2}{c}{\textbf{Test Acc. (\%)} $\uparrow$} & \multicolumn{2}{c}{\textbf{Task-grad cos. to settled} $\uparrow$} & \multicolumn{2}{c}{\textbf{Residual RMS} $\downarrow$} \\
\midrule
MNIST         & SpatialPC-v1 &   0 & 92.67 & 0.19 & 0.3525 & 0.1812 & 0.0245 & 0.0040 \\
              &              &   1 & 92.68 & 0.20 & 0.5057 & 0.2094 & 0.0204 & 0.0025 \\
              &              &  10 & 92.69 & 0.21 & 0.7688 & 0.0708 & 0.0144 & 0.0012 \\
              &              &  30 & 92.73 & 0.21 & 0.8637 & 0.0253 & 0.0112 & 0.0007 \\
              &              & 100 & 92.75 & 0.23 & 0.9283 & 0.0086 & 0.0079 & 0.0003 \\
\addlinespace
              & SpatialPC-v2 &   0 & 91.27 & 0.22 & 0.0872 & 0.0431 & 0.0272 & 0.0025 \\
              &              &   1 & 91.26 & 0.23 & 0.1105 & 0.0403 & 0.0242 & 0.0016 \\
              &              &  10 & 91.25 & 0.22 & 0.2297 & 0.0435 & 0.0168 & 0.0006 \\
              &              &  30 & 91.26 & 0.24 & 0.3800 & 0.0413 & 0.0125 & 0.0005 \\
              &              & 100 & 91.22 & 0.27 & 0.6467 & 0.0279 & 0.0083 & 0.0004 \\
\midrule
Fashion-MNIST & SpatialPC-v1 &   0 & 82.80 & 0.20 & 0.6282 & 0.2684 & 0.0180 & 0.0010 \\
              &              &   1 & 82.81 & 0.20 & 0.7676 & 0.1426 & 0.0153 & 0.0006 \\
              &              &  10 & 82.82 & 0.21 & 0.9157 & 0.0146 & 0.0107 & 0.0003 \\
              &              &  30 & 82.84 & 0.23 & 0.9509 & 0.0051 & 0.0083 & 0.0003 \\
              &              & 100 & 82.85 & 0.18 & 0.9745 & 0.0008 & 0.0059 & 0.0002 \\
\addlinespace
              & SpatialPC-v2 &   0 & 80.70 & 0.10 & 0.2182 & 0.0993 & 0.0150 & 0.0016 \\
              &              &   1 & 80.70 & 0.09 & 0.2755 & 0.1058 & 0.0138 & 0.0014 \\
              &              &  10 & 80.68 & 0.12 & 0.5058 & 0.0922 & 0.0102 & 0.0008 \\
              &              &  30 & 80.66 & 0.17 & 0.6935 & 0.0616 & 0.0079 & 0.0005 \\
              &              & 100 & 80.66 & 0.18 & 0.8795 & 0.0231 & 0.0053 & 0.0002 \\
\bottomrule
\end{tabular}
\end{table}

\subsection{Fidelity continues to improve past causal arrival}
\label{sec:round-ablation}

At $N=256$ the round budget is varied over $T\in\{8,16,32,64\}$. Residual RMS decreases monotonically with $T$ (\cref{tab:ablation-rounds}). On Fashion-MNIST v1 the gradient cosine rises from $0.795$ at $T=8$ to $0.990$ at $T=64$; on MNIST v1 it reaches $0.945$ at $T=16$ and $0.955$ at $T=64$, with a small non-monotonicity at $T=32$. The Moore graph has $D_{\mathrm{reach}}=18$ at $N=256$, so every state is reachable from $T=19$ onward, yet fidelity keeps improving through $T=64$. This is the gap between $D_{\mathrm{reach}}$ and $R_\epsilon(J,b)$ in \cref{eq:effective-credit-radius}: reachability fixes when credit can arrive, and the Jacobian-weighted tail fixes when it becomes accurate. The one-epoch results above leave open whether the gap stays controlled during training.

\begin{table}[htbp]
\caption{Effect of the round budget $T$ at $N=256$, $H=128$, $\lambda=100$ (one epoch). Every state is reachable once $T>D_{\mathrm{reach}}=18$, yet gradient fidelity keeps improving to $T=64$.}
\label{tab:ablation-rounds}
\centering
\small
\begin{tabular}{@{} llr *{3}{r@{\,{\scriptsize$\pm$}\,}>{\scriptsize}l} @{}} 
\toprule
\textbf{Dataset} & \textbf{Method} & $\bm{\lambda}$ & \multicolumn{2}{c}{\textbf{Test Acc. (\%)} $\uparrow$} & \multicolumn{2}{c}{\textbf{Task-grad cos. to settled} $\uparrow$} & \multicolumn{2}{c}{\textbf{Residual RMS} $\downarrow$} \\
\midrule
MNIST         & SpatialPC-v1 &  8 & 91.12 & 0.14 & 0.7009 & 0.0126 & 0.0249 & 0.0001 \\
              &              & 16 & 92.16 & 0.24 & 0.9451 & 0.0036 & 0.0125 & 0.0001 \\
              &              & 32 & 92.75 & 0.23 & 0.9283 & 0.0086 & 0.0079 & 0.0003 \\
              &              & 64 & 93.15 & 0.21 & 0.9550 & 0.0131 & 0.0047 & 0.0005 \\
\addlinespace
              & SpatialPC-v2 &  8 & 86.69 & 0.07 & 0.4624 & 0.0103 & 0.0169 & 0.0002 \\
              &              & 16 & 89.95 & 0.17 & 0.7785 & 0.0186 & 0.0106 & 0.0000 \\
              &              & 32 & 91.22 & 0.27 & 0.6467 & 0.0279 & 0.0083 & 0.0004 \\
              &              & 64 & 92.16 & 0.19 & 0.7794 & 0.0560 & 0.0061 & 0.0003 \\
\midrule
Fashion-MNIST & SpatialPC-v1 &  8 & 81.29 & 0.23 & 0.7946 & 0.0280 & 0.0199 & 0.0002 \\
              &              & 16 & 82.41 & 0.07 & 0.9689 & 0.0120 & 0.0102 & 0.0002 \\
              &              & 32 & 82.85 & 0.18 & 0.9745 & 0.0008 & 0.0059 & 0.0002 \\
              &              & 64 & 82.99 & 0.47 & 0.9904 & 0.0020 & 0.0031 & 0.0002 \\
\addlinespace
              & SpatialPC-v2 &  8 & 76.29 & 0.26 & 0.5018 & 0.1293 & 0.0145 & 0.0002 \\
              &              & 16 & 79.60 & 0.26 & 0.8684 & 0.0155 & 0.0086 & 0.0002 \\
              &              & 32 & 80.66 & 0.18 & 0.8795 & 0.0231 & 0.0053 & 0.0002 \\
              &              & 64 & 81.49 & 0.08 & 0.9246 & 0.0401 & 0.0036 & 0.0001 \\
\bottomrule
\end{tabular}
\end{table}

\subsection{Twenty epochs of training}
\label{sec:longrun}

The $N=256$, $T=32$, $H=128$ models are trained for 20 epochs and evaluated at epochs 1, 5, 10, and 20, alongside a depth-128, width-128 chain trained by backpropagation as an external reference.

After 20 epochs SpatialPC-v1 reaches $97.64\%$ on MNIST and $88.10\%$ on Fashion-MNIST, against $97.55\%$ and $88.03\%$ for the backpropagation reference (\cref{fig:exp2,tab:exp2}). The task-gradient cosines at epoch 20 are $0.883$ and $0.920$. Finite-round and settled accuracies stay within one point of each other throughout, while the gradient cosine drifts downward during training even though the residual stays flat or decreases: learning reshapes the weighted reverse-path distribution, and hence $R_\epsilon$, independently of the forward residual.

\begin{figure}[htbp]
    \centering
    \begin{subfigure}[b]{0.49\linewidth}
        \centering
        \includegraphics[width=\linewidth]{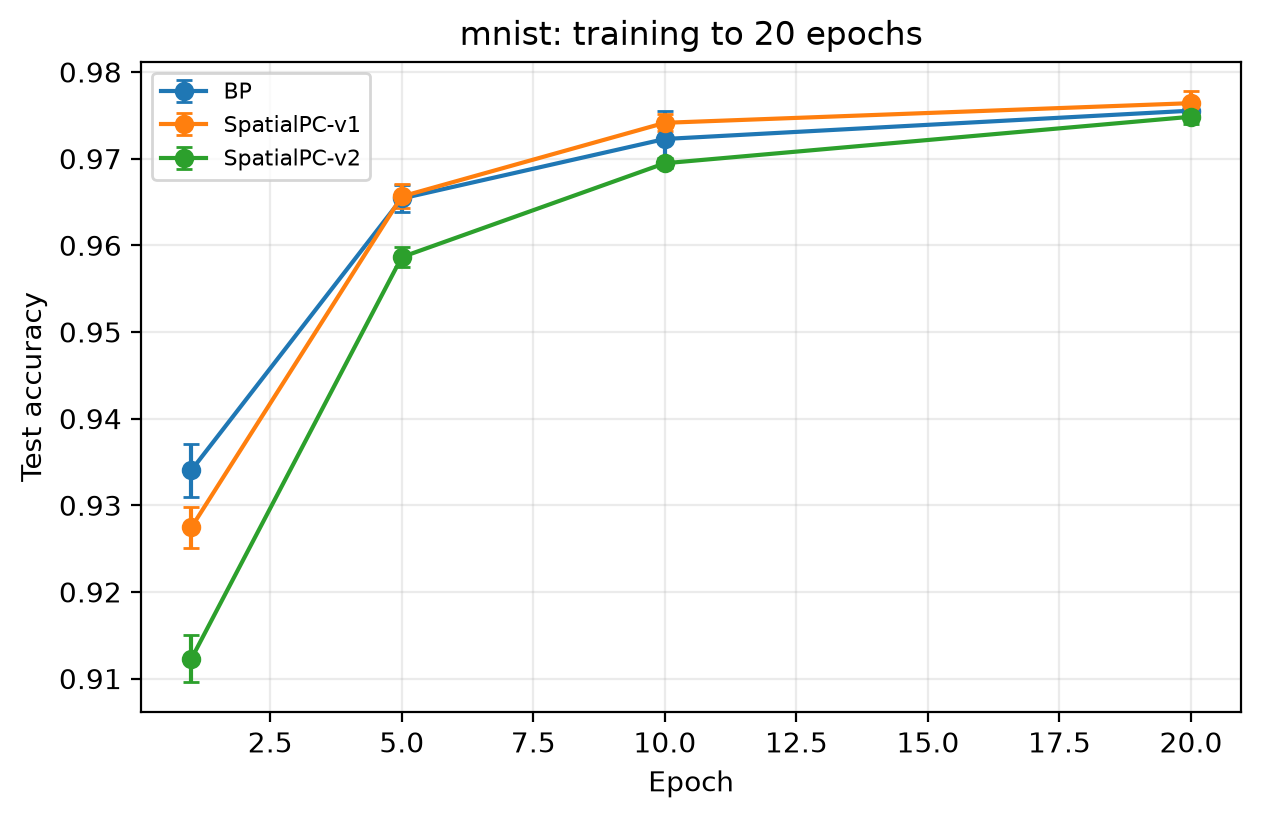}
        \caption{MNIST accuracy}
    \end{subfigure}\hfill
    \begin{subfigure}[b]{0.49\linewidth}
        \centering
        \includegraphics[width=\linewidth]{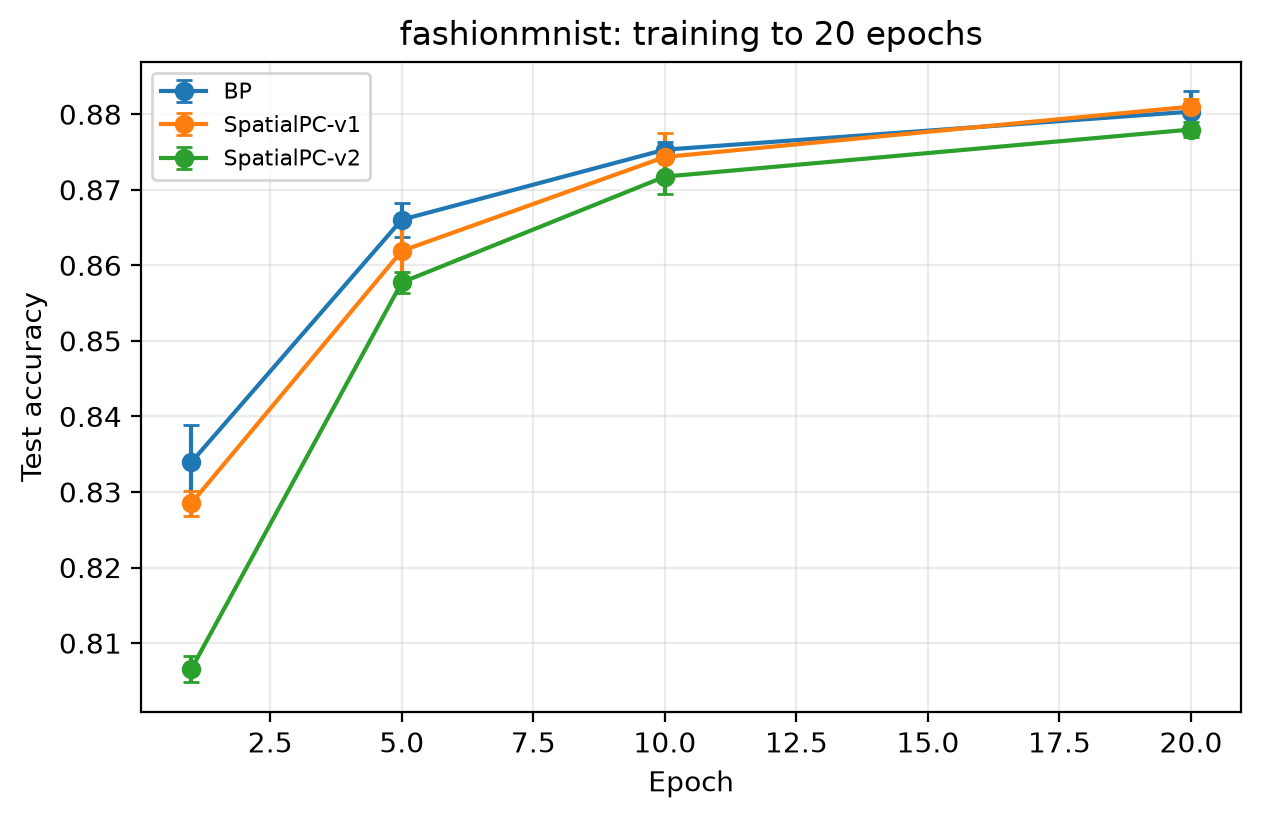}
        \caption{Fashion-MNIST accuracy}
    \end{subfigure}
    \begin{subfigure}[b]{0.49\linewidth}
        \centering
        \includegraphics[width=\linewidth]{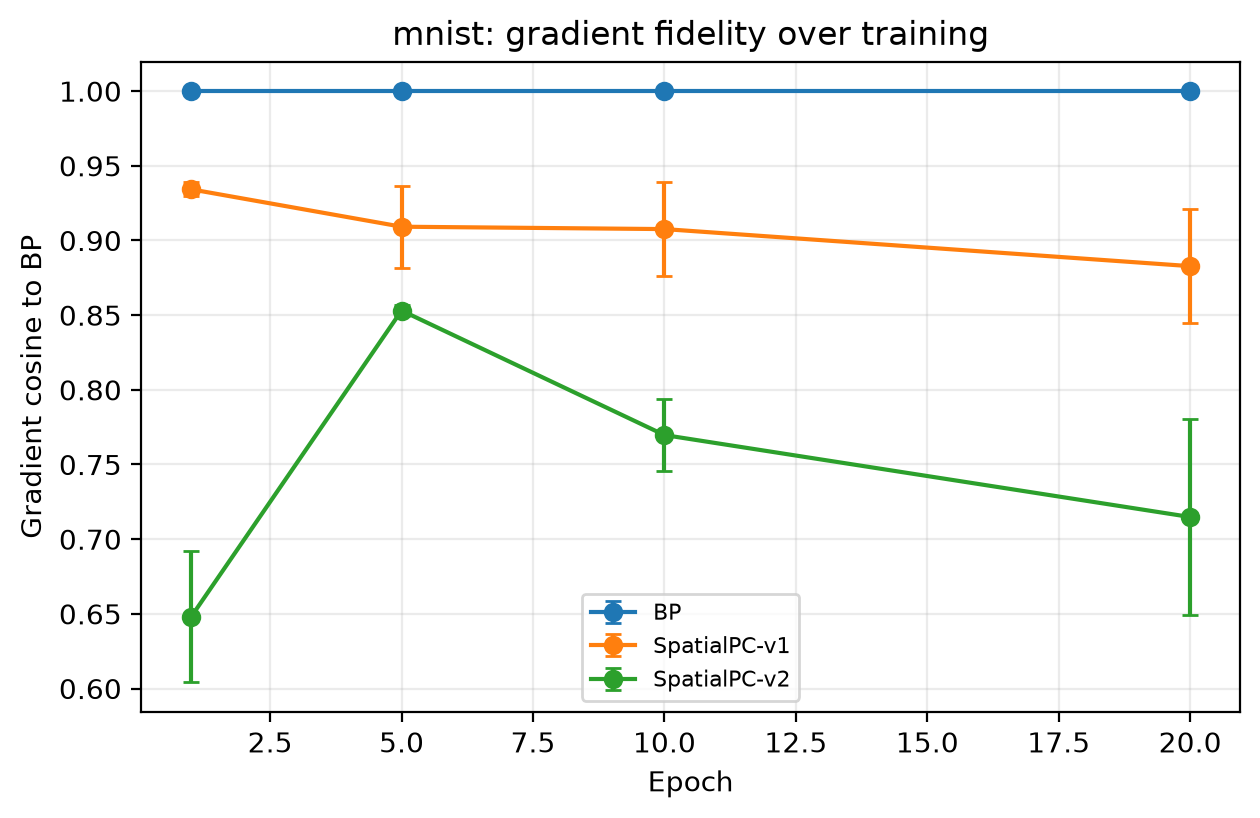}
        \caption{MNIST gradient cosine}
    \end{subfigure}\hfill
    \begin{subfigure}[b]{0.49\linewidth}
        \centering
        \includegraphics[width=\linewidth]{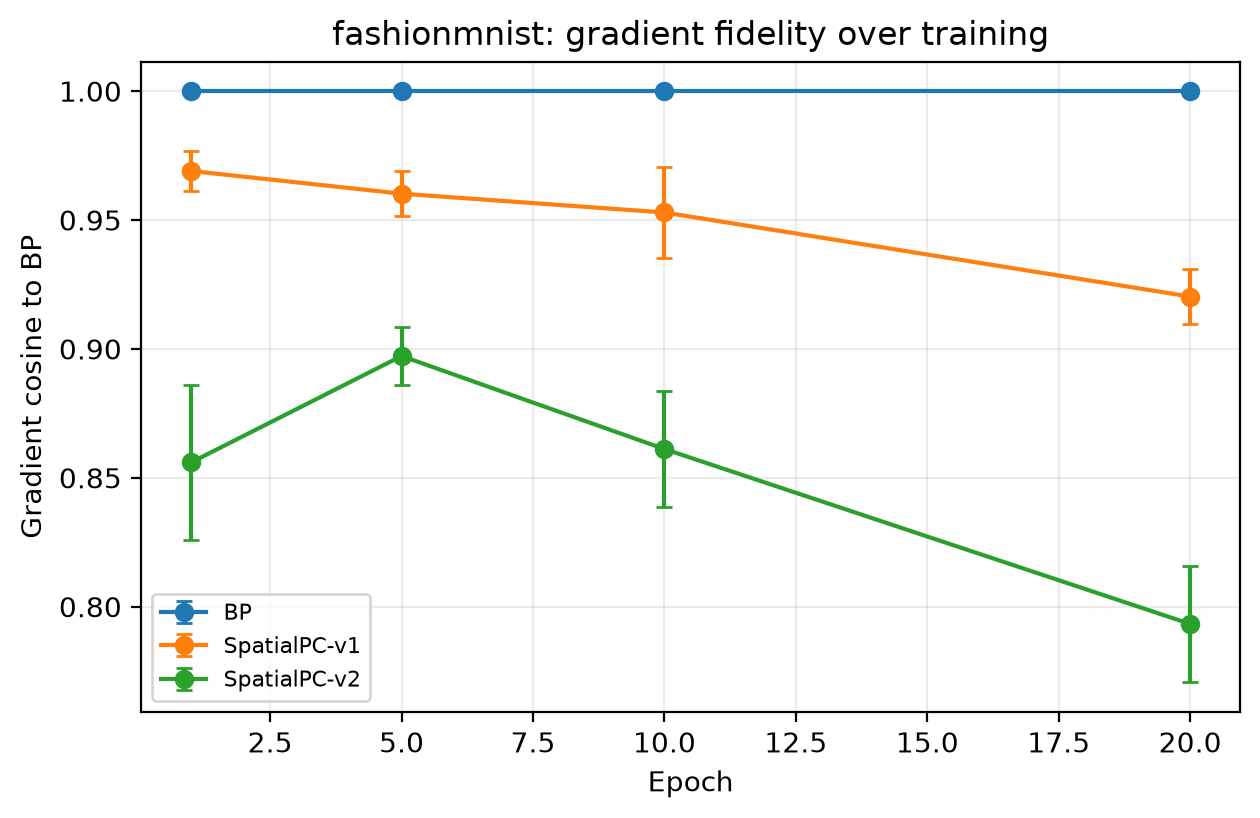}
        \caption{Fashion-MNIST gradient cosine}
    \end{subfigure}
    \caption{Finite-round SpatialPC over 20 epochs at $N=256$, $T=32$. Gradient cosine compares the finite-round APD task gradient with the settled gradient of the same Moore network. The dashed accuracy curve is the depth-128 backpropagation reference.}
    \label{fig:exp2}
\end{figure}

\begin{table}[htbp]
\centering
\small
\newcommand{\na}{\multicolumn{2}{c}{--}}
\caption{Twenty-epoch training at $N=256$, $T=32$, $H=128$ (mean $\pm$ std over three seeds). ``Test'' is finite-round accuracy from $h^T$ and ``Settled'' uses the same parameters after an exact sweep; RMS is the residual RMS. The BP row is a depth-128 chain reference, for which the SpatialPC diagnostics do not apply.}
\label{tab:exp2}
\begin{tabular}{@{} ll r *{4}{r@{\,{\scriptsize$\pm$}\,}>{\scriptsize}l} @{}} 
\toprule
 & & & \multicolumn{4}{c}{\textbf{Accuracy (\%)}} & \multicolumn{4}{c}{\textbf{Gradient Alignment}} \\
\cmidrule(lr){4-7} \cmidrule(l){8-11}
\textbf{Dataset} & \textbf{Method} & \textbf{Epoch}
  & \multicolumn{2}{c}{\textbf{Test} $\uparrow$}
  & \multicolumn{2}{c}{\textbf{Settled} $\uparrow$}
  & \multicolumn{2}{c}{\textbf{Task-grad cos.} $\uparrow$}
  & \multicolumn{2}{c}{\textbf{RMS} $\bm{(\times10^{-3})}$ $\downarrow$} \\
\midrule
MNIST  & BP    &  1 & 93.40 & 0.31 & \na          & \na           & \na \\
       &       &  5 & 96.54 & 0.15 & \na          & \na           & \na \\
       &       & 10 & 97.23 & 0.32 & \na          & \na           & \na \\
       &       & 20 & 97.55 & 0.09 & \na          & \na           & \na \\
\addlinespace
       & SPCv1 &  1 & 92.74 & 0.23 & 92.58 & 0.19 & 0.934 & 0.005 & 7.9 & 0.3 \\
       &       &  5 & 96.56 & 0.14 & 96.30 & 0.19 & 0.909 & 0.027 & 6.9 & 0.1 \\
       &       & 10 & 97.41 & 0.10 & 97.20 & 0.14 & 0.908 & 0.032 & 6.1 & 0.1 \\
       &       & 20 & 97.64 & 0.14 & 97.30 & 0.17 & 0.883 & 0.038 & 5.1 & 0.1 \\
\addlinespace
       & SPCv2 &  1 & 91.23 & 0.27 & 91.19 & 0.31 & 0.648 & 0.044 & 8.3 & 0.4 \\
       &       &  5 & 95.86 & 0.12 & 95.89 & 0.21 & 0.853 & 0.004 & 5.5 & 0.1 \\
       &       & 10 & 96.95 & 0.06 & 96.94 & 0.07 & 0.770 & 0.024 & 4.5 & 0.2 \\
       &       & 20 & 97.48 & 0.09 & 97.34 & 0.09 & 0.715 & 0.066 & 3.7 & 0.2 \\
\midrule
FMNIST & BP    &  1 & 83.39 & 0.49 & \na          & \na           & \na \\
       &       &  5 & 86.60 & 0.23 & \na          & \na           & \na \\
       &       & 10 & 87.53 & 0.10 & \na          & \na           & \na \\
       &       & 20 & 88.03 & 0.27 & \na          & \na           & \na \\
\addlinespace
       & SPCv1 &  1 & 82.85 & 0.17 & 82.59 & 0.23 & 0.969 & 0.008 & 5.9 & 0.2 \\
       &       &  5 & 86.19 & 0.47 & 85.87 & 0.73 & 0.960 & 0.009 & 4.9 & 0.1 \\
       &       & 10 & 87.43 & 0.32 & 87.19 & 0.48 & 0.953 & 0.018 & 4.8 & 0.1 \\
       &       & 20 & 88.10 & 0.11 & 87.36 & 0.06 & 0.920 & 0.011 & 5.0 & 0.0 \\
\addlinespace
       & SPCv2 &  1 & 80.66 & 0.18 & 80.65 & 0.12 & 0.856 & 0.030 & 5.3 & 0.2 \\
       &       &  5 & 85.77 & 0.14 & 85.74 & 0.04 & 0.897 & 0.011 & 5.0 & 0.2 \\
       &       & 10 & 87.17 & 0.22 & 86.83 & 0.33 & 0.861 & 0.022 & 4.3 & 0.1 \\
       &       & 20 & 87.80 & 0.09 & 87.73 & 0.03 & 0.793 & 0.023 & 3.8 & 0.1 \\
\bottomrule
\end{tabular}
\end{table}

\subsection{Chain control and external baselines}
\label{sec:controls}

Two further comparisons are in \cref{app:secondary-experiments}. A 256-state chain trained with the same $T=32$, width, objective, and optimizer stays at chance (\cref{tab:ablation-topology}): by \cref{prop:forward-causality}, no input information reaches the output of a chain in fewer than $256$ rounds, so the output state and the task gradient are identically zero, and the accuracies of $9.80\%$ and $10.00\%$ are the class-0 frequencies of the two test sets. The control is thus an instance of the forward light cone of \cref{prop:forward-causality}; it also changes the depth-dependent residual scaling of the implementation, so it is not an isolated topology intervention. A one-epoch width sweep against backpropagation, standard PC, $\mu$PC, and PC-ALM on depth-128 chains at matched width (\cref{tab:exp1}) shows SpatialPC-v1 ahead at $H\le32$ and within one point of backpropagation at $H=128$.

\section{Discussion}
\label{sec:discussion}

\noindent\textbf{Scope of the bounds.}
\Cref{thm:spatial-compiler,thm:light-cone} concern causal availability under bounded range, bounded degree, and constant density. Within that model, the $\Theta(N)$ latency of a chain is a consequence of embedding a long computation along a line, a $d$-dimensional embedding reduces it to $\Theta(N^{1/d})$ without shortening the computation, and no local algorithm does better. Outside the model the bounds do not apply: a reduction tree, a global broadcast, or any interaction whose range grows with $N$ breaks the light cone, and synthetic gradients \cite{jaderberg2017dni} sidestep it by predicting credit rather than transporting it. Within the model, the bounds say nothing about accuracy after arrival. By \cref{prop:finite-neumann} that is governed by $R_\epsilon(J,b)$, which depends on the learned weights and the task, and the experiments show the two scales to be far apart: at $N=256$ every state is reachable after 18 rounds, and gradient fidelity continues to improve to $T=64$.

\medskip\noindent\textbf{Circuit view.}
The $T$-round SpatialPC is itself a realizable circuit in the sense of \cref{def:realizable}: the spatial graph unrolled $T$ times in time, of width $N$ and depth $T$, with the same unit-length wiring at every layer. With $T=2\sqrt N$ it has $N=O(T^2)$ gates per layer inside a causal ball of radius $O(T)$, which is the volume constraint defining $\mathbf{RC}_2(T)$ \cite{prada2025realizable}; the geometric budget is thus, up to constants, the smallest round count at which a planar circuit of this size is realizable, and \cref{cor:optimal-exponent} shows that the same budget suffices for credit to reach every gate. By \cref{prop:forward-causality} the function it computes depends on the input only through paths of length at most $T-1$, so with $T=2\sqrt N$ it is a depth-$O(\sqrt N)$ circuit with unit-length wires, and the represented depth $N-1$ is a property of the wiring, the longest directed path available to the circuit, rather than of the $T$-round function. The light cone applies to the forward direction as well: an exact evaluation of the full backbone takes $N$ rounds on any local substrate, while the compiled graph delivers input information to every state within $d(m-1)$ rounds (\cref{thm:spatial-compiler}(ii)). \Cref{cor:optimal-exponent} therefore yields a circuit family in which the wiring supports arbitrarily long directed computations, both input and loss information reach every unit in $O(N^{1/d})$ rounds, and the function realized at budget $T$ is the $T$-round truncation, which the feasibility penalty trains to be self-consistent. Which paths a trained circuit uses is a question about the learned weights, taken up below.

\medskip\noindent\textbf{Geometry and dynamics.}
The scaling methods of \cref{sec:related} act on the dynamics that carry credit over a given graph and, in the terms of \cref{sec:finite-credit}, shorten the gap between $D_{\mathrm{reach}}$ and $R_\epsilon$. Spatial compilation reduces $D_{\mathrm{reach}}$. The two interventions are independent and compose: the compiler places no constraint on the local rule, and any of those dynamics can be run on a compiled graph. For relaxation-based models the path-tail analysis also supplies a diagnostic, since $R_\epsilon$ is then the number of inference iterations after which credit reaches a given relative accuracy, and comparing it with $D_{\mathrm{reach}}$ separates latency caused by distance from latency caused by the Jacobian tail.

\medskip\noindent\textbf{Physical implementation.}
SpatialPC is formulated for substrates in which range and degree are physical constraints, such as mesh-connected neuromorphic and in-memory accelerators. Every forward and reverse operation is local to an edge; the wiring is bounded-degree, unit-length, and fixed at compile time; and neither the compiler nor APD uses all-to-all communication, reduction trees, or a nonlocal feedback matrix. A computation can be laid out on such a fabric by ordering it along a space-filling traversal and retaining the unit-range shortcuts; a planar device cannot reduce worst-case credit latency below order $\sqrt N$ rounds, and a three-dimensional stack cannot reduce it below order $N^{1/3}$; and one parameter update costs $2T+2$ local message phases with $T$ on the order of $R_\epsilon$. The saving is in synchronization depth, not arithmetic: each round costs $O(|E|H^2)$ operations, so a $T$-round update performs $T$ times the work of one topological sweep, spread over $O(T)$ rounds instead of $O(N)$.

\medskip\noindent\textbf{Open problems.}
Four questions follow directly from the results. First, the Moore identity $D_{\mathrm{reach}}=\sqrt N+2$ holds at every implemented size but is not proved for all $k$; more generally, the minimum of $D_{\mathrm{reach}}$ over cycle-based compilers with $O(N)$ edges is not known, and \cref{thm:spatial-compiler} shows only that $2(\sqrt N-1)$ is attainable with $2N-2\sqrt N$ edges. Second, the tail identity reduces $R_\epsilon$ to the decay of $\|(J^\top)^Ka^\star\|$, which suggests that the depth-aware parameterizations of \cite{innocenti2025mupc,innocenti2026limits} could be used to bound $R_\epsilon$ on the compiled graph in terms of $N$; the present experiments measure $R_\epsilon$ through gradient alignment at $N\le256$ and leave its growth with $N$ open. Third, represented depth is a property of the wiring; a path-sensitive ablation that cuts long backbone segments after training would measure how much of it a trained circuit uses. Fourth, the composition of geometry with relaxation-based dynamics, which the analysis predicts, is a direct experiment: run PC-ALM or $\mu$PC on the Moore graph and on the chain at matched $N$ and compare their inference-iteration counts. Three-dimensional substrates, convolutional and sequence computations, and measurements on physically local hardware extend the same program.

\section{Conclusion}
\label{sec:conclusion}

The communication latency of local credit assignment is a property of the graph on which credit travels, not of the depth of the computation. For $N=m^d$ states on a constant-density, bounded-range substrate, $\Omega(N^{1/d})$ rounds are necessary for loss information to reach every loss-sensitive parameter, and a unit-range, degree-$2d$ compiler reaches every state in $d(N^{1/d}-1)$ rounds while retaining an $N-1$ Hamiltonian computation; the separation is tight for every local learning algorithm. SpatialPC realizes it in predictive coding with an exact, edge-local gradient, and its path-length expansion identifies the second quantity on which finite-round learning depends: the effective credit radius, set by the decay of Jacobian powers along the settled adjoint rather than by the graph. The sparse Moore compiler keeps the two-dimensional exponent with $\tfrac32N-2$ edges to $65{,}536$ states, and under $O(\sqrt N)$ round budgets the trained models keep finite-round gradients aligned with settled gradients and match a depth-128 backpropagation reference on MNIST and Fashion-MNIST. The compiler is defined for every dimension, the analysis applies to any local rule, and together they make the communication cost of credit assignment a quantity that can be chosen by design rather than inherited from depth.

\bibliographystyle{unsrtnat}
\bibliography{references}

@inproceedings{ororbia2019biologically,
  title     = {Biologically Motivated Algorithms for Propagating Local Target Representations},
  author    = {Ororbia, Alexander G. and Mali, Ankur},
  booktitle = {Proceedings of the AAAI Conference on Artificial Intelligence},
  volume    = {33},
  number    = {01},
  pages     = {4651--4658},
  year      = {2019},
  doi       = {10.1609/aaai.v33i01.33014651}
}

@article{salvatori2025survey,
  title   = {A Survey on Neuro-Mimetic Deep Learning via Predictive Coding},
  author  = {Salvatori, Tommaso and Mali, Ankur and Buckley, Christopher L. and Lukasiewicz, Thomas and Rao, Rajesh P. N. and Friston, Karl and Ororbia, Alexander},
  journal = {Neural Networks},
  volume  = {195},
  pages   = {108161},
  year    = {2026},
  month   = mar,
  doi     = {10.1016/j.neunet.2025.108161}
}

@misc{prada2025realizable,
  title         = {Realizable Circuit Complexity: Embedding Computation in Space--Time},
  author        = {Prada, Benjamin and Mali, Ankur},
  year          = {2025},
  eprint        = {2509.19161},
  archivePrefix = {arXiv},
  primaryClass  = {cs.CC},
  url           = {https://arxiv.org/abs/2509.19161}
}

@inproceedings{lee2015difference,
  title     = {Difference Target Propagation},
  author    = {Lee, Dong-Hyun and Zhang, Saizheng and Fischer, Asja and Bengio, Yoshua},
  booktitle = {Machine Learning and Knowledge Discovery in Databases},
  pages     = {498--515},
  year      = {2015},
  publisher = {Springer}
}

@misc{ororbia2024review,
  title         = {A Review of Neuroscience-Inspired Machine Learning},
  author        = {Ororbia, Alexander and Mali, Ankur and Kohan, Adam and Millidge, Beren and Salvatori, Tommaso},
  year          = {2024},
  eprint        = {2403.18929},
  archivePrefix = {arXiv},
  primaryClass  = {cs.LG},
  url           = {https://arxiv.org/abs/2403.18929}
}

@misc{naumov2017parallel,
  title         = {Parallel Complexity of Forward and Backward Propagation},
  author        = {Naumov, Maxim},
  year          = {2017},
  eprint        = {1712.06577},
  archivePrefix = {arXiv},
  primaryClass  = {cs.LG},
  url           = {https://arxiv.org/abs/1712.06577}
}

@inproceedings{wang2020bppsa,
  title     = {{BPPSA}: Scaling Back-Propagation by Parallel Scan Algorithm},
  author    = {Wang, Shang and Bai, Yifan and Pekhimenko, Gennady},
  booktitle = {Proceedings of Machine Learning and Systems},
  volume    = {2},
  year      = {2020},
  publisher = {MLSys}
}

@misc{song2021accelerating,
  title         = {Nonlinear Equation Solving: A Faster Alternative to Feedforward Computation},
  author        = {Song, Yang and Meng, Chenlin and Liao, Renjie and Ermon, Stefano},
  year          = {2020},
  eprint        = {2002.03629},
  archivePrefix = {arXiv},
  primaryClass  = {cs.LG},
  doi           = {10.48550/arXiv.2002.03629},
  url           = {https://arxiv.org/abs/2002.03629}
}

@inproceedings{jaderberg2017dni,
  title     = {Decoupled Neural Interfaces using Synthetic Gradients},
  author    = {Jaderberg, Max and Czarnecki, Wojciech Marian and Osindero, Simon and Vinyals, Oriol and Graves, Alex and Silver, David and Kavukcuoglu, Koray},
  booktitle = {Proceedings of the 34th International Conference on Machine Learning},
  pages     = {1627--1635},
  year      = {2017},
  volume    = {70},
  series    = {Proceedings of Machine Learning Research},
  publisher = {PMLR},
  url       = {https://proceedings.mlr.press/v70/jaderberg17a.html}
}

@article{millidge2020predictive,
  title   = {Predictive Coding Approximates Backprop Along Arbitrary Computation Graphs},
  author  = {Millidge, Beren and Tschantz, Alexander and Buckley, Christopher L.},
  journal = {Neural Computation},
  volume  = {34},
  number  = {6},
  pages   = {1329--1368},
  year    = {2022},
  doi     = {10.1162/neco_a_01497}
}

@inproceedings{salvatori2022graphs,
  title     = {Learning on Arbitrary Graph Topologies via Predictive Coding},
  author    = {Salvatori, Tommaso and Pinchetti, Luca and Millidge, Beren and Song, Yuhang and Bao, Tianyi and Bogacz, Rafal and Lukasiewicz, Thomas},
  booktitle = {Advances in Neural Information Processing Systems},
  volume    = {35},
  year      = {2022},
  publisher = {Curran Associates, Inc.},
  doi       = {10.52202/068431-2770}
}

@article{scellier2017equilibrium,
  title   = {Equilibrium Propagation: Bridging the Gap between Energy-Based Models and Backpropagation},
  author  = {Scellier, Benjamin and Bengio, Yoshua},
  journal = {Frontiers in Computational Neuroscience},
  volume  = {11},
  pages   = {24},
  year    = {2017},
  doi     = {10.3389/fncom.2017.00024}
}

@misc{seely2026pcalm,
  title         = {Augmented Lagrangian Predictive Coding},
  author        = {Seely, Jeffrey and Gould, Julian},
  year          = {2026},
  eprint        = {2605.31022},
  archivePrefix = {arXiv},
  primaryClass  = {cs.LG},
  url           = {https://arxiv.org/abs/2605.31022}
}

@article{Moore1900,
  author  = {Moore, Eliakim Hastings},
  title   = {On Certain Crinkly Curves},
  journal = {Transactions of the American Mathematical Society},
  volume  = {1},
  number  = {1},
  pages   = {72--90},
  year    = {1900},
  doi     = {10.2307/1986405}
}

@article{Rumelhart1986,
  author  = {Rumelhart, David E. and Hinton, Geoffrey E. and Williams, Ronald J.},
  title   = {Learning Representations by Back-Propagating Errors},
  journal = {Nature},
  volume  = {323},
  number  = {6088},
  pages   = {533--536},
  year    = {1986},
  doi     = {10.1038/323533a0}
}

@article{Lillicrap2020,
  author  = {Lillicrap, Timothy P. and Santoro, Adam and Marris, Luke and Akerman, Colin J. and Hinton, Geoffrey},
  title   = {Backpropagation and the Brain},
  journal = {Nature Reviews Neuroscience},
  volume  = {21},
  number  = {6},
  pages   = {335--346},
  year    = {2020},
  doi     = {10.1038/s41583-020-0277-3}
}

@article{Hestenes1969,
  author  = {Hestenes, Magnus R.},
  title   = {Multiplier and Gradient Methods},
  journal = {Journal of Optimization Theory and Applications},
  volume  = {4},
  number  = {5},
  pages   = {303--320},
  year    = {1969},
  doi     = {10.1007/BF00927673}
}

@misc{google-s2,
  author       = {{Google}},
  title        = {{S2 Geometry}},
  howpublished = {\url{https://s2geometry.io/}},
  url          = {https://s2geometry.io/},
  note         = {Software documentation}
}

@misc{mnist,
  author       = {LeCun, Yann and Cortes, Corinna and Burges, Christopher J. C.},
  title        = {The {MNIST} Database of Handwritten Digits},
  howpublished = {\url{http://yann.lecun.com/exdb/mnist/}},
  url          = {http://yann.lecun.com/exdb/mnist/}
}

@article{fashionmnist,
  author        = {Xiao, Han and Rasul, Kashif and Vollgraf, Roland},
  title         = {Fashion-{MNIST}: A Novel Image Dataset for Benchmarking Machine Learning Algorithms},
  journal       = {CoRR},
  volume        = {abs/1708.07747},
  year          = {2017},
  eprint        = {1708.07747},
  archivePrefix = {arXiv},
  url           = {https://arxiv.org/abs/1708.07747}
}

@inproceedings{innocenti2025mupc,
  title     = {{$\mu$PC}: Scaling Predictive Coding to 100+ Layer Networks},
  author    = {Innocenti, Francesco and Achour, El Mehdi and Buckley, Christopher L.},
  booktitle = {Advances in Neural Information Processing Systems},
  volume    = {38},
  year      = {2025},
  publisher = {Curran Associates, Inc.},
  doi       = {10.52202/085713-2923}
}

@inproceedings{kingma2017adam,
  title     = {Adam: A Method for Stochastic Optimization},
  author    = {Kingma, Diederik P. and Ba, Jimmy},
  booktitle = {International Conference on Learning Representations},
  year      = {2015},
  eprint    = {1412.6980},
  archivePrefix = {arXiv},
  url       = {https://arxiv.org/abs/1412.6980}
}

@inproceedings{paszke2019pytorch,
  title     = {{PyTorch}: An Imperative Style, High-Performance Deep Learning Library},
  author    = {Paszke, Adam and Gross, Sam and Massa, Francisco and Lerer, Adam and Bradbury, James and Chanan, Gregory and Killeen, Trevor and Lin, Zeming and Gimelshein, Natalia and Antiga, Luca and Desmaison, Alban and K{\"o}pf, Andreas and Yang, Edward and DeVito, Zachary and Raison, Martin and Tejani, Alykhan and Chilamkurthy, Sasank and Steiner, Benoit and Fang, Lu and Bai, Junjie and Chintala, Soumith},
  booktitle = {Advances in Neural Information Processing Systems},
  volume    = {32},
  year      = {2019}
}

@inproceedings{pmlr-v80-wen18a,
  title     = {Deep Predictive Coding Network for Object Recognition},
  author    = {Wen, Haiguang and Han, Kuan and Shi, Junxing and Zhang, Yizhen and Culurciello, Eugenio and Liu, Zhongming},
  booktitle = {Proceedings of the 35th International Conference on Machine Learning},
  pages     = {5266--5275},
  year      = {2018},
  volume    = {80},
  series    = {Proceedings of Machine Learning Research},
  publisher = {PMLR},
  url       = {https://proceedings.mlr.press/v80/wen18a.html}
}

@misc{millidge2022predictivecodingtheoreticalexperimental,
  title         = {Predictive Coding: A Theoretical and Experimental Review},
  author        = {Millidge, Beren and Seth, Anil and Buckley, Christopher L.},
  year          = {2021},
  eprint        = {2107.12979},
  archivePrefix = {arXiv},
  primaryClass  = {cs.AI},
  url           = {https://arxiv.org/abs/2107.12979}
}

@book{Prusinkiewicz1996,
  author    = {Prusinkiewicz, Przemyslaw and Lindenmayer, Aristid},
  title     = {The Algorithmic Beauty of Plants},
  publisher = {Springer-Verlag},
  year      = {1990},
  isbn      = {978-0-387-97297-8}
}

@misc{hinton2022forwardforwardalgorithmpreliminaryinvestigations,
  title         = {The Forward-Forward Algorithm: Some Preliminary Investigations},
  author        = {Hinton, Geoffrey},
  year          = {2022},
  eprint        = {2212.13345},
  archivePrefix = {arXiv},
  primaryClass  = {cs.LG},
  url           = {https://arxiv.org/abs/2212.13345}
}

@inproceedings{salvatori2024ipc,
  title     = {A Stable, Fast, and Fully Automatic Learning Algorithm for Predictive Coding Networks},
  author    = {Salvatori, Tommaso and Song, Yuhang and Yordanov, Yordan and Millidge, Beren and Sha, Lei and Emde, Cornelius and Xu, Zhenghua and Bogacz, Rafal and Lukasiewicz, Thomas},
  booktitle = {International Conference on Learning Representations},
  year      = {2024}
}

@inproceedings{salvatori2022reverse,
  title     = {Reverse Differentiation via Predictive Coding},
  author    = {Salvatori, Tommaso and Song, Yuhang and Xu, Zhenghua and Lukasiewicz, Thomas and Bogacz, Rafal},
  booktitle = {Proceedings of the AAAI Conference on Artificial Intelligence},
  volume    = {36},
  number    = {7},
  pages     = {8150--8158},
  year      = {2022},
  doi       = {10.1609/aaai.v36i7.20788}
}

@inproceedings{qi2026deeper,
  title     = {Towards the Training of Deeper Predictive Coding Neural Networks},
  author    = {Qi, Chang and Forasassi, Matteo and Lukasiewicz, Thomas and Salvatori, Tommaso},
  booktitle = {Proceedings of the 43rd International Conference on Machine Learning},
  series    = {Proceedings of Machine Learning Research},
  volume    = {306},
  pages     = {100334--100362},
  year      = {2026},
  publisher = {PMLR},
  url       = {https://proceedings.mlr.press/v306/qi26f.html}
}

@misc{mohammadi2026hep,
  title         = {Error Highways: Scaling Predictive Coding to Very Deep Networks},
  author        = {Mohammadi, Amirhossein and Ororbia, Alexander G.},
  year          = {2026},
  eprint        = {2606.22744},
  archivePrefix = {arXiv},
  primaryClass  = {cs.LG},
  url           = {https://arxiv.org/abs/2606.22744}
}

@inproceedings{goemaere2026epc,
  title     = {e{PC}: Fast and Deep Predictive Coding in Digital Simulation},
  author    = {Goemaere, C{\'e}dric and Oliviers, Gaspard and Bogacz, Rafal and Demeester, Thomas},
  booktitle = {Proceedings of the 43rd International Conference on Machine Learning},
  series    = {Proceedings of Machine Learning Research},
  volume    = {306},
  pages     = {35407--35442},
  year      = {2026},
  publisher = {PMLR},
  url       = {https://proceedings.mlr.press/v306/goemaere26a.html}
}

@inproceedings{innocenti2026limits,
  title     = {On the Infinite Width and Depth Limits of Predictive Coding Networks},
  author    = {Innocenti, Francesco and Achour, El Mehdi and Bogacz, Rafal},
  booktitle = {Proceedings of the 43rd International Conference on Machine Learning},
  series    = {Proceedings of Machine Learning Research},
  volume    = {306},
  pages     = {49761--49799},
  year      = {2026},
  publisher = {PMLR},
  url       = {https://proceedings.mlr.press/v306/innocenti26a.html}
}

@inproceedings{ha2026metapcn,
  title     = {Stable and Scalable Deep Predictive Coding Networks with Meta-Prediction Errors},
  author    = {Ha, Myoung Hoon and Kim, Hyunjun and Sung, Yoondo and Jo, Youngha and Kang, Min S. and Lee, Sang Wan},
  booktitle = {International Conference on Learning Representations},
  year      = {2026},
  url       = {https://openreview.net/forum?id=kE5jJUHl9i}
}

@misc{kerjan2026imagenet,
  title         = {Training a Predictive Coding Network on {ImageNet} using Equilibrium Propagation},
  author        = {Kerjan, Tugdual and H{\o}ier, Rasmus and Scellier, Benjamin},
  year          = {2026},
  eprint        = {2606.03584},
  archivePrefix = {arXiv},
  primaryClass  = {cs.LG},
  url           = {https://arxiv.org/abs/2606.03584}
}

@article{linial1992locality,
  author  = {Linial, Nathan},
  title   = {Locality in Distributed Graph Algorithms},
  journal = {SIAM Journal on Computing},
  volume  = {21},
  number  = {1},
  pages   = {193--201},
  year    = {1992},
  doi     = {10.1137/0221015}
}

@book{peleg2000distributed,
  author    = {Peleg, David},
  title     = {Distributed Computing: A Locality-Sensitive Approach},
  publisher = {SIAM},
  series    = {SIAM Monographs on Discrete Mathematics and Applications},
  year      = {2000}
}

@inproceedings{bai2019deq,
  author    = {Bai, Shaojie and Kolter, J. Zico and Koltun, Vladlen},
  title     = {Deep Equilibrium Models},
  booktitle = {Advances in Neural Information Processing Systems},
  volume    = {32},
  year      = {2019}
}

@article{vitanyi1988locality,
  author  = {Vit{\'a}nyi, Paul M. B.},
  title   = {Locality, Communication, and Interconnect Length in Multicomputers},
  journal = {SIAM Journal on Computing},
  volume  = {17},
  number  = {4},
  pages   = {659--672},
  year    = {1988},
  doi     = {10.1137/0217042}
}

@phdthesis{thompson1980vlsi,
  author = {Thompson, Clark D.},
  title  = {A Complexity Theory for {VLSI}},
  school = {Carnegie Mellon University},
  year   = {1980},
  note   = {Technical Report CMU-CS-80-140}
}

@book{varela1991embodied,
  author    = {Varela, Francisco J. and Thompson, Evan and Rosch, Eleanor},
  title     = {The Embodied Mind: Cognitive Science and Human Experience},
  publisher = {MIT Press},
  address   = {Cambridge, MA},
  year      = {1991}
}

@book{pfeifer2006body,
  author    = {Pfeifer, Rolf and Bongard, Josh},
  title     = {How the Body Shapes the Way We Think: A New View of Intelligence},
  publisher = {MIT Press},
  address   = {Cambridge, MA},
  year      = {2006}
}

\appendix
\crefalias{section}{appendix}
\crefalias{subsection}{appendix}
\clearpage
\section*{Appendix}

\noindent The appendix provides detailed proofs and protocols for materials referenced in the main text. \Cref{app:proofs-compilation} proves the monotone-edge lemma behind \cref{thm:spatial-compiler} and records how the Moore reachability of \cref{eq:moore-reach-empirical} was measured; \cref{app:training-derivation} derives the finite-round APD recursion of \cref{prop:finite-spatialpc-apd}; \cref{app:angle-proof} proves \cref{cor:gradient-angle}; \cref{app:apd-pcalm} relates the adjoint equation to augmented-Lagrangian predictive coding; \cref{app:gradient-verification} verifies the implementation against automatic differentiation; \cref{app:secondary-experiments} reports the chain control, the activation ablation, and the external width sweep; and \cref{app:algorithm} gives pseudocode for the compiler, the forward pass, and training.

\section{Proofs for the Spatial Compilation Results}
\label{app:proofs-compilation}

\subsection{Proof of \texorpdfstring{\cref{lem:monotone}}{the monotone-edge lemma}}
\Cref{lem:monotone} is the step of \cref{thm:spatial-compiler} that supplies, from every point of the snake-ordered grid, a forward edge that moves one unit closer to the output. Write $v_{\mathrm{in}}^{(d)}$ and $v_{\mathrm{out}}^{(d)}$ for the first and last points of $\pi_{m,d}$, and call the two statements of the lemma (A$_d$) and (B$_d$):
\begin{itemize}[itemsep=0pt,topsep=1pt]
    \item[(A$_d$)] every $v\ne v_{\mathrm{out}}^{(d)}$ has a unit neighbor $u$ with
    $$\pi_{m,d}(u)>\pi_{m,d}(v) \text{ and } \|u-v_{\mathrm{out}}^{(d)}\|_1=\|v-v_{\mathrm{out}}^{(d)}\|_1-1;$$
    \item[(B$_d$)] every $v\ne v_{\mathrm{in}}^{(d)}$ has a unit neighbor $u$ with
    $$\pi_{m,d}(u)<\pi_{m,d}(v) \text{ and } \|u-v_{\mathrm{in}}^{(d)}\|_1=\|v-v_{\mathrm{in}}^{(d)}\|_1-1.$$
\end{itemize}
We prove (A$_d$) and (B$_d$) together by induction on $d$.

\emph{Base case $d=1$.} Here $v_{\mathrm{in}}^{(1)}=0$ and $v_{\mathrm{out}}^{(1)}=m-1$. For $v<m-1$ take $u=v+1$; for $v>0$ take $u=v-1$.

\emph{Inductive step.} Assume (A$_{d-1}$) and (B$_{d-1}$). By \cref{eq:snake-order}, $v_{\mathrm{in}}^{(d)}=(0,v_{\mathrm{in}}^{(d-1)})$, and $v_{\mathrm{out}}^{(d)}=(m-1,v_{\mathrm{out}}^{(d-1)})$ if $m-1$ is even and $(m-1,v_{\mathrm{in}}^{(d-1)})$ if $m-1$ is odd. Points in distinct slabs are ordered by slab index, and within slab $i$ the order is $\pi_{m,d-1}$ if $i$ is even and its reversal if $i$ is odd.

(A$_d$): let $v=(i,v')\neq v_{\mathrm{out}}^{(d)}$. If $i<m-1$, take $u=(i+1,v')$. It lies in a later slab, so $\pi_{m,d}(u)>\pi_{m,d}(v)$, and since the first coordinate of $v_{\mathrm{out}}^{(d)}$ is $m-1>i$, the move reduces the $\ell_1$ distance to $v_{\mathrm{out}}^{(d)}$ by one. If $i=m-1$ and $m-1$ is even, the last slab is traversed by $\pi_{m,d-1}$, whose last point is $v_{\mathrm{out}}^{(d-1)}\neq v'$; by (A$_{d-1}$) there is a unit neighbor $u'$ of $v'$ with $\pi_{m,d-1}(u')>\pi_{m,d-1}(v')$ one unit closer to $v_{\mathrm{out}}^{(d-1)}$, and $u=(m-1,u')$ satisfies the claim. If $i=m-1$ and $m-1$ is odd, the last slab is traversed by the reversal of $\pi_{m,d-1}$, whose last point is $v_{\mathrm{in}}^{(d-1)}\neq v'$; by (B$_{d-1}$) there is a unit neighbor $u'$ with $\pi_{m,d-1}(u')<\pi_{m,d-1}(v')$, hence later in the reversed order, one unit closer to $v_{\mathrm{in}}^{(d-1)}$, and $u=(m-1,u')$ satisfies the claim.

(B$_d$): let $v=(i,v')\neq v_{\mathrm{in}}^{(d)}$. If $i>0$, take $u=(i-1,v')$, which lies in an earlier slab and is one unit closer to $v_{\mathrm{in}}^{(d)}$, whose first coordinate is $0$. If $i=0$, slab $0$ is traversed by $\pi_{m,d-1}$ with first point $v_{\mathrm{in}}^{(d-1)}\ne v'$, and (B$_{d-1}$) supplies $u'$; take $u=(0,u')$. \qed

\subsection{Reverse eccentricity of the Moore output}
\Cref{eq:moore-reach-empirical} was obtained by breadth-first search on the reversed graph from the output node for $m=2^k$, $k=2,\dots,8$; at $m=4$ the farthest state is state $1$ (\cref{fig:moore-grid}), and at every implemented size the maximum distance is exactly $m+2$. A proof for all $k$ would need to track the position of the cut (backbone index $0$ is placed one quarter of the way around the Moore cycle, \cref{alg:moore-dag}) relative to the recursive structure of the curve.

\section{Derivation of Finite-Round APD}
\label{app:training-derivation}
This appendix carries out the stationarity computation stated in the proof of \cref{prop:finite-spatialpc-apd}. For one example, define
\begin{equation}
    \mathcal J_T
    =\ell(\hat y,y)+\frac{\lambda}{2NH}\|r^T\|_2^2,
    \qquad
    r^T=h^T-F_\theta(h^T;x),
    \qquad
    \hat y=\readout(h^T),
\end{equation}
subject to the time-indexed constraints
\begin{equation}
    h^{t+1}-F_\theta(h^t;x)=0,
    \qquad t=0,\ldots,T-1 .
\end{equation}
With the sign convention of \cref{eq:pc-lagrangian},
\begin{equation}
    \mathcal L_T
    =\mathcal J_T
     -\sum_{t=0}^{T-1}(a^{t+1})^\top
        \bigl(h^{t+1}-F_\theta(h^t;x)\bigr).
    \label{eq:unrolled-lagrangian}
\end{equation}
Stationarity in $h^T$ gives $a^T=\nabla_{h^T}\mathcal J_T$. The feasibility contribution is
\begin{align}
\nabla_{h^T}\frac{\lambda}{2NH}\|r^T\|_2^2
&=\frac{\lambda}{NH}
  \left(\frac{\partial r^T}{\partial h^T}\right)^{\top} r^T
=\frac{\lambda}{NH}(I-J_T^\top)\,r^T,
\end{align}
so
\begin{equation}
    a^T
    =\nabla_{h^T}\ell
     +\frac{\lambda}{NH}\bigl(r^T-J_T^\top r^T\bigr),
\end{equation}
which is \cref{eq:terminal-apd}. For $t<T$, stationarity in $h^t$ gives $a^t=J_t^\top a^{t+1}$, which is \cref{eq:finite-apd-recursion}. Differentiating $\mathcal L_T$ in $\theta$ at a stationary point,
\begin{equation}
    \nabla_\theta\mathcal J_T
    =\left.\nabla_\theta\mathcal J_T\right|_{h^T\ \mathrm{fixed}}
     +\sum_{t=0}^{T-1}
      \left(\frac{\partial F_\theta(h^t;x)}{\partial\theta}\right)^{\top}a^{t+1},
\end{equation}
where the direct derivative of the feasibility term, holding $h^T$ fixed, is
\begin{equation}
\left.\nabla_\theta\frac{\lambda}{2NH}\|r^T\|_2^2\right|_{h^T\ \mathrm{fixed}}
=-\frac{\lambda}{NH}
  \left(\frac{\partial F_\theta(h^T;x)}{\partial\theta}\right)^{\top} r^T .
\end{equation}
This is the gradient evaluated by \cref{alg:spatialpc-training}.

\section{Proof of the Gradient-Angle Bound}
\label{app:angle-proof}
We prove \cref{cor:gradient-angle}. Let $x=a_v^\star$ and $y=a_v^K=x+\delta$ with $\|\delta\|\le\epsilon\|x\|$ and $0\le\epsilon<1$. Normalize $\|x\|=1$ and decompose
\begin{equation}
    \delta=\alpha x+\delta_\perp,
    \qquad x^\top\delta_\perp=0,
    \qquad \alpha^2+\|\delta_\perp\|^2\le\epsilon^2 .
\end{equation}
Then
\begin{equation}
\cos\angle(x,y)
=\frac{1+\alpha}{\sqrt{(1+\alpha)^2+\|\delta_\perp\|^2}},
\end{equation}
which is decreasing in $\|\delta_\perp\|^2$, so for fixed $\alpha\in[-\epsilon,\epsilon]$ the minimum is attained at $\|\delta_\perp\|^2=\epsilon^2-\alpha^2$:
\begin{equation}
\cos^2\angle(x,y)
\ \ge\
f(\alpha):=\frac{(1+\alpha)^2}{1+2\alpha+\epsilon^2}.
\end{equation}
Since $f'(\alpha)=2(1+\alpha)(\alpha+\epsilon^2)/(1+2\alpha+\epsilon^2)^2$ and $1+\alpha>0$, $f$ is minimized at $\alpha=-\epsilon^2$, where $f(-\epsilon^2)=(1-\epsilon^2)^2/(1-\epsilon^2)=1-\epsilon^2$. Hence $\cos\angle(x,y)\ge\sqrt{1-\epsilon^2}$, with equality when $y$ is tangent to the ball of radius $\epsilon$ about $x$. For the edge gradients $g^\star=\kappa xz^\top$ and $g^K=\kappa yz^\top$ with a common $z\neq0$, the Frobenius inner products factor as $\langle g^K,g^\star\rangle_F=\kappa^2\|z\|^2\langle y,x\rangle$, so $\cos\angle(g^K,g^\star)=\cos\angle(y,x)$. \qed

\section{Connection to Augmented-Lagrangian Predictive Coding}
\label{app:apd-pcalm}
\Cref{sec:apd} states that the adjoint equation \cref{eq:settled-adjoint} is the state-stationarity condition of augmented-Lagrangian predictive coding \cite{seely2026pcalm}. For the augmented Lagrangian \cite{Hestenes1969}
\begin{equation}
    \mathcal L_\beta(h,\theta,\zeta)
    =\ell(h_O,y)+\zeta^\top r(h,\theta)
      +\frac{\beta}{2}\|r(h,\theta)\|^2,
\end{equation}
define the augmented multiplier $\eta=\zeta+\beta r$ and set $a=-\eta$. Since $r=h-F_\theta(h;x)$,
\begin{equation}
    \nabla_h\mathcal L_\beta
    =\nabla_h\ell+(I-J^\top)\eta
    =\nabla_h\ell-(I-J^\top)a .
\end{equation}
State stationarity is therefore $a=\nabla_h\ell+J^\top a$, the settled APD equation \cref{eq:settled-adjoint}, at every value of the residual. At feasibility $\eta=\zeta$, so the APD adjoint coincides with minus the Lagrange multiplier of the constrained problem \cref{eq:constrained-pc}. SpatialPC shares this fixed-point condition with augmented-Lagrangian PC but not its dynamics: PC-ALM approaches the condition by primal--dual relaxation of states and multipliers, whereas APD on the unrolled graph (\cref{sec:finite-apd}) computes the exact gradient of the finite-round objective in $T$ reverse rounds with no relaxation and no feasibility requirement. The geometric results of \cref{sec:spatialpc} apply to both.

\section{Numerical Verification of the Implementation}
\label{app:gradient-verification}
\Cref{prop:finite-spatialpc-apd} states that APD on the unrolled graph computes the exact gradient of \cref{eq:spatial-objective}; this appendix checks the implementation of that computation. The hand-written finite-round APD implementation was compared against PyTorch autograd \cite{paszke2019pytorch} in double precision for both task losses, both readout scalings, and ReLU and tanh activations. Across the eight combinations the gradient cosine is $1.0$ to displayed precision and the relative error is below $2.9\times10^{-16}$ (\cref{tab:gradient-verification}). Stationary APD at convergence reproduces the settled backpropagation gradient for both the cross-entropy/$H^{-1/2}$ and MSE/$H^{-1}$ protocols with relative error below $3.4\times10^{-16}$.

\begin{table}[htbp]
\centering
\caption{Double-precision verification of the finite-round SpatialPC/APD gradient against automatic differentiation.}
\label{tab:gradient-verification}
\small
\begin{tabular}{llllr}
\toprule
\textbf{Loss} & \textbf{Readout} & \textbf{Activation} & \textbf{Gradient cosine} $\uparrow$ & \textbf{Relative error} $\downarrow$ \\
\midrule
CE  & $H^{-1/2}$ & ReLU & 1.0000000000 & $2.83\times10^{-16}$ \\
CE  & $H^{-1/2}$ & tanh & 1.0000000000 & $1.54\times10^{-16}$ \\
CE  & $H^{-1}$   & ReLU & 1.0000000000 & $2.54\times10^{-16}$ \\
CE  & $H^{-1}$   & tanh & 1.0000000000 & $2.43\times10^{-16}$ \\
MSE & $H^{-1/2}$ & ReLU & 1.0000000000 & $2.71\times10^{-16}$ \\
MSE & $H^{-1/2}$ & tanh & 1.0000000000 & $1.48\times10^{-16}$ \\
MSE & $H^{-1}$   & ReLU & 1.0000000000 & $1.92\times10^{-16}$ \\
MSE & $H^{-1}$   & tanh & 1.0000000000 & $1.56\times10^{-16}$ \\
\bottomrule
\end{tabular}
\end{table}

\section{Secondary Experiments}
\label{app:secondary-experiments}
The three comparisons below are referenced from \cref{sec:controls}. The chain baselines share the width, optimizer, and epoch budget of SpatialPC and differ in topology.

\subsection{Chain control}
The finite-round Moore model is compared with a 256-state chain at the same state count, width, round budget, objective, and optimizer (\cref{tab:ablation-topology}). With $T=32<256$ no input information reaches the output of the chain (\cref{prop:forward-causality}), so its prediction is the constant readout of a zero state and its task gradient vanishes; the reported accuracies are the frequencies of class $0$ in the two test sets. The chain also changes the depth-dependent residual scaling of the implementation, so the comparison is a qualitative reference rather than an isolated topology intervention.

\begin{table}[htbp]
\caption{Chain control at matched $N=256$, $T=32$, $H=128$. With $T<N$ the chain output never receives input information, so accuracy equals the class-0 frequency and the task gradient is zero.}
\label{tab:ablation-topology}
\centering
\small
\begin{tabular}{@{} lll *{2}{r@{\,{\scriptsize$\pm$}\,}>{\scriptsize}l} @{}} 
\toprule
\textbf{Dataset} & \textbf{Method} & \textbf{Graph} & \multicolumn{2}{c}{\textbf{Test Acc. (\%)} $\uparrow$} & \multicolumn{2}{c}{\textbf{Task-grad cos. to settled} $\uparrow$} \\
\midrule
MNIST         & SpatialPC-v1 & Chain &  9.80 & 0.00 & 0.0000 & 0.0000 \\
              &              & Moore & 92.75 & 0.23 & 0.9283 & 0.0086 \\
\addlinespace
              & SpatialPC-v2 & Chain &  9.80 & 0.00 & 0.0000 & 0.0000 \\
              &              & Moore & 91.22 & 0.27 & 0.6467 & 0.0279 \\
\midrule
Fashion-MNIST & SpatialPC-v1 & Chain & 10.00 & 0.00 & 0.0000 & 0.0000 \\
              &              & Moore & 82.85 & 0.18 & 0.9745 & 0.0008 \\
\addlinespace
              & SpatialPC-v2 & Chain & 10.00 & 0.00 & 0.0000 & 0.0000 \\
              &              & Moore & 80.66 & 0.18 & 0.8795 & 0.0231 \\
\bottomrule
\end{tabular}
\end{table}

\subsection{Activation ablation}
\Cref{tab:ablation-activation} compares ReLU and tanh at the reference configuration. Tanh gives slightly lower accuracy and consistently higher gradient cosine for both protocols, consistent with a smaller Jacobian norm and hence a faster path tail.

\begin{table}[htbp]
\caption{Activation ablation at the reference configuration ($N=256$, $T=32$, $H=128$, $\lambda=100$, one epoch).}
\label{tab:ablation-activation}
\centering
\small
\begin{tabular}{@{} lll *{2}{r@{\,{\scriptsize$\pm$}\,}>{\scriptsize}l} @{}} 
\toprule
\textbf{Dataset} & \textbf{Method} & \textbf{Activation} & \multicolumn{2}{c}{\textbf{Test Acc. (\%)} $\uparrow$} & \multicolumn{2}{c}{\textbf{Task-grad cos. to settled} $\uparrow$} \\
\midrule
MNIST         & SpatialPC-v1 & ReLU & 92.75 & 0.23 & 0.9283 & 0.0086 \\
              &              & Tanh & 91.63 & 0.25 & 0.9628 & 0.0078 \\
\addlinespace
              & SpatialPC-v2 & ReLU & 91.22 & 0.27 & 0.6467 & 0.0279 \\
              &              & Tanh & 88.86 & 0.19 & 0.9539 & 0.0031 \\
\midrule
Fashion-MNIST & SpatialPC-v1 & ReLU & 82.85 & 0.18 & 0.9745 & 0.0008 \\
              &              & Tanh & 82.14 & 0.13 & 0.9843 & 0.0037 \\
\addlinespace
              & SpatialPC-v2 & ReLU & 80.66 & 0.18 & 0.8795 & 0.0231 \\
              &              & Tanh & 80.02 & 0.21 & 0.9648 & 0.0072 \\
\bottomrule
\end{tabular}
\end{table}

\subsection{External width sweep}
Depth-128 chain implementations of backpropagation, standard PC, $\mu$PC, and PC-ALM are compared with SpatialPC after one epoch over $H\in\{8,16,32,64,128\}$ (\cref{tab:exp1}). The Moore graph has $\tfrac32N-2$ edges against $N-1$ for the chain, so the two differ in edge count as well as topology.

\begin{table}[htbp]
\centering
\small
\setlength{\tabcolsep}{4pt}
\caption{One-epoch test accuracy (\%) at matched hidden width $H$. SpatialPC uses the $N=256$ Moore graph; the chain baselines use depth 128. Batch sizes are 32 for SpatialPC and 64 for the baselines. Bold marks the best entry in each row.}
\label{tab:exp1}
\begin{tabular}{lrcccccc}
\toprule
\textbf{Dataset} & \textbf{Width} & \textbf{BP} & $\bm{\mu}$\textbf{PC} & \textbf{PC} & \textbf{PC-ALM} & \textbf{SPC-v1} & \textbf{SPC-v2} \\
\midrule
MNIST  &   8 & 38.80 & 18.67 &  9.53 & 37.90 & \textbf{83.00} & 65.70 \\
       &  16 & 75.85 & 35.65 &  9.51 & 71.47 & \textbf{87.60} & 77.94 \\
       &  32 & 87.78 & 61.02 &  8.98 & 85.79 & \textbf{89.78} & 84.91 \\
       &  64 & \textbf{91.51} & 74.98 &  9.75 & 89.78 & 91.42 & 89.18 \\
       & 128 & \textbf{93.40} & 84.06 &  9.87 & 92.25 & 92.74 & 91.23 \\
\midrule
Fashion-MNIST &   8 & 47.56 & 24.48 & 11.41 & 43.44 & \textbf{74.88} & 64.77 \\
       &  16 & 73.22 & 45.45 &  8.15 & 72.02 & \textbf{78.08} & 71.48 \\
       &  32 & 78.63 & 64.72 &  8.58 & 77.43 & \textbf{79.86} & 75.98 \\
       &  64 & 81.23 & 72.29 & 11.09 & 80.08 & \textbf{81.60} & 78.95 \\
       & 128 & \textbf{83.39} & 76.24 &  9.25 & 82.32 & 82.85 & 80.66 \\
\bottomrule
\end{tabular}
\end{table}

\section{Algorithms} \label{app:algorithm}

\SetKwProg{Fn}{Function}{}{}
\SetKw{KwDownTo}{down to}
\SetKwFunction{LocalRound}{LocalRound}
\SetKwFunction{LocalPullback}{LocalPullback}
\SetKwFunction{Readout}{Readout}
\SetKwFunction{Partial}{Partial}
\SetKwFunction{Settled}{Settled}

\Cref{alg:moore-dag} constructs the Moore DAG of \cref{sec:moore}, \cref{alg:spatialpc-forward-step} gives the settled and finite-round forward computations of \cref{sec:forward}, and \cref{alg:spatialpc-training} gives one APD gradient evaluation on the $T$-round space--time graph (\cref{prop:finite-spatialpc-apd}).

\begin{algorithm}[H]
\caption{Moore-grid DAG construction used by SpatialPC. The Moore cycle is cut to define a topological order. Backbone edges follow that order; the cut seam and eligible interior unit-grid edges are retained as forward-oriented shortcuts.}
\label{alg:moore-dag}
\KwIn{grid side $m=2^k$, $k\ge1$}
\KwOut{DAG $G=(V,E)$ with $N=m^2$; input/root $v_{\rm in}=0$; output $O=\{N-1\}$; skip gains $s$}
\BlankLine
$w\leftarrow\texttt{LFL+F+LFL}$\;
\For{$i\leftarrow1$ \KwTo $k-1$}{
  rewrite $w$: $L\mapsto\texttt{-RF+LFL+FR-}$, $R\mapsto\texttt{+LF-RFR-FL+}$\;
}
$(p_j)_{j=0}^{N-1}\leftarrow$ points visited by a turtle reading $w$\;
$\pi(p_{(N/4+j)\bmod N})\leftarrow j$ for $j=0,\ldots,N-1$ \tcp*{choose cut and topological order}
\BlankLine
\ForEach{unit cell $c=[x,x+1]\times[y,y+1]$}{
  $n_c\leftarrow$ number of Moore-cycle edges to the left of $c$ in its row\;
  $\mathrm{interior}(c)\leftarrow(n_c\text{ is odd})$\;
}
$E\leftarrow\{(j,j+1):j=0,\ldots,N-2\}$ \tcp*{Hamiltonian backbone}
$E\leftarrow E\cup\{(0,N-1)\}$ \tcp*{directed cut seam used by the implementation}
\ForEach{unit-grid edge $\{p,q\}$ separating two interior cells}{
  $E\leftarrow E\cup\{(\min(\pi(p),\pi(q)),\max(\pi(p),\pi(q)))\}$\;
}
$s_{uv}\leftarrow[d_{\rm in}(v)d_{\rm out}(u)]^{-1/2}$ for $(u,v)\in E$\;
\Return{$G,v_{\rm in},O,s$}\;
\end{algorithm}

\clearpage
\begin{algorithm}[H]
\caption{SpatialPC forward computation. \texttt{Partial} performs $T$ synchronous local rounds. \texttt{Settled} evaluates the same DAG exactly in topological order.}
\label{alg:spatialpc-forward-step}
\KwIn{DAG $G=(V,E)$ in topological order; input $x\in\mathbb R^{d_x}$; input node $v_{\rm in}$; outputs $O$; parameters $\theta=(\{W_{uv}\},W_{\rm in},W_{\rm out})$; gains $s_{uv},g_{uv}$; nonlinearity $\sigma$; rounds $T$; readout scale $\kappa_H$; mode $p\in\{\mathrm{settled},\mathrm{partial}\}$}
\KwOut{prediction $\hat y$ and states $h$}
\BlankLine
\eIf{$p=\mathrm{settled}$}{
  $(\hat y,h)\leftarrow$ \Settled{$x$}\;
}{
  $(\hat y,h)\leftarrow$ \Partial{$x,T$}\;
}
\Return{$\hat y,h$}\;
\BlankLine
\Fn{\Settled{$x$}}{
  $h^\star\leftarrow\mathbf0\in\R^{N\times H}$\;
  $h^\star_{v_{\rm in}}\leftarrow W_{\rm in}x/\sqrt{d_x}$\;
  \For{$v$ in topological order after $v_{\rm in}$}{
    $h^\star_v\leftarrow\sum_{u\in\pa(v)}[s_{uv}h^\star_u+g_{uv}W_{uv}\sigma(h^\star_u)]$\;
  }
  \Return{\Readout{$h^\star$},$h^\star$}\;
}
\BlankLine
\Fn{\Partial{$x,T$}}{
  $h^0\leftarrow\mathbf0\in\R^{N\times H}$\;
  \For{$t\leftarrow0$ \KwTo $T-1$}{
    $h^{t+1}\leftarrow$ \LocalRound{$h^t,x$}\;
  }
  \Return{\Readout{$h^T$},$h^T$}\;
}
\BlankLine
\Fn{\LocalRound{$h,x$}}{
  \ForPar{$(u,v)\in E$}{
    $\mu_{uv}\leftarrow s_{uv}h_u+g_{uv}W_{uv}\sigma(h_u)$\;
  }
  $h'\leftarrow\mathbf0\in\R^{N\times H}$\;
  $h'_{v_{\rm in}}\leftarrow W_{\rm in}x/\sqrt{d_x}$\;
  $h'_v\leftarrow\sum_{u\in\pa(v)}\mu_{uv}$ for $v\ne v_{\rm in}$\;
  \Return{$h'$}\;
}
\BlankLine
\Fn{\Readout{$h$}}{
  \Return{$\kappa_H W_{\rm out}|O|^{-1/2}\sum_{o\in O}h_o$}\;
}
\end{algorithm}

\begin{algorithm}[H]
\caption{SpatialPC training by Adjoint Prediction Dynamics (APD) on the $T$-round space--time graph.}
\label{alg:spatialpc-training}
\KwIn{sample $(x,y)$; $G,\theta,s,g,\sigma,T,\kappa_H$ as in \cref{alg:spatialpc-forward-step}; feasibility weight $\lambda$}
\KwOut{$\nabla_\theta\mathcal J_T$}
\BlankLine
$h^0\leftarrow\mathbf0$\;
\For{$t\leftarrow0$ \KwTo $T-1$}{
  $h^{t+1}\leftarrow$ \LocalRound{$h^t,x$}\;
}
$\hat y\leftarrow$ \Readout{$h^T$}\;
$r^T\leftarrow h^T-{}$\LocalRound{$h^T,x$}\;
\BlankLine
\tcp{terminal APD forcing on the unrolled graph}
$a^T\leftarrow\nabla_{h^T}\ell+\frac{\lambda}{NH}\big(r^T-{}$\LocalPullback{$h^T,r^T$}$\big)$\;
$\nabla W_{\rm out}\leftarrow\kappa_H\frac{\partial\ell}{\partial\hat y}\big(|O|^{-1/2}\sum_{o\in O}h_o^T\big)^\top$\;
$\nabla W_{uv}\leftarrow-\frac{\lambda}{NH}g_{uv}r_v^T\sigma(h_u^T)^\top$ for $(u,v)\in E$\;
$\nabla W_{\rm in}\leftarrow-\frac{\lambda}{NH}r^T_{v_{\rm in}}x^\top/\sqrt{d_x}$\;
\BlankLine
\tcp{APD: each node predicts its adjoint from local downstream credit}
\For{$t\leftarrow T-1$ \KwDownTo $0$}{
  \ForPar{$(u,v)\in E$}{
    $\nabla W_{uv}\leftarrow\nabla W_{uv}+g_{uv}a_v^{t+1}\sigma(h_u^t)^\top$\;
  }
  $\nabla W_{\rm in}\leftarrow\nabla W_{\rm in}+a_{v_{\rm in}}^{t+1}x^\top/\sqrt{d_x}$\;
  $a^t\leftarrow$ \LocalPullback{$h^t,a^{t+1}$}\;
}
\Return{$\nabla_\theta\mathcal J_T$}\;
\BlankLine
\Fn{\LocalPullback{$h,c$}}{
  \ForPar{$(u,v)\in E$}{
    $\nu_{vu}\leftarrow s_{uv}c_v+g_{uv}\sigma'(h_u)\odot W_{uv}^\top c_v$\;
  }
  $c'_u\leftarrow\sum_{v\in\ch(u)}\nu_{vu}$ for $u\in V$\;
  \Return{$c'$}\;
}
\end{algorithm}

\end{document}